\def\eqref#1{equation~\ref{#1}}
\def\1{\bm{1}}
\DeclareMathAlphabet{\mathsfit}{\encodingdefault}{\sfdefault}{m}{sl}
\SetMathAlphabet{\mathsfit}{bold}{\encodingdefault}{\sfdefault}{bx}{n}
\definecolor{mygray}{gray}{.9}
\definecolor{mypink}{rgb}{.99,.91,.95}
\definecolor{mygreen}{rgb}{.52,.73,.30}
\definecolor{myblue}{rgb}{.39,.58,.93}
\definecolor{mycyan}{cmyk}{.3,0,0,0}
\definecolor{mylakeblue}{rgb}{.0,.749,1.}
\definecolor{mypurple}{rgb}{.729,.333,.827}
\definecolor{myclassicblue2}{rgb}{.117,.565,1.}
\definecolor{myclassicblue}{rgb}{1.0,.3176,.3176}
\definecolor{myorange}{rgb}{1.0,.647,0}
\definecolor{kanki}{rgb}{.941,.902,.549}
\definecolor{mybrown}{rgb}{0.7176,0.4313,0}
\definecolor{mybrown2}{rgb}{0.57254,0.3490,0.0078}
\definecolor{myblue3}{rgb}{0.1019,0.1372,0.4941}
\definecolor{manifoldpurple}{rgb}{0.8509803921568627, 0.5372549019607843, 0.8117647058823529}
\definecolor{manifoldblue}{rgb}{0.615686274509804, 0.7647058823529411, 0.9019607843137255}
\definecolor{manifoldgreen}{rgb}{.6627450980392157, 0.8196078431372549, 0.5568627450980392}
\newtheorem{claim}{Claim}
\newtheorem*{proof}{\textit{Proof.}}
\newtheorem*{assumption}{{Assumption}}
\renewcommand{\qedsymbol}{\rule{0.7em}{0.7em}}
\title{GeneOH Diffusion: Towards Generalizable Hand-Object Interaction Denoising via Denoising Diffusion}
\author{
Xueyi Liu\textcolor{gray}{$^{1,3}$} ~~~~ Li Yi\textcolor{gray}{$^{1,2,3}$}\\
\textcolor{gray}{$^{1}$}Tsinghua University~~
\textcolor{gray}{$^{2}$}Shanghai AI Laboratory~~
\textcolor{gray}{$^{3}$}Shanghai Qi Zhi Institute~~\\
~~~~~~~~~~~~~~\texttt{Project website:~\href{https://meowuu7.github.io/GeneOH-Diffusion/}{meowuu7.github.io/GeneOH-Diffusion}}
}
\begin{document}

\newcommand{\vpara}[1]{\vspace{0.07in}\noindent\textbf{#1}\xspace}
\newcommand{\todo}[1]{\textcolor{red}{todo: #1}}
\newcommand{\zz}[1]{\textcolor{mygreen}{(#1)}}
\newcommand{\summary}[1]{\textcolor{mypurple}{(#1)}}
\newcommand{\subsummary}[1]{\textcolor{myclassicblue}{[#1]}}

\newcommand{\bred}[1]{\textcolor{red}{\textbf{#1}}}
\newcommand{\iblue}[1]{\textcolor{blue}{\textit{#1}}}
\newcommand{\model}[0]{Ours~}
\newcommand{\rep}[0]{GeneOH~}
\newcommand{\repns}[0]{GeneOH}
\newcommand{\nnarticat}[0]{3~}
\newcommand{\nnrigidcat}[0]{6~}
\newcommand{\nntestsets}[0]{4~}
\newcommand{\nntestsetsabc}[0]{four~}

\maketitle




\begin{abstract}
In this work, we tackle the challenging problem of denoising hand-object interactions (HOI). Given an erroneous interaction sequence, the objective is to refine the incorrect hand trajectory to remove interaction artifacts for a perceptually realistic sequence.  This challenge involves intricate interaction noise, including unnatural hand poses and incorrect hand-object relations, alongside the necessity for robust generalization to new interactions and diverse noise patterns. We tackle those challenges through a novel approach, \textbf{GeneOH Diffusion}, incorporating two key designs: an innovative contact-centric HOI representation named \rep and a new domain-generalizable denoising scheme. The contact-centric representation \rep informatively parameterizes the HOI process, facilitating enhanced generalization across various HOI scenarios. The new denoising scheme consists of a canonical denoising model trained to project noisy data samples from a whitened noise space to a clean data manifold and a ``denoising via diffusion'' strategy which can handle input trajectories with various noise patterns by first diffusing them to align with the whitened noise space and cleaning via the canonical denoiser. Extensive experiments on four benchmarks with significant domain variations demonstrate the superior effectiveness of our method. GeneOH Diffusion also shows promise for various downstream applications. 
\end{abstract}
\begin{figure}[htbp]
  \includegraphics[width=\textwidth]{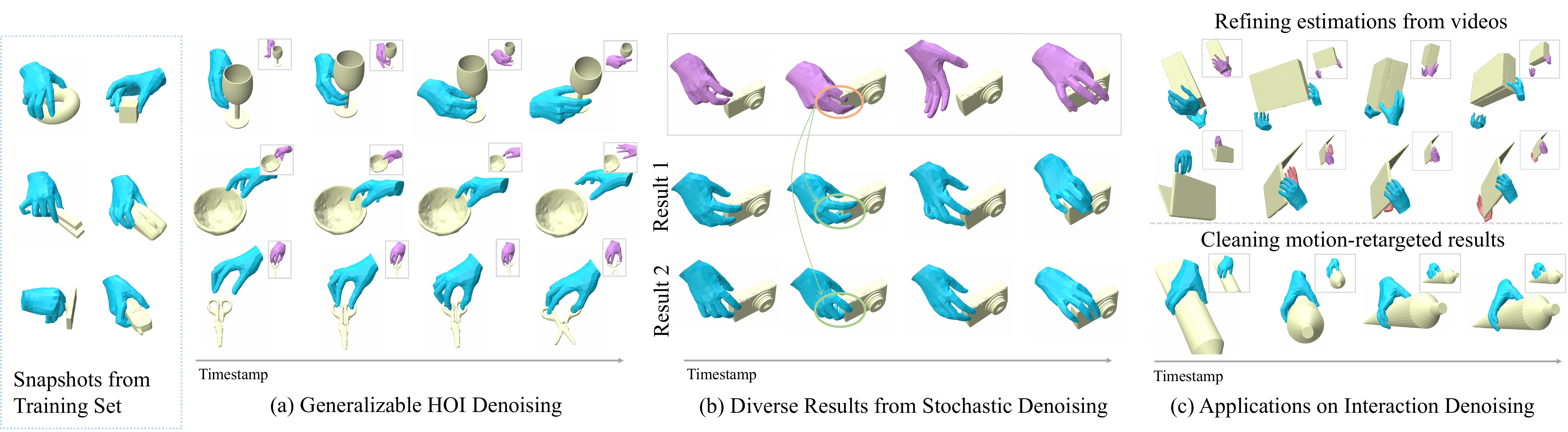}
  \vspace{-20pt}
  \caption{ 
  Trained only on limited data, \href{https://meowuu7.github.io/GeneOH-Diffusion/}{\textbf{\rep Diffusion}} can clean novel noisy interactions with new objects, hand motions, and unseen noise patterns (\textit{Fig. (a)}), 
 produces diverse refined trajectories with discrete manipulation modes (\textit{Fig. (b)}), and is a practical tool for many applications (\textit{Fig. (c)}). 
  }
  \vspace{-1pt}
  \label{fig_intro_teaser}
\end{figure}
\section{Introduction}
Interacting with objects is an essential part of our daily lives, and accurately tracking hands during these interactions has become crucial for various applications, such as gaming, virtual and augmented reality, robotics, and human-machine interaction. 
Yet, this task is highly complex and ill-posed due to numerous factors like intricate dynamics involved and hand-object occlusions. Despite best efforts, existing tracking algorithms often struggle with producing plausible and realistic results. 

To better cater to the requirements of downstream tasks, noisy tracking results usually need to be refined. Given a hand-object interaction (HOI) sequence with errors, the HOI denoising aims to produce a natural interaction sequence free of artifacts such as penetrations. In this work, we assume the object poses are tracked accurately and focus on refining the hand trajectory following~\citep{zhou2022toch,grady2021contactopt,zhou2021stgae,zhang2021manipnet}. This setting is important with many practical demands in applications such as cleaning synthesized motions~\citep{tendulkar2023flex,huang2023diffusion,ghosh2023imos,wu2022saga}, refining motion-retargeted trajectories~\citep{hecker2008real,tak2005physically,aberman2019learning}, and virtual object manipulations~\citep{oh2019virtual,kato2000virtual,shaer2010tangible}. Early approaches relied on manually designed priors~\citep{dewaele2004hand,hackenberg2011lightweight}, which, however, proved inadequate in handling intricate noise.  More recent endeavors have shifted towards learning denoising priors from data~\citep{zhou2022toch,zhou2021stgae,grady2021contactopt}, yet the existing designs still fall short of providing a satisfactory solution.

Leveraging data priors for HOI denoising is challenged by several difficulties. First, the interaction noise is highly complex, covering unnatural hand poses, erroneous hand-object spatial relations, and inconsistent hand-object temporal relations. Second, hand movements, hand-object relations, and the noise pattern may vary dramatically across different HOI tracks. For instance, the noise pattern exhibited in hand trajectories estimated from videos  differs markedly 
from that resulted from inaccurate capturing or annotations. A denoising model is often confronted with
such out-of-domain data and is expected to handle them adeptly. However, such a distribution shift poses a substantial challenge for data-driven models. Lacking an effective solution, prior works always cannot clean such complex interaction noise or can hardly generalize to unseen erroneous interactions. 
We propose \textbf{\rep Diffusion}, a powerful denoising method with strong generalizability and practical applicability (see Figure~\ref{fig_intro_teaser}), to tackle the above difficulties. 
Our method resolves the challenges around two key ideas: 1) designing an effective HOI representation that can both informatively parameterize the interaction and facilitate the generalization by encoding and canonicalizing vital HOI information in a coordinate system induced by the interaction region; 2) learning a canonical denoiser that projects noisy data from a whitened noise space to the data manifold for domain-generalizable denoising. 
A satisfactory representation that parameterizes the high-dimensional HOI process for denoising should be able to represent the interaction process faithfully, highlight noises, and align different HOI tracks well to enhance generalization capabilities
Therefore, we introduce \textbf{\repns}, \textbf{Gene}ralized contact-centric \textbf{H}and-\textbf{O}bject spatial and temporal relations. \rep encodes the interaction informatively, encompassing the hand trajectory, hand-object spatial relations, and hand-object temporal relations. Furthermore, it adopts a contact-centric perspective and incorporates an innovative canonicalization strategy. This approach effectively reduces disparities between different sequences, promoting generalization across diverse HOI scenarios.
To enhance the denoising model's generalization ability to novel noise distributions, our second effort centers on the denoising scheme side. We propose to learn a canonical denoising model that describes the mapping from a whitened noise space to the data manifold. The whitened noise space contains noisy data diffused from clean data in the training dataset via Gaussian noise at various noise scales. With the canonical denoiser, we then leverage a ``denoising via diffusion'' strategy to handle input trajectories with various noise patterns in a domain-generalizable manner. It first aligns the input to the whitened noise space by diffusing it via Gaussian noise. Subsequently, the diffused sample is cleaned by the canonical denoising model.  To strike a balance between the denoising model's generalization capability and the faithfulness of the denoised trajectory, we introduce a hyper-parameter that decides the scale of noise added during the diffusion process, ensuring the diffused sample remains faithful to the original input. Furthermore, instead of learning to clean the interaction noise through a single stage, we devise a progressive denoising strategy where the input is sequentially refined via three stages, each of which concentrates on cleaning one specific component of \rep. 

We conduct extensive experiments on three datasets, GRAB~\citep{taheri2020grab}, a high-quality MoCap dataset, HOI4D~\citep{liu2022hoi4d}, a real-world interaction dataset with noise resulting from inaccurate depth sensing and imprecise vision estimations, and ARCTIC~\citep{fan2023arctic}, a dataset featuring dynamic motions and changing contacts, showing the remarkable effectiveness and generalizability of our method. When only trained on GRAB, our denoiser can generalize to HOI4D with novel and difficult noise patterns and ARCTIC with challenging interactions, surpassing prior arts by a significant margin, as demonstrated by the comprehensive quantitative and qualitative comparisons. We will release our code to support future research. In summary, our contributions include:

\vspace{-5pt}
\begin{itemize} 

\item An HOI denoising framework with powerful spatial and temporal denoising capability and unprecedented generalizability to novel HOI scenarios;

\item An HOI representation named \rep that can faithfully capture the HOI process, highlight unnatural artifacts, and align HOI tracks across different objects and interactions;

\item 
An effective and domain-generalizable denoising method that can both generalize across different noise patterns and clean complex noise through a progressive denoising strategy. 


\end{itemize}
\section{Related Works}





Hand-object interaction is an important topic for understanding human behaviors.
Prior works towards this direction mainly focus on data collection~\citep{taheri2020grab,hampali2020honnotate,guzov2022visually,fan2023arctic,kwon2021h2o}, reconstruction~\citep{tiwari2022pose,xie2022chore,qu2023novel,ye2023diffusion}, interaction generation~\citep{wu2022saga,tendulkar2023flex,zhang2022wanderings,ghosh2023imos,li2023task}, and motion refinement~\citep{zhou2022toch,grady2021contactopt,zhou2021stgae,nunez2022comparison}. The HOI denoising task wishes to remove unnatural phenomena from HOI sequences with interaction noise. 
In real application scenarios, a denoising model would frequently encounter out-of-domain interactions, and is expected to generalize to them. 
This problem is then related to domain generalization, a general machine learning topic~\citep{sicilia2023domain,segu2023batch,wang2023sharpness,zhang2023federated,jiang2022transferability,wang2022generalizing,blanchard2011generalizing,muandet2013domain,dou2019domain}, where a wide range of solutions have been proposed in the literature. 
Among them, leveraging domain invariance to solve the problem is a promising solution. Our work is related to this kind of approach, at a high level. However, what is the domain invariant information for the HOI denoising task, and how to encourage the model to leverage such information for denoising remains very tricky. We focus on designing invariant representations and learning a canonical denoiser for domain-generalizable denoising. 
Moreover, we are also related to intriguing works that wish to leverage data priors to solve the inverse problem~\citep{song2023solving,mardani2023variational,tumanyan2023plug,meng2021sdedit,chung2022improving}. For our task, we need to answer some fundamental questions regarding what are generalizable denoising priors, how to learn them from data, and how to leverage the prior to refine noisy input from different distributions. We'll illustrate our solution in the method section. 

\section{Hand-Object Interaction Denoising via Denoising Diffusion}



Given an erroneous hand-object interaction sequence with  $K$ frames $(\hat{\mathcal{H}}, \mathbf{O}) = \{ (\hat{\mathbf{H}}_k, \mathbf{O}_k) \}_{k=1}^K$, we assume the object pose trajectory $\{ \mathbf{O}_k \}_{k=1}^K$ is accurate following~\citep{zhou2022toch,zhou2021stgae,grady2021contactopt,zhang2021manipnet} and aim at cleaning the noisy hand trajectory $\{ \hat{\mathbf{H}}_k \}_{k=1}^K$. This setting is of considerable importance, given its practical applicability in various domains~\citep{tendulkar2023flex,ghosh2023imos,li2023task,wu2022saga,hecker2008real,oh2019virtual,shaer2010tangible}. The cleaned hand trajectory should be free of unnatural hand poses, incorrect spatial penetrations, and inconsistent temporal hand-object relations. The hand trajectory should present visually consistent motions and adequate contact with the object to support manipulation. 
The problem is ill-posed in nature owing to the difficulties posed by complex interaction noise and the substantial domain gap across different interactions resulting from new objects, hand movements, and unseen noise patterns. 

We resolve the above difficulties by 1) designing a novel HOI representation that parameterizes the HOI process faithfully and can both simplify the distribution of complex HOI and foster the model generalization across different interactions (Section~\ref{sec:geneoh}) and 2) devising an effective denoising scheme that can both clean complex noises through a progressive denoising strategy and generalize across different input noise patterns (Section~\ref{sec:diffusion}). 

\subsection{\rep: Generalized Contact-Centric Hand-Object Spatial and Temporal Relations} 
\label{sec:geneoh}

Designing an effective and generalizable HOI denoising model requires a serious effort in the representation design. It involves striking a balance between expressive modeling of the interaction with objects and supporting the model's generalization to new objects and interactions. The ideal HOI representation should accurately capture the interaction process, highlight any unusual phenomena like spatial penetrations, and facilitate alignment across diverse interaction sequences.


We introduce \rep to achieve this. 
It integrates the hand trajectory, hand-object spatial relations, and hand-object temporal relations to represent the HOI process faithfully. An effective normalization strategy is further introduced to enhance alignment across diverse interactions. The hand trajectory and the object trajectory are compactly represented as the trajectory of hand keypoints, denoted as $\mathcal{J} = \{  \mathbf{J}_k\}_{k=1}^K$, and the interaction region sequence: $\mathcal{P} = \{ 
\mathbf{P}_k \}_{k=1}^K$, in a contact-aware manner.  We will then detail the design of \rep. 

\begin{wrapfigure}[13]{r}{0.45\textwidth} 
\vspace{-3ex}
\includegraphics[width = 0.45\textwidth]{./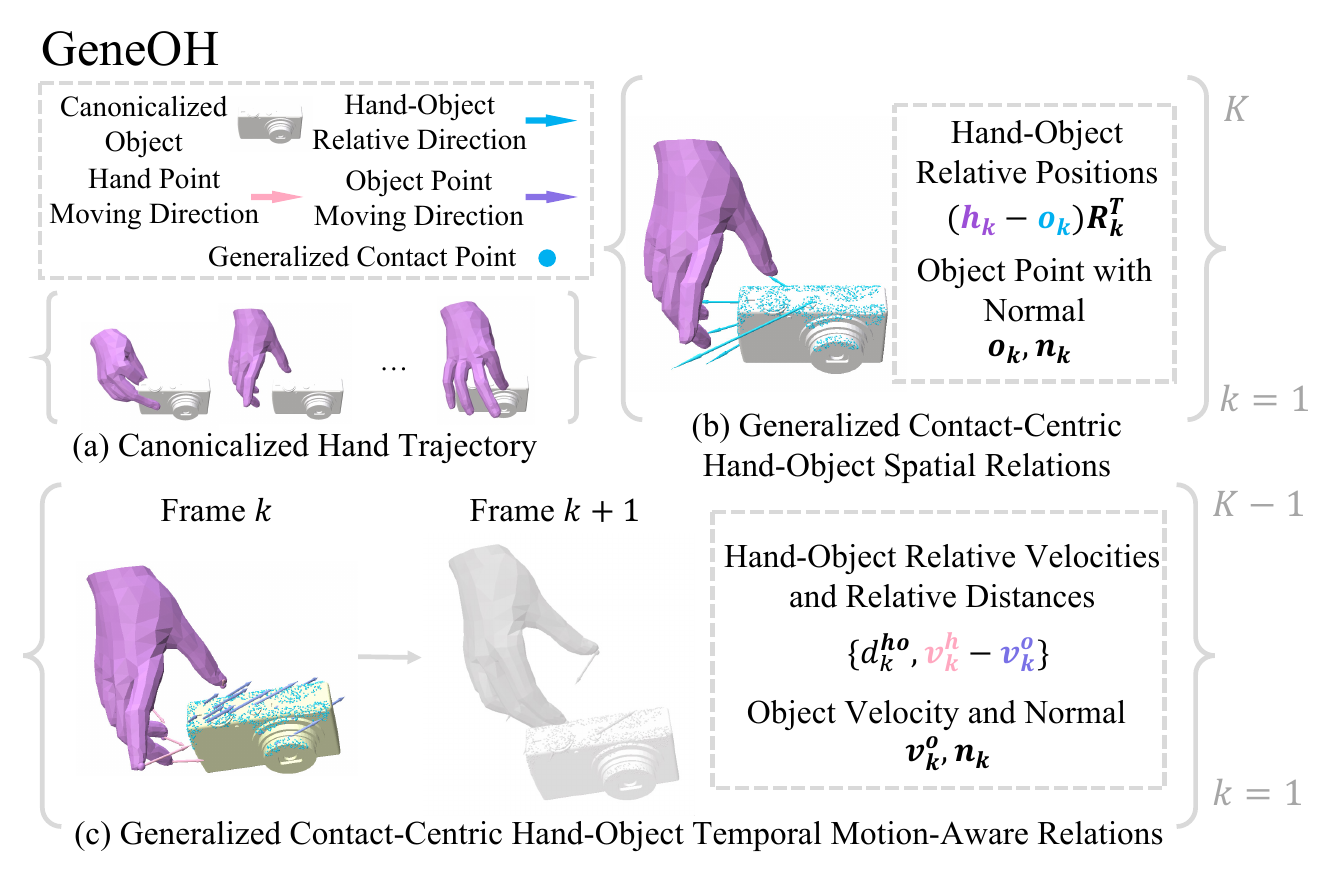}
\vspace{-13pt}
    \caption{Three components of \textbf{GeneOH}.
    }
  \label{fig_method_geneoh}
\end{wrapfigure}

\noindent\textbf{Generalized contact points.} 
The interaction region is established based on points sampled from the object surface close to the hand trajectory, referred to as ``generalized contact points''. They are $N_o$ points (denoted as $\mathbf{P}\in \mathbb{R}^{N_o\times 3}$) sampled from object surface points, whose distance to the hand trajectory does not exceed a threshold value of $r_c$ (set to 5mm). The sequence of these points across all frames is represented by $\mathcal{P} = \{ \mathbf{P}_k \}_{k=1}^K$, where $\mathbf{P}_k$ denotes points at frame $k$. Each $\mathbf{P}_k$ is associated with a 6D pose, consisting of the object's orientation (or the orientation of the first part for articulated objects), denoted as $\mathbf{R}_k$, and the center of $\mathbf{P}_k$, denoted as $\mathbf{t}_k$. 

\noindent\textbf{Canonicalized hand trajectories.} 
We include hand trajectories in our representation to effectively model hand movements. Specifically, we leverage hand keypoints to model the hand, as they offer a compact and expressive representation. We represent the hand trajectory as the sequence of 21 hand keypoints, denoted as $\mathcal{J} = \{ \mathbf{J}_k \in \mathbb{R}^{N_h\times 3} \}_{k=1}^K$, where $N_h = 21$. 
We further canonicalize the hand trajectory $\mathcal{J}$ using the poses of the generalized contact points to eliminate the influence of object poses, resulting in the canonicalized hand trajectory in \rep: $\bar{\mathcal{J}} = \{\bar{\mathbf{J}}_k = (\mathbf{J}_k - \mathbf{t}_k) \mathbf{R}_k^T\}_{k=1}^K$. 
\noindent\textbf{Generalized contact-centric hand-object spatial relations.} 
We further introduce a hand-object spatial representation in \repns. The representation is based on hand keypoints and generalized contact points to inherit their merits. The spatial relation centered at each generalized contact point  $\mathbf{o}_k \in \mathbf{P}_k$ comprises the relative offset from $\mathbf{o}_k$ to each hand keypoint $\mathbf{h}_k\in \mathbf{J}_k$, \emph{i.e.,} $\{ \mathbf{h}_k - \mathbf{o}_k | \mathbf{h}_k \in \mathbf{J}_k \}$, the object point normal $\mathbf{n}_k$, and the object point position $\mathbf{o}_k$. These statistics are subsequently canonicalized using the 6D pose of the generalized contact points to encourage cross-interaction alignment. Formally, the spatial representation centered at $\mathbf{o}_k$ is defined as: $\mathbf{s}_k^\mathbf{o} = ((\mathbf{o}_k - \mathbf{t}_k)\mathbf{R}_k^T, \mathbf{n}_k\mathbf{R}_k^T, \{ (\mathbf{h}_k - \mathbf{o}_k)\mathbf{R}_k^T | \mathbf{h}_k \in \mathbf{J}_k\} )$. The spatial relation $\mathcal{S}$ is composed of $\mathbf{s}_k^\mathbf{o}$ at each generalized contact point: $\mathcal{S} = \{ \{ \mathbf{s}_k^\mathbf{o} | \mathbf{o}_k \in \mathbf{P}_k\} \}_{k=1}^K$. By encoding object normals and hand-object relative offsets, $\mathcal{S}$ can reveal unnatural hand-object spatial relations such as penetrations. 

\noindent\textbf{Generalized contact-centric hand-object temporal relations.} 
Considering the limitations of the above two representations in revealing temporal errors such as incorrect manipulations resulting from inconsistent hand-object motions, we further introduce hand-object temporal relations to parameterize the HOI temporal information explicitly. We again take hand keypoints $\mathbf{J}$ to represent hand shape and generalized contact points $\mathbf{P}$ for the object shape to take advantage of their good ability in supporting generalization. The temporal relations encode the relative velocity between each hand point $\mathbf{o}_k$ and each hand keypoint $\mathbf{h}_k$ at frame $k$ ($\mathbf{v}_k^{\mathbf{h} \mathbf{o}} = \mathbf{v}_k^{\mathbf{h}} - \mathbf{v}_k^{\mathbf{o}}$), the Euclidean distance between each pair of points ($d_k^{\mathbf{ho}} = \Vert \mathbf{h}_k - \mathbf{o}_k \Vert_2$), and the object velocity $\mathbf{v}_k^o$ in the representation, as illustrated in Figure~\ref{fig_method_geneoh}.  We further introduce two statistics by using the object point normal to canonicalize  $\mathbf{v}_k^{\mathbf{ho}}$, resulting in two normalized statistics: $\mathbf{v}_{k,\perp}^{\mathbf{h} \mathbf{o}}$, orthogonal to the object tangent plane, and $\mathbf{v}_{k,\parallel}^{\mathbf{h} \mathbf{o}}$, lying in the object's tangent plane, and encoding them with hand-object relative distances: $ e_{k, \perp}^{\mathbf{ho}} = e^{-k\cdot d_{k}^{\mathbf{ho}}} k_b\Vert \mathbf{v}_{k,\perp}^{\mathbf{h} \mathbf{o}}  \Vert_2$ and $e_{k, \parallel}^{\mathbf{ho}} = e^{-k\cdot d_{k}^{\mathbf{ho}}} k_a\Vert \mathbf{v}_{k,\parallel}^{\mathbf{h} \mathbf{o}} \Vert_2$. 
Here, $k$, $k_a$, and $k_b$ are positive hyperparameters, and the term $e^{-k\cdot d_{k}^{\mathbf{ho}}}$ is negatively related to the distance between the hand and object points. This canonicalization and encoding strategy aims to encourage the model to learn different denoising strategies for the two types of relative velocities, enhance cross-interaction generalization by factoring out object poses, and emphasize the relative movement between very close hand-object point pairs.
The temporal representation $\mathcal{T}$  is defined by combining the above statistics of each hand-object point pair across all frames together:
\begin{equation}
    \mathcal{T} = \{ \{  \mathbf{v}_k^\mathbf{o}, \{ d_k^{\mathbf{ho}},  \mathbf{v}_k^\mathbf{ho}, e_{k, \parallel}^{\mathbf{ho}},  e_{k, \perp}^{\mathbf{ho}}  \vert \mathbf{h}_k \in \mathbf{J}_k \} \} \vert \mathbf{o}_k \in \mathbf{P}_k  \}_{k=1}^{K-1}. \label{eq_temporal_relations}
\end{equation}
It reveals temporal errors by encoding object velocities, hand-object distances and relative velocities. 

\noindent \textbf{The \rep representation.} The overall representation, \repns, comprises the above three components, as defined formally:  $\text{\repns} = \{ \bar{\mathcal{J}}, \mathcal{S}, \mathcal{T} \}$. Figure~\ref{fig_method_geneoh} illustrates the design. It faithfully captures the interaction process, can reveal noise by encoding corresponding statistics, and benefits the generalization by employing carefully designed canonicalization strategies. Inspecting back into previous works, TOCH~\citep{zhou2022toch} does not explicitly parameterize the hand-object temporal relations or hand shapes and does not carefully consider the spatial canonicalization to facilitate the generalization, which limits its denoising capability and may lead to the loss of high-frequency hand pose details.
ManipNet~\citep{zhang2021manipnet} does not encode temporal relations and does not incorporate contact-centric canonicalization, rendering it inadequate for capturing the interaction process and less effective for generalization purposes.

\begin{figure*}[tbp]
  \centering
  \includegraphics[width=1.0\textwidth]{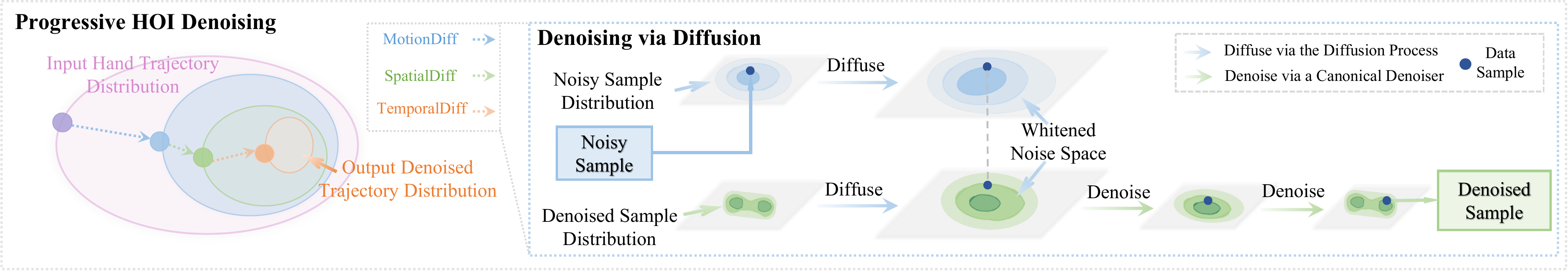}
  \vspace{-20pt}
  \caption{ 
  The \textbf{progressive HOI denoosing} gradually cleans the input noisy trajectory through three stages. Each stage concentrates on refining the trajectory by denoising a specific part of \rep via a \textit{canonical denoiser} through the \textit{``denoising via diffusion''} strategy.
  }
  \vspace{-17pt}
  \label{fig_method_pipeline}
\end{figure*}

\subsection{
\rep Diffusion: Progressive HOI Denoising via Denoising Diffusion
} 
\label{sec:diffusion}
While \rep excels in encoding the interaction process faithfully, highlighting errors to facilitate denoising, and reducing the disparities among various interaction sequences, designing an effective denoising model is still challenged by complex interaction noise, even from a distribution unseen during training. Previous methods typically employ pattern-specific denoising models trained to map noisy data restricted to certain patterns to the clean data manifold~\citep{zhou2022toch,zhou2021stgae}. However, these methods are susceptible to overfitting, resulting in conceptually incorrect results when faced with  interactions with unseen noise patterns, as evidenced in our experiments. 

\begin{wrapfigure}[12]{L}{0.5\textwidth}
\vspace{-20pt}
\begin{minipage}{0.5\textwidth}
  \begin{algorithm}[H]
    \small 
\caption{\textbf{Denoising via Diffusion}}
\label{algo_denoising_via_diffusion}
\footnotesize
    \begin{algorithmic}[1]
        \Require
            forward diffusion function $\text{Diffuse}(\cdot, t)$, the denoising model $\text{denoise}(\cdot, t)$, input noisy point $\hat{x}$, diffusion steps $t_\text{diff}$. 
        \Ensure
            denoised data $x$. 



        \Function{Denoise}{$\tilde{x}^{t_{\text{diff}}}, t_\text{diff}$}

            \For{$t$ from $t_\text{diff}$ to 1}
                \State $\tilde{x}^{t-1} \sim \text{denoise}(\tilde{x}^t, t)$
            \EndFor
            \\
            \Return $\tilde{x}^0$
        \EndFunction

        \State $\tilde{x} \leftarrow$\text{Diffuse}($\hat{x}, t_\text{diff}$)
        \\
        \Return $x \leftarrow$~\Call{Denoise}{$\tilde{x}, t_\text{diff}$} 
    \end{algorithmic}
  \end{algorithm}

\end{minipage}
\end{wrapfigure}

To ease the challenge posed by novel interaction noise,  we propose a new denoising paradigm that learns a canonical denoising model and leverages it for domain-generalizable denoising. It describes the mapping from noisy data at various noise scales from a whitened noise space to the data manifold. The whitened noise space is populated with noisy data samples diffused from the clean data via a \textit{diffusion process} which gradually adds Gaussian noise to the data according to a variance schedule, a similar flavor to the forward diffusion process in diffusion-based generative models~\citep{song2020score,ho2020denoising,rombach2022high,dhariwal2021diffusion}. With the canonical denoiser, we then leverage a ``denoising via diffusion'' strategy to handle input trajectories with various noise patterns in a generalizable manner. It first diffuses the input trajectory $\hat{x}$ via the diffusion process to another sample $\tilde{x}$ that resides closer to the whitened noise space. Then the model projects the diffused sample $\tilde{x}$ to the data manifold. To balance the generalization ability of the denoising and the fidelity of the denoised result to the input, the diffused $\tilde{x}$ needs to be faithful to the input $\hat{x}$. We then introduce a diffusion timestep $t_{\text{diff}}$ that decides how many diffusion steps are added. The process is visually depicted in the right part of Figure~\ref{fig_method_pipeline}. Details are outlined in Algorithm~\ref{algo_denoising_via_diffusion}. 
We also implement the denoising model's function and the training as those of the score functions in diffusion-based generative models. It is a multi-step stochastic denoiser that eliminates the noise of the input gradually to zero step-by-step. This way the denoiser can deal with noise at different scales flexibly and can give multiple solutions for the ill-posed ambiguous denoising problem. 
Based on the domain-generalizable denoising strategy, designing a single data-driven model to clean heterogeneous interaction noise in one stage is still not feasible. The interaction noise contains various kinds of noise at ununiform scales stemming from different reasons. Thus the corresponding noise-to-data mapping is very high dimensional and is very challenging to learn from limited data. A promising solution to tackle the complexity is taking a progressive approach and learning multiple specialists, each concentrating on cleaning a specific type of noisy information.
However, the multi-stage formulation brings new difficulties. 
It necessitates careful consideration of the information to be cleaned at each stage to prevent the current stage from compromising the naturalness achieved in previous stages.
Fortunately, our design of the \rep representation facilitates a solution to this issue. HOI information can be represented into three relatively homogeneous parts: $\bar{\mathcal{J}}$, $\mathcal{S}$, and $\mathcal{T}$. Furthermore, their relations ensure the sequential refinement of the hand trajectory by denoising its $\bar{\mathcal{J}}$, $\mathcal{S}$, and $\mathcal{T}$ representations across three stages can avoid the undermining problem. A formal proof of this property is provided in the Appendix~\ref{sec_genenoh_diffusion_details}.

\noindent\textbf{Progressive HOI denoising.} 
We design a three-stage denoising approach (outlined in Figure~\ref{fig_method_pipeline}), each stage dedicated to cleaning one aspect of the representation: $\bar{\mathcal{J}}$, $\mathcal{S}$, and $\mathcal{T}$, respectively. 
In each stage, a canonical denoising model is learned for the corresponding representation, and the denoising is carried out using the ``denoising via diffusion'' strategy. 
Given the input \repns$^{\text{input}}=\{ \hat{\bar{\mathcal{J}}}^{\text{input}}, \hat{\mathcal{S}}^{\text{input}}, \hat{\mathcal{T}}^{\text{input}} \}$, the first denoising stage, named \textbf{MotionDiff}, denoises the noisy canonical hand trajectory $\hat{\bar{\mathcal{J}}}^{\text{input}}$ to $\bar{\mathcal{J}}^{\text{stage}_1}$. One stage-denoised hand trajectory ${\mathcal{J}}^{\text{stage}_1}$ can be easily computed by de-canonicalizing $\bar{\mathcal{J}}^{\text{stage}_1}$ using object poses. \repns$^{\text{input}}$ can also be updated accordingly into \repns$^{\text{stage}_1}=\{ \bar{\mathcal{J}}^{\text{stage}_1}, \hat{\mathcal{S}}^{\text{stage}_1}, \hat{\mathcal{T}}^{\text{stage}_1} \}$. Then the second stage, named \textbf{SpatialDiff}, denoises the noisy spatial relation $\hat{\mathcal{S}}^{\text{stage}_1}$ to $\mathcal{S}^{\text{stage}_2}$. Two stages-denoised hand trajectory $\mathcal{J}^{\text{stage}_2}$ can be transformed from the hand-object relative offsets in $\mathcal{S}^{\text{stage}_2}$:  $\mathcal{J}^{\text{stage}_2} = \text{Average}\{ (\mathbf{h}_k - \mathbf{o}_k ) + \mathbf{o}_k \vert \mathbf{o}_k \in \mathbf{P}_k \}$. Following this, \repns$^{\text{stage}_1}$ will be updated to \repns$^{\text{stage}_2}=\{ \bar{\mathcal{J}}^{\text{stage}_2}, \mathcal{S}^{\text{stage}_2}, \hat{\mathcal{T}}^{\text{stage}_2} \}$. Finally the last stage, named \textbf{TemporalDiff}, denoises $\hat{\mathcal{T}}^{\text{stage}_2}$ to $\mathcal{T}^{\text{stage}_3}$. Since temporal information such as relative velocities is redundantly encoded in $\mathcal{T}$, we compute the three stages-denoised hand trajectory $\mathcal{J}^{\text{stage}_3}$ by optimizing $\mathcal{J}^{\text{stage}_2}$ so that its induced temporal representation aligns with $\mathcal{T}^{\text{stage}_3}$. And we take $\mathcal{J}^{\text{stage}_3}$ as the final denoising output, denoted as $\mathcal{J}$. 
Each stage would not undermine the naturalness achieved after the previous stages, as proved in the Appendix~\ref{sec_genenoh_diffusion_details}. 

\noindent\textbf{Fitting for a hand mesh trajectory.}
With the denoised trajectory $\mathcal{J}$ and the object trajectory, a parameterized hand sequence represented via MANO parameters $\{ \mathbf{r}_k, \mathbf{t}_k, \mathbf{\beta}_k, \mathbf{\theta}_k \}_{k=1}^K $ are optimized to fit $\mathcal{J}$ well. Details are illustrated in the Appendix~\ref{sec_method_details_fitting}. 

\section{Experiments} \label{sec_exp}
We conduct extensive experiments to demonstrate the effectiveness of our method. We train all models on the same training dataset and introduce four four test sets with different levels of domain shift to assess their denoising ability and the generalization ability (see Section~\ref{sec_gene_denoising}). Moreover, we demonstrate the ability of our denoising method to produce multiple reasonable solutions for a single input in Section~\ref{sec_stochastic_denoising}. At last, we show various applications that we can support (Section~\ref{sec:applications}). 
\textit{Another series of experiments using a different training set} is presented in the Appendix~\ref{sec_appen_hoi_denoising_exp}.
\subsection{Experimental Settings} \label{sec_exp_settings}


\noindent \textbf{Training datasets.} 
All models are trained on the GRAB dataset~\citep{taheri2020grab}. We follow the cross-object splitting strategy used in TOCH~\citep{zhou2022toch} and train models on the training set. Our denoising model only requires ground-truth sequences for training. For those where the noisy counterparts are demanded, we perturb each sequence by adding Gaussian noise on the hand MANO translation, rotation, and pose parameters with standard deviations set to 0.01, 0.1, 0.5 respectively. 

\noindent \textbf{Evaluation datasets.} 
We evaluate our model and baselines on \nntestsetsabc distinct test sets, namely GRAB test set with Gaussian noise, GRAB (Beta) test set with noise sampled from a Beta distribution ($B(8,2)$), HOI4D dataset~\citep{liu2022hoi4d} with real noise patterns resulting from depth sensing errors and inaccurate pose estimation algorithms, and ARCTIC dataset~\citep{fan2023arctic} with Gaussian noise but containing challenging bimanual and {dynamic interactions with changing contacts}. Noisy trajectories with synthetic noise are created by adding noise sampled from corresponding distributions to the MANO parameters. 

\noindent \textbf{Metrics.}
We introduce two sets of evaluation metrics. 
The first set focuses on assessing the model's ability to recover GT trajectories from noisy inputs following previous works~\citep{zhou2022toch}, including \textit{Mean Per-Joint/Vertex Position Error (MPJPE/MPVPE)}, measuring the average distance between the denoised hand joints or vertices and the corresponding GT positions and \textit{Contact IoU (C-IoU)} assessing the similarity between the contact map induced by denoised trajectory and the GT. 
The second set quantifies the quality of denoised results, including \textit{Solid Intersection Volume (IV)} and \textit{Penetration Depth}, measuring penetrations, \textit{Proximity Error}, evaluating the difference of the hand-object proximity between the denoised trajectory and the GT, and \textit{HO Motion Consistency}, assessing the hand-object motion consistency. Detailed calculations are presented in the Appendix~\ref{sec_exp_details_metrics}.

\begin{figure*}[htbp]
  \centering
  \includegraphics[width=\textwidth]{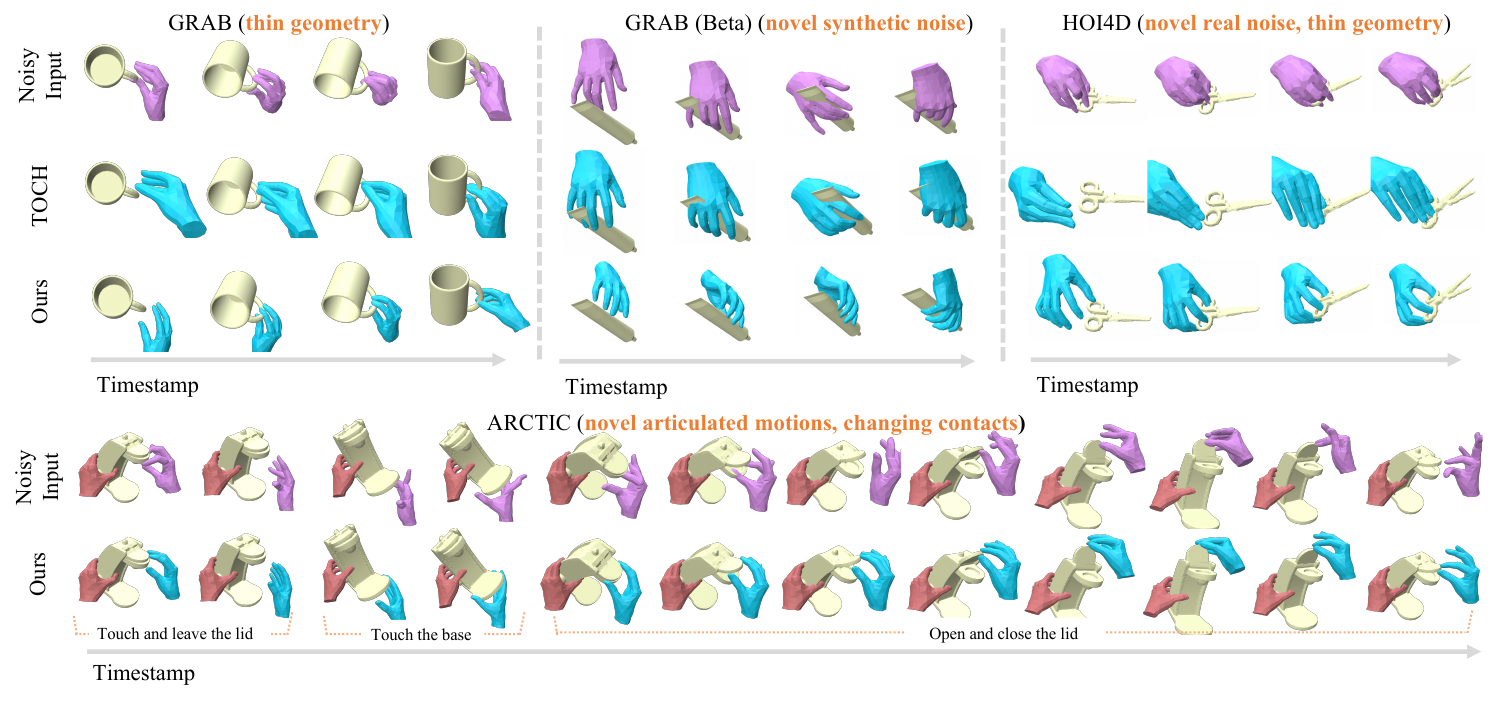}
  \vspace{-20pt}
  \caption{
  \textbf{Qualitative comparisons. }
  Please refer to \textbf{\href{https://meowuu7.github.io/GeneOH-Diffusion/}{our website} and \href{https://youtu.be/ySwkFPJVhHY}{video} } for animated results.
  }
  \label{fig_denoising_cmp_grab_beta_hoi4d}
  \vspace{-16pt}
\end{figure*}
\begin{figure*}[htbp]
  \centering
  \includegraphics[width=\textwidth]{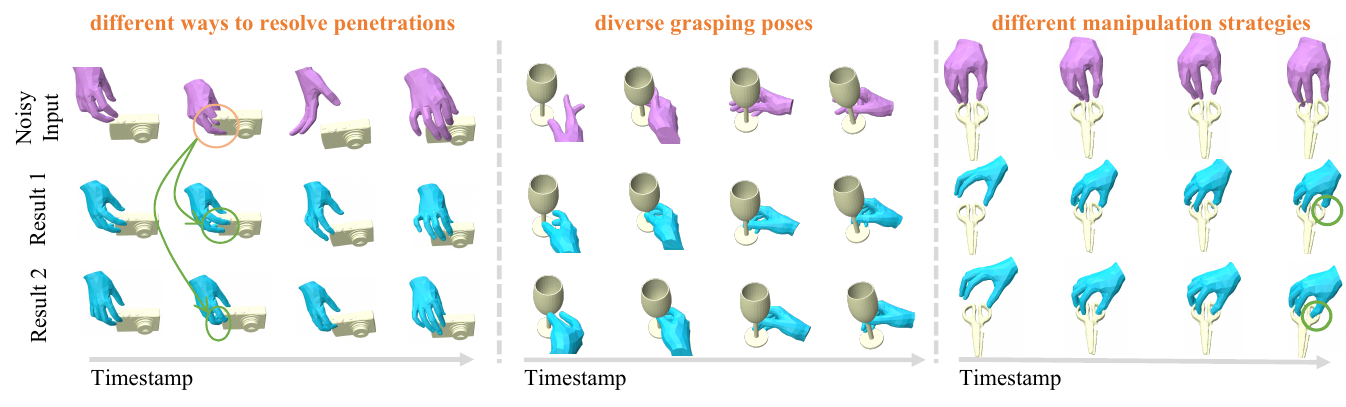}
  \vspace{-14pt}
  \caption{ \textbf{Stochastic denoising} can produce {divserse results with discrete modes}.
  }
  \vspace{-8pt}
  \label{fig_denoising_generative_denoising}
\end{figure*}

\noindent \textbf{Baselines.} 
We compare our model with the prior art on the HOI denoising problem, TOCH~\citep{zhou2022toch}. A variant named ``TOCH (w/ MixStyle)'' is further created by combining TOCH with a general domain generalization method  MixStyle~\citep{zhou2021mixstyle}. Another variant, ``TOCH (w/ Aug.)'', where TOCH is trained on the training sets of the GRAB and GRAB (Beta), is further introduced to enhance its robustness towards unseen noise patterns. 

\noindent \textbf{Evaluation settings.} 
When evaluating our model, we select the trajectory that is \textit{closest to the input noisy trajectory} from 100 randomly sampled denoised trajectories using seeds from 0 to 99. For deterministic denoising models, we report the performance on a single run. 
Since our model can give multiple solutions for a single input, we additionally report the performance of our model in the form of \textit{average with standard deviations in the Appendix} on the second metric set measuring quality. 

\begin{table*}[t]
    \centering
    \caption{ 
    \textbf{Quantitative evaluations and comparisons to baselines.}  \bred{Bold red} numbers for best values and \iblue{italic blue} values for the second best-performed ones. ``GT'' stands for ``Ground-Truth''. 
    } 
    \resizebox{1.0\textwidth}{!}{%
\begin{tabular}{@{\;}llccccccc@{\;}}
        \toprule
        Dataset & Method & \makecell[c]{MPJPE \\ ($mm$, $\downarrow$)} & \makecell[c]{MPVPE \\ ($mm$, $\downarrow$)}  &  \makecell[c]{C-IoU \\  (\%, $\uparrow$)}  &  \makecell[c]{IV \\ ($cm^3$, $\downarrow$)} & \makecell[c]{Penetration Depth \\ ($mm$, $\downarrow$)} & \makecell[c]{Proximity Error \\ ($mm$, $\downarrow$)}  &  \makecell[c]{HO Motion Consistency \\  ($mm^2$, $\downarrow$)}  
        \\
        \cmidrule(l{0pt}r{0pt}){1-2}
        \cmidrule(l{1pt}r{0pt}){3-5}
        \cmidrule(l{0pt}r{0pt}){6-9}

        \multirow{5}{*}{GRAB} & GT & - & - & - & 0.50 &  1.33 & - & 0.51 
        \\ 
        ~ & Input & 23.16 & 22.78 & 1.01 & 4.48 & 5.25 & 13.29 & 881.23 
        \\ 
        \cmidrule(l{15pt}r{17pt}){3-9}
        
        ~ & TOCH & 12.38 & 12.14  & 23.31 & 2.09 & 2.17 & 3.12 & \iblue{20.37}  
        \\ 

        ~ & TOCH (w/ MixStyle) & 13.36 & 13.03  &\iblue{23.70} & 2.28 & 2.62 & \iblue{3.10} & 21.29
        \\ 

        ~ & TOCH (w/ Aug.) & \iblue{12.23}  & \iblue{11.89} & 22.71 & \iblue{1.94} &  \iblue{2.04} & 3.16 & 22.58
        \\





        ~ & \model & \bred{9.28} & \bred{9.22}  & \bred{25.27} & \bred{1.23} & \bred{1.74} & \bred{2.53}  & \bred{0.57}
        \\ 
        \midrule

        \multirow{4}{*}{\makecell[c]{GRAB \\ (Beta)}} & Input & 17.65 & 17.40  & 13.21 & 2.19 &  4.77 & 5.83 & 27.58 
        \\
        \cmidrule(l{15pt}r{17pt}){3-9}

        ~ & TOCH & 24.10 & 22.90 & 16.32 & 2.33 &  2.77 & 5.60 & 25.05 
        \\

        ~ & TOCH (w/ MixStyle) & 22.79  & 21.19 & 16.28 & 2.01 &  2.63 & 4.65  & 17.37 
        \\

        ~ & TOCH (w/ Aug.) & \iblue{11.65}  & \iblue{10.47} & \iblue{24.81} & \iblue{1.52} &  \iblue{1.86} & \iblue{3.07}  & \iblue{13.09}
        \\



        ~ & \model  & \bred{9.09} & \bred{8.98}  & \bred{26.76} &  \bred{1.19} & \bred{1.69} & \bred{2.74}   & \bred{0.52} 
        \\ 
        \midrule

        \multirow{5}{*}{HOI4D} & Input & - & - & - & 2.26 & 2.47 & -  & 46.45 
        \\
        \cmidrule(l{15pt}r{17pt}){3-9}
        

        ~ & TOCH & - & - & - & \iblue{4.09} & \iblue{4.46} & - & 35.93
        \\ 
        
        ~ & TOCH (w/ MixStyle)  & - & - & - &  4.31 & 4.96 & - & \iblue{25.67} 
        \\ 

        ~ & TOCH (w/ Aug.) & - & - & - &  4.20 & 4.51 & -  &  25.85 
        \\ 



        ~ & \model & - & - & - & \bred{1.99} & \bred{2.15} & -  & \bred{9.81}
        \\ 
        \midrule

        \multirow{5}{*}{ARCTIC} & GT & - & - & - & 0.33 & 0.92 & 0 & 0.41
        \\
        ~ & Input & 25.51 & 24.84  & 1.68 & 2.28 &  4.89 & 15.21 & 931.69
        \\
        \cmidrule(l{15pt}r{17pt}){3-9}

        ~ & TOCH & 14.34 &14.07 & 20.32 & 1.84 & 2.01 & {4.31} & 18.50
        \\ 
        
        ~ & TOCH (w/ MixStyle) &  \iblue{13.82} & \iblue{13.58} & \iblue{21.70} & 1.92 & 2.13 & \iblue{4.25} & \iblue{18.02}
        \\ 

        ~ & TOCH (w/ Aug.) &  14.18 & 13.90 & {20.10} & \iblue{1.75} & \iblue{1.98} & 5.64 & 22.57
        \\ 



        ~ & \model & \bred{11.57 } & \bred{11.09}  & \bred{23.49} &  \bred{1.35}  & \bred{1.93}  & \bred{2.71}  & \bred{0.92}
        \\ 
        
        \bottomrule
 
    \end{tabular}
    }
    \label{tb_exp_quant_cmp_ours_tot}
\end{table*}



\subsection{HOI Denoising} \label{sec_gene_denoising}
We evaluated our model and compared it with previous works on \nntestsetsabc test sets: GRAB, GRAB (Beta), HOI4D, and ARCTIC. In the GRAB test set, all objects were unseen during training, resulting in a shift in the interaction distribution. In the GRAB (Beta) test set, the object shapes, interaction patterns, and noise patterns differ from those in the training set. The HOI4D dataset includes interaction sequences with novel objects and unobserved interactions, along with real noise caused by inaccurate sensing and vision estimations. 
The ARCTIC dataset contains challenging bimanual dexterous HOI sequences with dynamic contacts.
Table~\ref{tb_exp_quant_cmp_ours_tot} and Figure~\ref{fig_denoising_cmp_grab_beta_hoi4d},~\ref{fig_denoising_generative_denoising} summarize the quantitative results and can demonstrate the superiority of our method to recover GT sequences and produce high-quality results compared to previous baseline methods. 
We include \textbf{more results in the Appendix~\ref{sec_appen_hoi_denoising_exp}, our \href{https://meowuu7.github.io/GeneOH-Diffusion/}{website} and \href{https://youtu.be/ySwkFPJVhHY}{video} }. 


\noindent\textbf{Performance on challenging noisy interactions.} 
As shown in Figure~\ref{fig_denoising_cmp_grab_beta_hoi4d}, the perturbed noisy trajectories exhibit obvious problems such as unnatural hand poses, large and difficult penetrations such as penetrating the thin mug handle, and unrealistic manipulations caused by incorrect contacts and inconsistent hand-object motions. 
Our method can produce visually appealing interaction sequences from noisy inputs effectively. 
Besides, we do not have difficulty in handling difficult shapes such as the mug handle and scissor rings which are very easy to penetrate. However, TOCH cannot perform well. Its results still exhibit obvious penetrations (the last frame) and hand motions that are insufficient to manipulate the mug. Furthermore, we are not challenged by difficult and dynamic motions with changing contacts, as demonstrated by results on the ARCTIC dataset. 

\noindent\textbf{Results on noisy interactions with unseen noise patterns.} 
In Figure~\ref{fig_denoising_cmp_grab_beta_hoi4d}, we demonstrate our method's robustness against new noise patterns, including previously unseen synthetic noise and novel real noise. Our approach effectively cleans such noise, producing visually appealing and motion-aware results with accurate contacts. In contrast, TOCH fails in these scenarios, as it exhibits obvious penetrations (as seen in the middle example) and results in stiff hand trajectories without proper contacts to manipulate the object (as seen in the rightmost example).

\subsection{Stochastic HOI Denoising}  \label{sec_stochastic_denoising}
Figure~\ref{fig_denoising_generative_denoising} illustrates our ability to provide multiple plausible denoised results for a single noisy input. Notably, we observe discrete manipulation modes among these results. For instance, in the leftmost example of Figure~\ref{fig_denoising_generative_denoising}, our model generates different hand poses to address the unnatural phenomenon in the second frame, where two fingers penetrate through the camera. Similarly, in the rightmost example, our results offer two distinct ways to rotate the scissor for a certain angle. 
\subsection{Applications} \label{sec:applications}


\noindent\textbf{Cleaning hand trajectory estimations.} 
As a denoising model, our approach can effectively refine hand trajectory estimations derived from image sequence observations. Figure~\ref{fig_all_applis} provides examples of applying our model to estimations obtained from ArcticNet-LSTM~\citep{fan2023arctic}.

\noindent\textbf{Refining noisy retargeted hand motions.} 
In the right part of Figure~\ref{fig_all_applis}, we showcase the application of our denoising model in cleaning noisy retargeted hand trajectories. Our model excels at resolving penetrations present in the sequence resulting from direct retargeting. In contrast, TOCH's result still suffers from noticeable penetrations.


\begin{figure}[h]
  \centering
  \includegraphics[width=1.0\textwidth]{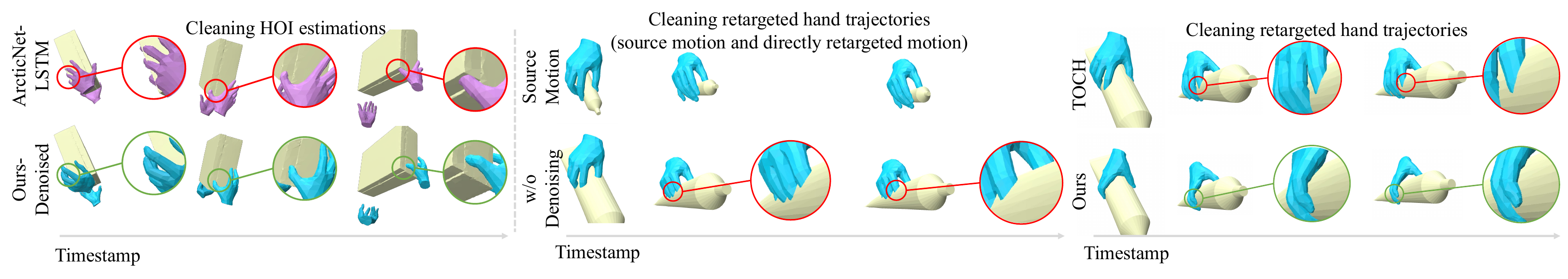}
  \caption{
  \textbf{Applications} on refining \textbf{noisy hand trajectories estimated from videos} (left) and cleaning \textbf{retargeted hand trajectories} (right). 
  }
  \label{fig_all_applis}
  \vspace{-10pt}
\end{figure}

\section{Ablation Study} \label{sec_ablation_study}

\noindent \textbf{Generalized contact-centric parameterizations.}
\rep leverages generalized contact points to normalize the hand-object relations. 
To assess the effectiveness of this design, We create an ablated model named ``Ours (w/o Canon.)'', which uses points sampled from the entire object surface for parameterizing. From Table~\ref{tb_exp_ablations_ours_ablated_models}, we can observe that our design on parameterizing around the interaction region can successfully improve the model's generalization ability towards unseen interactions. 



\begin{wraptable}[7]{r}{0.5\textwidth}
    \centering
        \vspace{-5ex}
    \caption{ 
    \textbf{Ablation studies} on the HOI4D dataset. 
    } 
    \resizebox{0.5\textwidth}{!}{%
\begin{tabular}{@{\;}lccc@{\;}}
        \toprule
        Method & \makecell[c]{IV \\ ($cm^3$, $\downarrow$)} & \makecell[c]{Penetration Depth \\ ($mm$, $\downarrow$)} &  \makecell[c]{HO Motion \\ Consistency ($mm^2$, $\downarrow$)}  
        \\
        \midrule

        













        Input & 2.26 & 2.47 & 46.45 
        \\
        \cmidrule(l{15pt}r{17pt}){2-4}
        



        \model (w/o SpatialDiff) & 2.94 & 3.45  & 31.67
        \\ 

        \model (w/o TemporalDiff) & \bred{1.72} & \bred{1.90} & 34.25
        \\

        \model (w/o Diffusion) & 3.16 & 3.83  & 18.65 
        \\ 

        \model (w/o Canon.) & {2.36} & {3.57} & \iblue{13.26}
        \\ 

        \model & \iblue{1.99} & \iblue{2.15} & \bred{9.81}
        \\ 





        
        \bottomrule
 
    \end{tabular}
    }
    \label{tb_exp_ablations_ours_ablated_models}
\end{wraptable}




\noindent \textbf{Denoising via diffusion.}
To further investigate the impact of the ``denoising via diffusion''  strategy on enhancing the model's generalization ability, we ablate it by replacing the denoising model with an autoencoder structure.  The results are summarized in Table~\ref{tb_exp_ablations_ours_ablated_models}. 
Besides, the comparisons between ``Ours (w/o Diffusion)'' and TOCH highlight the superiority of our representation \rep as well. 

\begin{wrapfigure}[12]{r}{0pt}
\vspace{-8ex}
\includegraphics[width = 0.35\textwidth]{./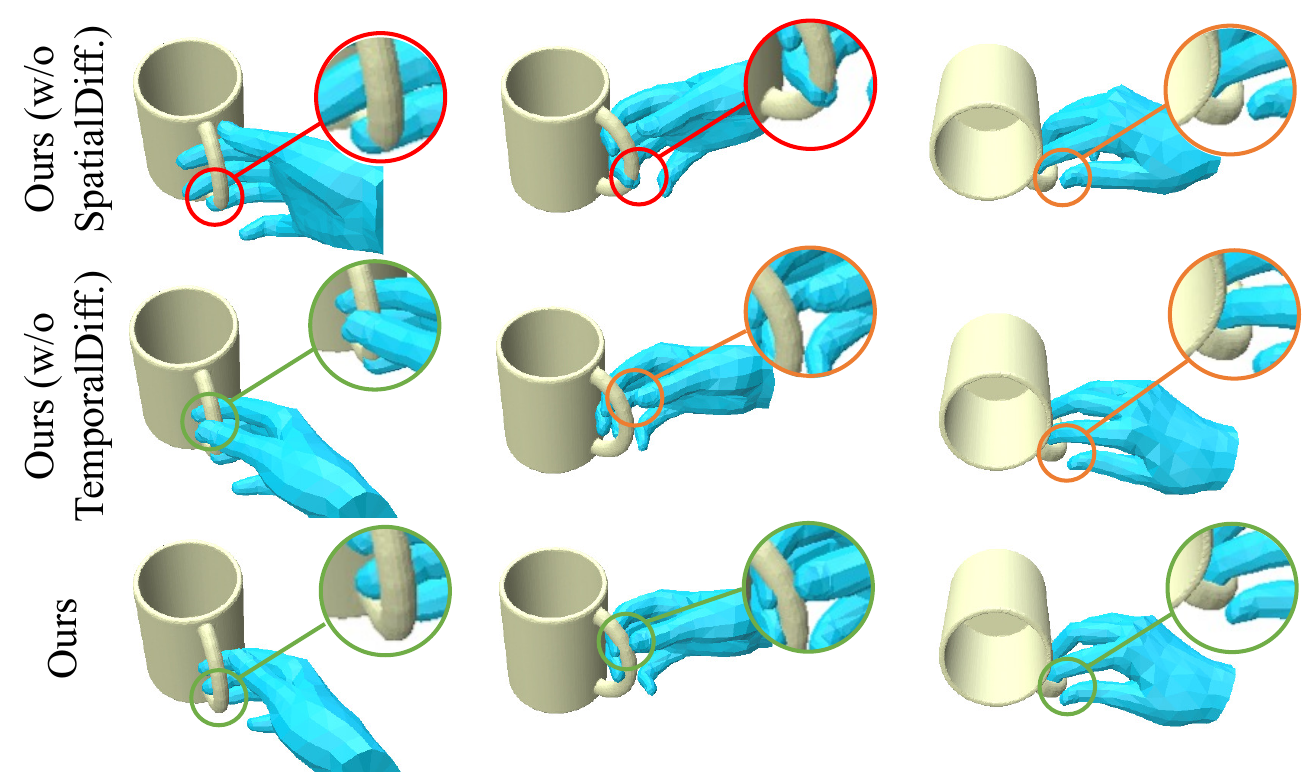}
    \caption{Effectiveness of the \textbf{SpatialDiff} and \textbf{TemporalDiff} stages. }
  \label{fig_abl_three_stages}
\end{wrapfigure}

\noindent\textbf{Hand-object spatial and temporal denoising.} 
We propose a progressive denoising strategy composed of three stages to clean the complex interaction noise. This multi-stage approach is crucial, as a single denoising stage would fail to produce reasonable results in the presence of complex interaction noise. 
To validate the effectiveness of the stage-wise denoising, we created two ablated versions: a) ``Ours (w/o TemporalDiff)'' by removing the temporal denoising module, and b) ``Ours (w/o SpatialDiff)'' by removing both the temporal and spatial denoising modules. Figure~\ref{fig_abl_three_stages} and Table~\ref{tb_exp_ablations_ours_ablated_models} demonstrate their effectiveness in removing unnatural hand-object penetrations and enforcing consistent hand-object motions.

\textit{More quantitative and qualitative results for ablation studies} 
are included in the {Appendix}~\ref{seq_abalations_redudies}.

\section{Conclusion and Limitations}
In this work, we propose \rep Diffusion to 
tackle the generalizable HOI denoising problem. We resolve the challenge by 1) designing an informative HOI representation that is friendly for generalization, 
and 2) learning a canonical denoising model for domain-generalizable denoising. Experiments demonstrate our high denoising capability and generalization ability. 

\noindent\textbf{Limitations.} The main limitation lies in the assumption of accurate object pose trajectories. 
It may not hold if the HOI sequences are estimated from in-the-wild videos. Refining object poses and hand poses at the same time is a valuable and practical research direction. 



\bibliography{ref}
\bibliographystyle{iclr2024_conference}

\clearpage

\appendix

\noindent\textbf{Overview.} 
The \textbf{Appendix} provides a list of materials to support the main paper. 
\begin{itemize}
    \item \textbf{Additional Technical Explanations (Sec.~\ref{sec_appen_additional_tech}).} We give additional explanations to complement the main paper. 
        \begin{itemize}
            \item \textit{The \rep representation (Sec.~\ref{sec_appen_rep_design})}.  We discuss more insights into the representation design, why \rep can highlight errors, and how it compares to representations designed in previous works. 
            \item \textit{\rep Diffusion (Sec.~\ref{sec_genenoh_diffusion_details}).} We talk more about the whitened noise space, the diffusion process, the multi-step stochastic denoising process, the ``denoising via diffusion'' strategy, and the \textit{progressive denoising}, together with a discussion on why each denoising stage can successfully clean the input without breaking the naturalness achieved after previous stages. 
            \item  \textit{Fitting for a hand mesh trajectory (Sec.~\ref{sec_method_details_fitting}).} We provide details of the fitting process. 
        \end{itemize}
    \item  \textbf{Additional Experimental Results (Sec.~\ref{sec_appen_additional_res}).}  We include more experimental results in this section to support the effectiveness of the method, including 
        \begin{itemize}
            \item \textit{HOI denoising results (Sec.~\ref{sec_appen_hoi_denoising_exp})}. We include more denoising results on GRAB, GRAB (Beta), HOI4D, and the ARCTIC dataset, including \textit{long sequences with bimanual manipulations}. Besides, we discuss the results of \textit{another series of experiments} where the training dataset is changed to the training set of the \textit{ARCTIC dataset}.  
            \item  \textit{Ablation studies (Sec.~\ref{seq_abalations_redudies})}. We provide more quantitative and qualitative results of the ablation studies. 
            \item \textit{Applications (Sec.~\ref{sec_appen_applications})}. We provide more results on the applications that our model can support. 
             \item \textit{Failure cases (Sec.~\ref{sec_supp_faliure_cases})}. We discuss the limitations and failure cases of our method. 
             \item  \textit{Analyzing the distinction between noise in real hand-object interaction trajectories and artificial noise (Sec.~\ref{sec_appen_noise_real_arti})}. We discuss the differences between the real noise patterns and the artificial noise. 
             \item  \textit{User study (Sec.~\ref{sec_appen_user_study})}. We additionally include a user study to further assess the quality of our denoised results. 
        \end{itemize}
    \item  \textbf{Experimental Details (Sec.~\ref{sec_exp_details})}. We illustrate details of datasets, metrics, baselines, models, the training and evaluation settings, and the running time as well as the complexity analysis. 
\end{itemize}

We include a \href{https://youtu.be/ySwkFPJVhHY}{\textbf{video}} and an \href{https://meowuu7.github.io/GeneOH-Diffusion/}{\textbf{website}} to introduce our work. The website and the video contain \textit{animated denoised results}. We highly recommend exploring these resources for an intuitive understanding of the challenges, the effectiveness of our model, and its superiority over prior approaches.

\section{Additional Technical Explanations} \label{sec_appen_additional_tech}

\subsection{The \rep Representation}  \label{sec_appen_rep_design}

More insights of the canonicalization design on $\mathcal{T}$ are explained as follows.

\noindent\textbf{Canonicalization design on the temporal relations $\mathcal{T}$.} 
The temporal relations $\mathcal{T}$ leverages hand-object relative velocity $\mathbf{v}_k^{\mathbf{ho}}$ at each frame $k$ of each hand-object point pair $(\mathbf{h}, \mathbf{o})$ to represent the motion relations. We further canonicalize the relative velocity via object normals by decomposing $\mathbf{v}_k^{\mathbf{ho}}$ into two statistics: $\mathbf{v}_{k,\perp}^{\mathbf{h} \mathbf{o}}$, vertical to the object tangent plane and parallel to the object point normal, and $\mathbf{v}_{k,\parallel}^{\mathbf{h} \mathbf{o}}$, lying in the object's tangent plane. This decomposition enables the model to learn different denoising strategies for the two types of relative velocities and enhance cross-interaction generalization by factoring out object poses. 
However, relying solely on relative velocities is insufficient to reveal motion noise in hand-object interactions. The same relative velocity parallel to the normal direction can correspond to a clean state when the hand is far from the object, but a noisy state when they are in contact. To address this, we further encode the distance between each hand-object pair and their relative velocities into two statistics, $\{ ( e_{k, \perp}^{\textbf{ho}}, e_{k, \parallel}^{\textbf{ho}} ) \}$, using the following formulation:
\begin{align}
    e_{k, \perp}^{\mathbf{ho}} &= e^{-k\cdot d_{k}^{\mathbf{ho}}} k_b\Vert \mathbf{v}_{k,\perp}^{\mathbf{h} \mathbf{o}}  \Vert_2  \\
    e_{k, \parallel}^{\mathbf{ho}} &= e^{-k\cdot d_{k}^{\mathbf{ho}}} k_a\Vert \mathbf{v}_{k,\parallel}^{\mathbf{h} \mathbf{o}} \Vert_2. 
\end{align}
Here, $k$, $k_a$, and $k_b$ are positive hyperparameters, and the term $e^{-k\cdot d_{k}^{\mathbf{ho}}}$ is inversely related to the distance between the hand and object points. This formula allows the statistics for very close hand-object point pairs to be emphasized in the representation.

\noindent \textbf{Why \rep can highlight errors?}
For spatial errors, we mainly consider the geometric penetrations between the hand and the object. The hand-object spatial relations $\mathcal{S}$ in \rep reveal penetrations by parameterizing the object normals and hand-object relative offsets. In more detail, for each hand-object point pair $(\mathbf{h}_k, \mathbf{o}_k)$, the dot product between the object normal $\mathbf{n}_k$ of $\mathbf{o}_k$ and the relative offset $\mathbf{h}_k - \mathbf{o}_k$ indicates the signed distance between the object point and the hand point. For each hand point $\mathbf{h}_k$, its signed distance to the object mesh can be revealed by jointly considering its signed distance to all generalized contact points. Since the hand point $\mathbf{h}_k$ penetrates the object \textit{if and only if its signed distance to the object mesh is negative}, the spatial relation parameterization $\mathcal{S}$ can indicate the penetration phenomena. 

For temporal errors, we mainly consider inconsistent hand-object motions. There is no unified definition or statement regarding what consistent hand-object motions indicate. Intuitively, the hand should be able to manipulate the object, where sufficient contact and consistent motions between very close hand-object point pairs are demanded. For very close hand-object pairs, the sliding motion on the object surface is permitted but the vertical penetration moving tendency is not allowed. The above expectations and unnatural situations can be revealed from simple statistics like hand-object relative velocities and hand-object distances. The distance can tell whether they are close to each other. The relative velocity can reveal their moving discrepancy. The decomposed relative velocity lying in the tangent plane and vertical to the tangent plane indicate the surface sliding tendency and the penetrating tendency respectively. The distance-related weight term $e^{-k_f d_{k}^{\mathbf{ho}}}$ can emphasize hand-object pairs that are very close to each other. Therefore, the temporal relations representation $\mathcal{T}$ leveraged in \rep can successfully indicate the temporal naturalness and incorrect phenomena. Thus learning the distribution of $\mathcal{T}$ can teach the model what is temporal naturalness and how to clean the noisy representation. 
\noindent \textbf{The \rep representation} 
can be applied to parameterize interaction sequences involving rigid or articulated objects. It carefully integrates both the hand motions and hand-object spatial and temporal relations --- more expressive and comprehensive compared to designs in previous works~\citep{zhou2022toch,zhou2021stgae,zhang2021manipnet}. Besides, it can highlight spatial and temporal interaction errors. The above advantages make \rep well-suited for the HOI denoising task.

Inspecting back to previous works, TOCH~\citep{zhou2022toch} does not explicitly encode hand-centric poses or hand-object temporal relations and only grounds the hands onto the object without careful consideration of cross-interaction alignment. ManipNet~\citep{zhang2021manipnet} takes hand-object distances to represent their relations. But this is not enough to reveal their spatial relations. Canonicalizations are also not carefully considered in this work. 

\subsection{\rep Diffusion} \label{sec_genenoh_diffusion_details}
We give a more detailed explanation of the three denoising stages in the following text. 

\noindent \textbf{The whitened noise space.} This space is constructed by diffusing the training data towards a random Gaussian noise. A diffusion timestep $1\le t_{\text{diff}} \le T$ where $T$ is the maximum timestep controls to what extent the input is diffused to pure Gaussian noise. It is the space modeled by the diffusion models during training. The diffusion function we adopt in this work is also exactly the same as the forward diffusion process of diffusion models. 
To be more specifically, given a data point $x$, the $t_{\text{diff}}$ diffusion would transform to $x_t$ by a linear combination of $x$ and a random Gaussian noise  $\mathbf{n}$ with the same size to $x$ via the following equation: 
\begin{equation}
    x_t = \sqrt{\bar{\alpha}_t} x + \sqrt{1-\bar{\alpha}_t} \mathbf{n}, 
\end{equation}
where $\alpha_t = 1 - \beta_t, \bar{\alpha}_t = \Pi_{s=1}^t\alpha_s$, $\{ \beta_{t} \}$ is the forward process variances.  
The distribution of $x_t$ is a normal distribution: $x_t\sim \mathcal{N}(\sqrt{\bar{\alpha}_t} x, (1-\sqrt{\bar{\alpha}_t})\mathbf{I})$. 

Intuitively, the noise space contains all possible $x_t$ across all possible timestep $1\le t\le T$. 
$x_t$ with smaller $t$ will be more similar to $x$. 
In practice, $T$ is set to 1000, $\beta_1 = 0.001, \beta_T = 0.02$ with a linear interpolation between them to create the variance sequence $\{ \beta_t \}$. During training $t$ is uniformly sampled from the $\{ t\vert t\in \mathbb{Z}, 1\le t\le T\}$. 

\noindent \textbf{Training of the denoising model.} 
Denote the multi-step stochastic denoising model leveraged in our method as $\text{denoise}(\cdot, t)$ which takes the noisy sample with the noise scale $t$ as input and denoise it back to the noise sample with noise scale $t-1$. When $t = 1$, the denoised result lies in the clean data manifold depicted by the model and is taken as the final denoised result. 
The $\text{denoise}(\cdot, t)$  leverages a score function $\epsilon_{\theta}(\cdot, t)$ to predict the noise component of the input noise sample $\tilde{x}_t$. The score function $\epsilon_{\theta}(\cdot, t)$ contains optimizable network weights $\theta$ and is what we need to learn during training. 
$\epsilon_{\theta}(\cdot, t)$ only predicts the noise component $\hat{\mathbf{n}}$. After that, a posterior sampling process is leveraged to sample $\tilde{x}_{t-1}$ based on $\tilde{x}_t$ and the predicted $\hat{\mathbf{n}}$ via the following equation:
\begin{equation}
    \tilde{x}_{t-1} = \frac{1}{\sqrt{\alpha_t}}(\tilde{x}_t - \frac{1 - \alpha_t}{\sqrt{1-\bar{\alpha}_t}}\epsilon_{\theta}(\tilde{x}_t, t)) + \sigma_t \mathbf{z}, \label{eq_posterior_sampling}
\end{equation}
where $\mathbf{z}\in \mathcal{N}(\mathbf{0}, \mathbf{I})$, $\sigma_t^2 = \beta_t$. 
Therefore, the denoising model is a multi-step stochastic denoiser since at each step it only identifies the mean of the posterior distribution and the denoised result needs to be sampled from the distribution with the predicted mean and the pre-defined variance. 

\noindent \textbf{The ``denoising via diffusion'' strategy.} 
The input trajectory with noise $\hat{x}$ is diffused to $\tilde{x}$ via Gaussian noise with the diffusion timestep $t_{\text{diff}}$ using the following equation: 
\begin{equation}
    \tilde{x} = \tilde{x}_{t_{\text{diff}}} = \hat{x}_{t_{\text{diff}}} = \sqrt{\bar{\alpha}_{t_{\text{diff}}}} x + \sqrt{1-\bar{\alpha}_{t_{\text{diff}}}} \mathbf{n}, 
\end{equation}
where $\mathbf{n} \in \mathcal{N}(\mathbf{0}, \mathbf{I})$ is a random Gaussian noise. 

Given the noisy sample $\tilde{x}_{t}$ with noise scale $t$, the denoising model $\text{denoise}(\cdot, t)$ predicts its noise component via the score function : 
\begin{equation}
    \hat{\mathbf{n}} = \epsilon_{\theta}(\tilde{x}_t, t).
\end{equation}
Then $\tilde{x}_{t}$ is denoised to $\tilde{x}_{t-1}$ with noise scale $t-1$ by sampling from the posterior distribution following Eq.~\ref{eq_posterior_sampling} with the predicted mean $\tilde{x}_{t} - \frac{1 - \alpha_t}{\sqrt{1-\bar{\alpha}_t}}\hat{\mathbf{n}}$ and the pre-defined variance $\sigma_t^2$.

\noindent \textbf{MotionDiff: canonicalized hand trajectory denoising. }
This denoising stage removes noise from the canonicalized hand trajectory $\hat{\bar{\mathcal{J}}}^{\text{input}}$ of the input noisy interaction sequence by applying the diffusion model for one stage-denoised $\bar{\mathcal{J}}^{\text{stage}_1}$, following the ``denoising via diffusion'' strategy. To do this, the noisy representation $\hat{\bar{\mathcal{J}}}^{\text{input}}$ is diffused by adding noise for $t_m$ steps, followed by denoising for $t_m$ steps using the diffusion model. The resulting one stage-denoised hand trajectory $\mathcal{J}^{\text{stage}_1}$ in the world coordinate space is obtained by de-canonicalizing the denoised canonicalized hand trajectory $\bar{\mathcal{J}}^{\text{stage}_1}$ using the pose of the generalized contact points ${ (\mathbf{R}_k, \mathbf{t}_k) }$.
\repns$^{\text{input}}$ can also be updated accordingly into \repns$^{\text{stage}_1}=\{ \bar{\mathcal{J}}^{\text{stage}_1}, \hat{\mathcal{S}}^{\text{stage}_1}, \hat{\mathcal{T}}^{\text{stage}_1} \}$. 

\noindent \textbf{SpatialDiff: hand-object spatial denoising.}
The hand-object spatial denoising module operates on the noisy hand-object spatial relations $\hat{\mathcal{S}}^{\text{stage}_1}$ of the one stage-denoised interaction sequence output by the previous MotionDiff stage. The representation $\hat{\mathcal{S}}^{\text{stage}_1}$  is diffused by adding noise for $t_s$ diffusion steps, followed by another $t_s$ step of denoising. 
Once we obtain the denoised representation $\mathcal{S}^{\text{stage}_2}$ which includes the hand-object relative offsets $\{ (\mathbf{h}_k - \mathbf{o}_k) \}$ centered at each generalized contact point $\mathbf{o}_k$, we adopt a simple approach to convert it into a two stages-denoised hand sequence. Specifically, we average the denoised hand offsets from each object point
as follows:
\begin{equation}
    \mathcal{J} = \text{Average}\{ (\mathbf{h}_k - \mathbf{o}_k ) + \mathbf{o}_k \vert \mathbf{o}_k \in \mathbf{P}_k \}.
    \label{eq_decode_traj_from_spatial}
\end{equation}
Following this, \repns$^{\text{stage}_1}$ will be updated to \repns$^{\text{stage}_2}=\{ \bar{\mathcal{J}}^{\text{stage}_2}, \mathcal{S}^{\text{stage}_2}, \hat{\mathcal{T}}^{\text{stage}_2} \}$.

\noindent \textbf{TemporalDiff: hand-object temporal denoising.}
We proceed to clean the noisy hand-object temporal relations $\hat{\mathcal{T}}^{\text{stage}_2}$ of the two stages-denoised sequence.
The ``denoising via diffusion'' procedure is applied to the temporal relations to achieve this. We then add an additional optimization to distill the information contained in the denoised temporal representation to the three stages-denoised trajectory. 
The objective is formulated as:
\begin{equation}
    \underset{\mathcal{J}^{\text{stage}_2}}{\text{minimize}}\Vert f_{\mathcal{(\mathcal{J}, \mathcal{P})\rightarrow \mathcal{T}}}(\mathcal{J}^{\text{stage}_2}, \mathcal{P}) - \mathcal{T}^{\text{stage}_3} \Vert,
    \label{eq_optimization_temporal}
\end{equation}
where $\mathcal{P}$ is the sequence of generalized contact points, $f_{(\mathcal{J}, \mathcal{P})\rightarrow \mathcal{T}}(\cdot, \cdot)$ converts the hand trajectory to the corresponding temporal relations.
The distance is calculated on hand-object distances, \emph{i.e.,} $\{ d_k^{\mathbf{ho}} \}$, relative velocity $\{ \mathbf{v}_k^{\mathbf{ho}} \}$ and two relative velocity-related statistics ( $\{ e_{k,\parallel}^{\mathbf{ho}}, e_{k,\perp}^{\mathbf{ho}} \}$ ). We employ an Adam optimizer to find the optimal hand trajectory. The optimized trajectory is taken as the final denoised trajectory $\mathcal{J}$.


\noindent \textbf{Stage-wise denoising strategy. } 
Let $\mathcal{I} = \{ (\mathcal{H}_k, \mathbf{O}_k) \}_{k=1}^{K} \in \mathcal{M}$ denote an interaction sequence, where $\mathcal{M}$ is the manifold contains all interaction sequences. 
Let $\mathcal{M}_{\bar{\mathcal{J}}}$, $\mathcal{M}_{\mathcal{S}}$, and $\mathcal{M}_{\mathcal{T}}$ represent the manifolds depicted by the three denoising stages respectively. 
Let $\mathcal{I}_{\bar{\mathcal{J}}}$, $\mathcal{I}_{\mathcal{S}}$, and $\mathcal{I}_{\mathcal{T}}$ represent one stage-denoised trajectory, two stages-denoised trajectory, and the three stages-denoised trajectory respectively. 
Further, let $\mathcal{R}_{\bar{\mathcal{J}}}^c$, $\mathcal{R}_{\mathcal{S}}^c$, and $\mathcal{R}_{\mathcal{T}}^c$ denote the set of all natural canonicalized hand trajectories, natural hand-object spatial relations, and correct hand-object temporal relations respectively. Let $\mathcal{R}_{\bar{\mathcal{J}}}$, $\mathcal{R}_{\mathcal{S}}$, and $\mathcal{R}_{\mathcal{T}}$ denote the set of all canonicalized hand trajectories, hand-object spatial relations, and hand-object temporal relations respectively. 
Denote the function that transforms the interaction trajectory $\mathcal{I}$ to the canonicalized hand trajectory as $f_{\mathcal{I} \rightarrow \bar{\mathcal{J}}}$, the function that converts $\mathcal{I}$ to the hand-object spatial relations as $f_{\mathcal{I} \rightarrow \mathcal{S}}$, and the function that transforms $\mathcal{I}$ to the hand-object temporal relations as $f_{\mathcal{I} \rightarrow \mathcal{T}}$. 

For all interaction trajectories considered in the work, we make the following assumption: 
\begin{assumption} \label{assum_spatial_denoised_traj}
    For any trajectory $\mathcal{I}$ with the first frame free of spatial noise, we can find a natural trajectory $\mathcal{I'}$ with the same first frame, that is $\mathcal{I}'[1] = \mathcal{I}[1]$.
\end{assumption}

The three fully-trained denoising models for $\bar{\mathcal{J}}$, $\mathcal{S}$, and $\mathcal{T}$ should be able to map the corresponding input representation to the set of $\mathcal{R}_{\bar{\mathcal{J}}}^c$, $\mathcal{R}_{\mathcal{S}}^c$, and $\mathcal{R}_{\mathcal{T}}^c$ respectively. 
Then the relations between the interaction manifolds depicted by the three denoising stages and the natural data prior modeled by the three denoising models have the following relations: 
\begin{itemize}
    \item $\mathcal{I} \in \mathcal{M}_{\bar{\mathcal{J}}}$ if and only if $f_{\mathcal{I}\rightarrow \bar{\mathcal{J}}} (\mathcal{I}) \in \mathcal{R}_{\bar{\mathcal{J}}}^c$; 
    \item  $\mathcal{I} \in \mathcal{M}_{\mathcal{S}}$ if and only if $f_{\mathcal{I} \rightarrow \mathcal{S}} (\mathcal{I}) \in \mathcal{R}_{\mathcal{S}}^c$; 
    \item $\mathcal{I} \in \mathcal{M}_{\mathcal{T}}$ if and only if $f_{\mathcal{I} \rightarrow \mathcal{T}} (\mathcal{I}) \in \mathcal{R}_{\mathcal{T}}^c$ and the first frame $\mathcal{I}[1]$ is free of spatial noise. 
\end{itemize}

Based on the relations between $\bar{\mathcal{J}}$, $\mathcal{S}$, and $\mathcal{T}$, we can make the following claim: 
\begin{claim} 
    There existing functions $f_{\mathcal{S}\rightarrow \bar{\mathcal{J}}}: \mathcal{R}_{\mathcal{S}} \rightarrow \mathcal{R}_{\bar{\mathcal{J}}}$ and $f_{\mathcal{T}\rightarrow \mathcal{S}}: (\mathcal{R}_{\mathcal{T}}, \mathcal{I}_{\mathcal{S}[1]} \rightarrow \mathcal{R}_{\mathcal{S}})$, so that for any interaction $\mathcal{I}$ with corresponding \rep representations $\text{\repns}(\mathcal{I}) = \{ \bar{\mathcal{J}}, \mathcal{S}, \mathcal{T} \}$, we have: $\bar{\mathcal{J}} = f_{\mathcal{S}\rightarrow \bar{\mathcal{J}}}(\mathcal{S})$ and
    $\mathcal{S} = f_{\mathcal{T}\rightarrow \mathcal{S}}(\mathcal{T}, \mathcal{I}[1])$.  
\end{claim}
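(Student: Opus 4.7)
The plan is to prove both functional identities by explicit construction, exploiting the assumption (stated in Section~\ref{sec:geneoh}) that the object pose trajectory $\{(\mathbf{R}_k,\mathbf{t}_k)\}$ and hence the generalized contact points $\mathcal{P}=\{\mathbf{P}_k\}$ and their normals $\{\mathbf{n}_k\}$ are known throughout. Concretely, I would (i) read off $\bar{\mathcal{J}}$ algebraically from $\mathcal{S}$, and (ii) integrate velocities in $\mathcal{T}$ from the initial condition $\mathcal{I}[1]$ to reconstruct the world-frame hand trajectory, then re-encode it as $\mathcal{S}$.

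For the first identity, recall that the spatial descriptor at a contact point $\mathbf{o}_k$ is $\mathbf{s}_k^{\mathbf{o}}=\bigl((\mathbf{o}_k-\mathbf{t}_k)\mathbf{R}_k^T,\,\mathbf{n}_k\mathbf{R}_k^T,\,\{(\mathbf{h}_k-\mathbf{o}_k)\mathbf{R}_k^T\mid \mathbf{h}_k\in\mathbf{J}_k\}\bigr)$. Summing the first entry with any relative-offset entry gives $(\mathbf{o}_k-\mathbf{t}_k)\mathbf{R}_k^T+(\mathbf{h}_k-\mathbf{o}_k)\mathbf{R}_k^T=(\mathbf{h}_k-\mathbf{t}_k)\mathbf{R}_k^T=\bar{\mathbf{h}}_k$. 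I would therefore define $f_{\mathcal{S}\rightarrow\bar{\mathcal{J}}}$ frame-wise by picking (or averaging over) any $\mathbf{o}_k\in\mathbf{P}_k$ and adding these two canonicalized entries for each hand keypoint. The identity $\bar{\mathcal{J}}=f_{\mathcal{S}\rightarrow\bar{\mathcal{J}}}(\mathcal{S})$ is then an immediate consequence of the definitions in Section~\ref{sec:geneoh}.

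For the second identity, the first frame $\mathcal{I}[1]$ provides the initial hand keypoints $\mathbf{J}_1$ (decodable from $\mathcal{S}[1]$ by step (i) above and the known object pose $(\mathbf{R}_1,\mathbf{t}_1)$). From $\mathcal{T}$ I would extract, at each frame $k$, the object point velocity $\mathbf{v}_k^{\mathbf{o}}$ and the relative velocity $\mathbf{v}_k^{\mathbf{ho}}$; the per-keypoint hand velocity is then recovered as $\mathbf{v}_k^{\mathbf{h}}=\mathbf{v}_k^{\mathbf{ho}}+\mathbf{v}_k^{\mathbf{o}}$, and discrete integration $\mathbf{h}_{k+1}=\mathbf{h}_k+\mathbf{v}_k^{\mathbf{h}}$ yields $\mathbf{J}_k$ for every $k$. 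Plugging $\{\mathbf{J}_k\}$ together with the known $\{\mathbf{P}_k,\mathbf{n}_k,\mathbf{R}_k,\mathbf{t}_k\}$ into the spatial formula above reconstructs $\mathcal{S}$, which defines $f_{\mathcal{T}\rightarrow\mathcal{S}}(\mathcal{T},\mathcal{I}[1])$.

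The main obstacle I anticipate is the redundancy of the temporal representation in Eq.~(\ref{eq_temporal_relations}): it also stores $d_k^{\mathbf{ho}}$, $e_{k,\parallel}^{\mathbf{ho}}$, and $e_{k,\perp}^{\mathbf{ho}}$, which are in principle determined by $\mathbf{v}_k^{\mathbf{ho}}$, $\mathbf{n}_k$ and hand-object positions, so for a generic element of $\mathcal{R}_{\mathcal{T}}$ these entries need not be self-consistent with the velocity entries. I would circumvent this by defining $f_{\mathcal{T}\rightarrow\mathcal{S}}$ using only the non-redundant components $\mathbf{v}_k^{\mathbf{o}}$ and $\mathbf{v}_k^{\mathbf{ho}}$, which makes the map well-defined on all of $\mathcal{R}_{\mathcal{T}}$, and then verifying that for any temporal representation coming from an actual interaction $\mathcal{I}$ the reconstruction reproduces the true $\mathcal{S}$ via the chain of definitional identities. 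A minor secondary point is that the integration should be performed per hand keypoint; since $\mathbf{v}_k^{\mathbf{h}}$ does not depend on $\mathbf{o}$, any choice of $\mathbf{o}_k\in\mathbf{P}_k$ gives the same recovered velocity, making the construction unambiguous.
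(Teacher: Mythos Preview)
Your proposal is correct and follows essentially the same approach as the paper: for $f_{\mathcal{S}\rightarrow\bar{\mathcal{J}}}$ you add the canonicalized contact-point position to the canonicalized offset exactly as the paper does, and for $f_{\mathcal{T}\rightarrow\mathcal{S}}$ you discretely integrate velocities from the initial frame. The only cosmetic difference is that the paper integrates the \emph{relative} velocity directly, writing $\mathbf{h}_{k+1}-\mathbf{o}_{k+1}=(\mathbf{h}_k-\mathbf{o}_k)+\Delta t\,\mathbf{v}_k^{\mathbf{ho}}$, whereas you pass through the world frame using $\mathbf{v}_k^{\mathbf{h}}=\mathbf{v}_k^{\mathbf{ho}}+\mathbf{v}_k^{\mathbf{o}}$; the two are algebraically equivalent, and your explicit handling of the redundancy in $\mathcal{T}$ is in fact more careful than the paper's.
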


\begin{proof}
    The canonicalized hand keypoints at each frame $k$, \emph{i.e.,} $\bar{\mathbf{J}}_k$ is composed of each canonicalized hand keypoint $\bar{\mathbf{J}}_k =\{ (\mathbf{h}_k - \mathbf{t}_k)\mathbf{R}_k^T \vert \mathbf{h}_k\in \mathbf{J}_k \}$, which can be derived from the canonicalized hand-object spatial relation at the frame $k$. Specifically, for each $\mathbf{o}_k\in \mathbf{P_k}$, we have $(\mathbf{h}_k - \mathbf{t}_k)\mathbf{R}_k^T = (\mathbf{h}_k - \mathbf{o}_k)\mathbf{R}_k^T + (\mathbf{o}_k - \mathbf{t}_k)\mathbf{R}_k^T$. The unique canonicalized hand trajectory at the frame $k$ can be decided from the trajectory converted from each object point $\mathbf{o}_k \in \mathbf{P}_k$. Depending on the conversion function from such multiple hypotheses of the canonicalized hand trajectory resulting from different $\mathbf{o}_k$, there exists a function $f_{\mathcal{S} \rightarrow \bar{\mathcal{J}}}: \mathcal{S} \rightarrow \mathcal{R}_{\bar{\mathcal{J}}}$ that transforms the hand-object spatial relations $\mathcal{S}$ to the canoncialized hand trajectory $\bar{\mathcal{J}}$.
    
    Similarly, given the hand-object temporal relations at the frame $k (1\le k \le K - 1)$ of the object point $\mathbf{o}_k\in \mathbf{P}_k$ and the natural hand keypoints at the starting frame $1$, \emph{i.e.,} $\mathbf{J}_1$, the relative velocity for each hand-object pair $(\mathbf{h}_k, \mathbf{o}_k)$ can be derived from the decoded hand-object relative velocity $\mathbf{v}_k^{\mathbf{ho}}$, two velocity-related statistics $(e_{k,\perp}^{\mathbf{ho}}, e_{k, \parallel}^{\mathbf{ho}} )$, the hand-object distance $d_k^{\mathbf{ho}}$. Given the hand-object relative positions $\{ (\mathbf{h}_1 - \mathbf{o}_1) \vert \mathbf{h}_1 \in \mathbf{J}_1 \}$, the hand-object relative positions at each following frame $k+1 (1\le k\le K-1)$ can be derived iteratively via the hand-object relative velocity $\{ \mathbf{v}_{k}^{\mathbf{ho}} \vert \mathbf{o}_k \in \mathbf{P}_k \}$: $\mathbf{h}_{k + 1} - \mathbf{o}_{k + 1} = (\mathbf{h}_{k} - \mathbf{o}_k) + \Delta t \mathbf{v}_k^{\mathbf{ho}}$. 
    Therefore, there existing a function $f_{\mathcal{T}\rightarrow \mathcal{S}}: \mathcal{R}_{\mathcal{T}} \rightarrow \mathcal{R}_{\mathcal{S}}$ that can convert the temporal relations $\mathcal{T}$ to the hand-object spatial relations $\mathcal{S}$. 
    \qedsymbol
\end{proof}

Based on this property, we can make the following claim regarding the relations between the three gradually constructed manifolds: 
\begin{claim}
    Assume the first frame of the two stages-denoised trajectory $\mathcal{I}_{\mathcal{S}}[1]$ is free of spatial noise, which \textbf{almost always holds true}, we have $\mathcal{M}_{\mathcal{T}} \subseteq \mathcal{M}_{\mathcal{S}} \subseteq \mathcal{M}_{\bar{\mathcal{J}}} $. 
\end{claim}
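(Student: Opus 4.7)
The plan is to prove the two set inclusions separately, each by taking an arbitrary interaction $\mathcal{I}$ in the smaller manifold and producing the representation-level conditions that put it in the larger manifold. The key tool is the previous Claim, which supplies deterministic conversion maps $f_{\mathcal{S}\rightarrow\bar{\mathcal{J}}}$ and $f_{\mathcal{T}\rightarrow\mathcal{S}}$ between the three components of \repns, together with the chain of equivalences that characterize each $\mathcal{M}_{\ast}$ in terms of its representation's naturalness set $\mathcal{R}_{\ast}^c$.

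For $\mathcal{M}_{\mathcal{S}} \subseteq \mathcal{M}_{\bar{\mathcal{J}}}$, I would pick any $\mathcal{I}\in\mathcal{M}_{\mathcal{S}}$, so $f_{\mathcal{I}\rightarrow\mathcal{S}}(\mathcal{I}) = \mathcal{S} \in \mathcal{R}_{\mathcal{S}}^c$. Applying the previous Claim gives $\bar{\mathcal{J}} = f_{\mathcal{S}\rightarrow\bar{\mathcal{J}}}(\mathcal{S})$, where at each frame $k$ every object point $\mathbf{o}_k\in\mathbf{P}_k$ contributes a consistent estimate $(\mathbf{h}_k-\mathbf{o}_k)\mathbf{R}_k^T + (\mathbf{o}_k-\mathbf{t}_k)\mathbf{R}_k^T$ of the canonicalized hand joint. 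Because these per-contact-point offsets are natural, the averaged canonicalized hand pose they encode lies in $\mathcal{R}_{\bar{\mathcal{J}}}^c$, so $f_{\mathcal{I}\rightarrow\bar{\mathcal{J}}}(\mathcal{I}) \in \mathcal{R}_{\bar{\mathcal{J}}}^c$ and therefore $\mathcal{I}\in\mathcal{M}_{\bar{\mathcal{J}}}$.

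For $\mathcal{M}_{\mathcal{T}} \subseteq \mathcal{M}_{\mathcal{S}}$, I would take $\mathcal{I}\in\mathcal{M}_{\mathcal{T}}$, so $\mathcal{T} = f_{\mathcal{I}\rightarrow\mathcal{T}}(\mathcal{I}) \in \mathcal{R}_{\mathcal{T}}^c$, and by the standing hypothesis the first frame $\mathcal{I}_{\mathcal{S}}[1]=\mathcal{I}[1]$ is free of spatial noise, so by Assumption~\ref{assum_spatial_denoised_traj} the frame-$1$ hand-object offsets $\{\mathbf{h}_1-\mathbf{o}_1\}$ already lie in $\mathcal{R}_{\mathcal{S}}^c$ restricted to a single frame. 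I would then invoke the conversion $\mathcal{S} = f_{\mathcal{T}\rightarrow\mathcal{S}}(\mathcal{T},\mathcal{I}[1])$ from the previous Claim, which reconstructs the frame-$(k{+}1)$ offsets by the integration $\mathbf{h}_{k+1}-\mathbf{o}_{k+1} = (\mathbf{h}_k-\mathbf{o}_k)+\Delta t\,\mathbf{v}_k^{\mathbf{ho}}$. An induction on $k$ then shows that each frame's offsets stay in the spatially-natural set, because the naturalness of $\mathcal{T}$ controls precisely the relative-velocity components $\mathbf{v}_{k,\perp}^{\mathbf{ho}},\mathbf{v}_{k,\parallel}^{\mathbf{ho}}$ and the distances $d_k^{\mathbf{ho}}$ that would otherwise create penetrations or unrealistic gaps. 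Hence $\mathcal{S}\in\mathcal{R}_{\mathcal{S}}^c$ and $\mathcal{I}\in\mathcal{M}_{\mathcal{S}}$.

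Combining the two inclusions gives the chain $\mathcal{M}_{\mathcal{T}}\subseteq \mathcal{M}_{\mathcal{S}}\subseteq \mathcal{M}_{\bar{\mathcal{J}}}$. The main obstacle is the inductive step in the second inclusion: one must argue that $\mathcal{R}_{\mathcal{T}}^c$ is rich enough to rule out exactly the velocity-distance patterns whose integration would push the reconstructed offsets out of $\mathcal{R}_{\mathcal{S}}^c$. This is also the step that crucially uses the first-frame spatial-naturalness assumption, since integrating natural temporal relations starting from a spatially noisy frame would propagate that initial error to every later frame and break the inclusion.
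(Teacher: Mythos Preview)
Your overall plan --- prove the two inclusions separately and drive each one through the conversion maps $f_{\mathcal{S}\rightarrow\bar{\mathcal{J}}}$ and $f_{\mathcal{T}\rightarrow\mathcal{S}}$ from the previous Claim --- is exactly the paper's structure. The first inclusion is essentially right, though the sentence ``because these per-contact-point offsets are natural, the averaged canonicalized hand pose they encode lies in $\mathcal{R}_{\bar{\mathcal{J}}}^c$'' is the conclusion, not a justification. The paper closes this by a \emph{witness} argument: since $\mathcal{R}_{\mathcal{S}}^c$ is by definition the set of spatial relations arising from natural interactions, $\mathcal{S}\in\mathcal{R}_{\mathcal{S}}^c$ furnishes a natural $\mathcal{I}'$ with $f_{\mathcal{I}\rightarrow\mathcal{S}}(\mathcal{I}')=\mathcal{S}$; naturalness of $\mathcal{I}'$ gives $\bar{\mathcal{J}}'=f_{\mathcal{I}\rightarrow\bar{\mathcal{J}}}(\mathcal{I}')\in\mathcal{R}_{\bar{\mathcal{J}}}^c$, and the determinism of $f_{\mathcal{S}\rightarrow\bar{\mathcal{J}}}$ forces $\bar{\mathcal{J}}=f_{\mathcal{S}\rightarrow\bar{\mathcal{J}}}(\mathcal{S})=\bar{\mathcal{J}}'$.

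The genuine gap is in your second inclusion, and you flag it yourself. Your frame-by-frame induction would need a lemma of the form ``integrating one step of a natural relative velocity from a spatially-natural configuration yields a spatially-natural configuration,'' and nothing in the setup gives you that; $\mathcal{R}_{\mathcal{S}}^c$ is a set of whole-trajectory relations, not a per-frame predicate, so the inductive hypothesis is not even well-posed. You also misread Assumption~\ref{assum_spatial_denoised_traj}: it does not say the frame-$1$ offsets lie in some restricted $\mathcal{R}_{\mathcal{S}}^c$; it says that if $\mathcal{I}[1]$ is free of spatial noise then there exists a \emph{full natural trajectory} $\mathcal{I}'$ with $\mathcal{I}'[1]=\mathcal{I}[1]$. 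That is precisely what lets the paper rerun the witness trick and avoid induction altogether: from $\mathcal{T}\in\mathcal{R}_{\mathcal{T}}^c$ together with Assumption~\ref{assum_spatial_denoised_traj} one obtains a natural $\mathcal{I}'$ with $f_{\mathcal{I}\rightarrow\mathcal{T}}(\mathcal{I}')=\mathcal{T}$ \emph{and} $\mathcal{I}'[1]=\mathcal{I}[1]$; since $f_{\mathcal{T}\rightarrow\mathcal{S}}$ depends only on $(\mathcal{T},\text{first frame})$, one gets $\mathcal{S}=f_{\mathcal{T}\rightarrow\mathcal{S}}(\mathcal{T},\mathcal{I}[1])=f_{\mathcal{T}\rightarrow\mathcal{S}}(\mathcal{T},\mathcal{I}'[1])=\mathcal{S}'\in\mathcal{R}_{\mathcal{S}}^c$. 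No per-frame preservation statement is needed --- the matched first frame plus the determinism of the conversion map does all the work.
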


\begin{proof}
    For $\mathcal{I} \in \mathcal{M}_{\mathcal{S}}$ with the \rep representation $\{ \bar{\mathcal{J}}, \mathcal{S}, \mathcal{T} \}$, assume $\mathcal{I}\notin \mathcal{M}_{\bar{\mathcal{J}}}$. 
    \begin{itemize}
        \item From $\mathcal{I}\in \mathcal{M}_{\mathcal{S}}$, we have $\mathcal{S} \in \mathcal{R}_{\mathcal{S}}^c$; 
        \item Based on the definition of $\mathcal{R}_{\mathcal{S}}^c$, the set of spatial relations derived from all natural interactions, there exists a natural interaction $\mathcal{I}'$ so that $f_{\mathcal{I}\rightarrow \mathcal{S}}(\mathcal{I}') = \mathcal{S}$;
        \item Since $\mathcal{I}'$ is a natural interaction, we have $\bar{\mathcal{J}}' = f_{\mathcal{I}\rightarrow \bar{\mathcal{J}}}(\mathcal{I}') \in \mathcal{R}_{\bar{\mathcal{J}}}^c$;
        \item Since $\bar{\mathcal{J}} = f_{\mathcal{S}\rightarrow \bar{\mathcal{J}}}(\mathcal{S}) = \bar{\mathcal{J}}'$, we have $\bar{\mathcal{J}} \in \mathcal{R}_{\bar{\mathcal{J}}}^c$;
        \item Based on the assumed fully-trained denoising model, we have $\mathcal{I} \in \mathcal{M}_{\bar{\mathcal{J}}}$. 
    \end{itemize}
    The conclusion contradicts with the assumption $\mathcal{I}\notin \mathcal{M}_{\bar{\mathcal{J}}}$. Thus $\mathcal{M}_{\mathcal{S}} \subseteq \mathcal{M}_{\bar{\mathcal{J}}}$ holds true. 

    For a $\mathcal{I}\in \mathcal{M}_{\mathcal{T}}$  with the \rep representation $\{ \bar{\mathcal{J}}, \mathcal{S}, \mathcal{T} \}$ whose first frame is free of spatial noise, assume $\mathcal{I}\notin \mathcal{M}_{\mathcal{S}}$.  
    \begin{itemize}
        \item From $\mathcal{I}\in \mathcal{M}_{\mathcal{T}}$, we have that $\mathcal{T} \in \mathcal{R}_{\mathcal{T}}^c$; 
        \item Based on the definition of $\mathcal{R}_{\mathcal{T}}^c$, the set of temporal relations derived from all natural interactions, and the Assumption~\ref{assum_spatial_denoised_traj}, there existing a natural interaction $\mathcal{I}'$, with the first frame same to $\mathcal{I}$, so that $f_{\mathcal{I}\rightarrow \mathcal{T}}(\mathcal{I}') = \mathcal{T}$;
        \item  Since $\mathcal{I}'$ is a natural interaction, we have $\mathcal{S}' = f_{\mathcal{I}\rightarrow \mathcal{S}}(\mathcal{I}') \in \mathcal{R}_{\mathcal{S}}^c$;
        \item  Since $\mathcal{S} = f_{\mathcal{T}\rightarrow \mathcal{S}}(\mathcal{T}, \mathcal{I}[1]) = f_{\mathcal{T}\rightarrow \mathcal{S}}(\mathcal{T}, \mathcal{I}'[1]) = \mathcal{S}'$, we have $\mathcal{S} \in \mathcal{R}_{\mathcal{S}}^c$;
        \item Based on the assumed fully-trained denoising model, we have $\mathcal{I} \in \mathcal{M}_{\mathcal{S}}$. 
    \end{itemize}
    The conclusion contradicts with the assumption $\mathcal{I}\notin \mathcal{M}_{\mathcal{S}}$. Thus $\mathcal{M}_{\mathcal{T}} \subseteq \mathcal{M}_{\mathcal{S}}$ holds true. \qedsymbol
\end{proof}

The stage-wise \rep Diffusion functions as the following steps to clean the input interaction $\mathcal{I}\in \mathcal{M}$:
\begin{itemize}
    \item Given the input interaction $\mathcal{I}$, the denoising model for $\bar{\mathcal{J}}$ maps $\bar{\mathcal{J}}$ to another $\bar{\mathcal{J}}^1\in \mathcal{R}_{\bar{\mathcal{J}}}^c$. There existing an interaction $\mathcal{I}^1$ s.t. $\bar{\mathcal{J}}^1 = f_{\mathcal{I}\rightarrow \bar{\mathcal{J}}}(\mathcal{I}^1)$, which is also exactly the same as the trajectory derived from $\bar{\mathcal{J}}^1$ and the object trajectory $\{ \mathbf{O}_k \}_{k=1}^K$. Therefore, after the first denoising stage, we have $\mathcal{I}^1 \in \mathcal{M}_{\bar{\mathcal{J}}}$. 
    \item Given $\mathcal{S}^1$, the denoising model for $\mathcal{S}$ maps $\mathcal{S}^1$ to $\mathcal{S}^2 \in \mathcal{R}_{\mathcal{S}}^c$. There existing an interaction $\mathcal{I}^2$ s.t. $\mathcal{S}^2 = f_{\mathcal{I}\rightarrow \mathcal{S}}(\mathcal{I}^2)$. Therefore, after the second denoising stage,  we have $\mathcal{I}^2 \in \mathcal{M}_{\mathcal{S}}$. 
    \item After that, the denoising model  for $\mathcal{T}$ maps $\mathcal{T}^2 = f_{\mathcal{I}\rightarrow \mathcal{T}}(\mathcal{I}^2)$ to $\mathcal{T}^3 \in \mathcal{R}_{\mathcal{T}}^c$. After that, the interaction $\mathcal{I}^3$ constructed as the following steps:
        \begin{itemize}
            \item Construct $\mathcal{S}^3$ from $\mathcal{T}^3$ and $\mathcal{I}^2[1]$ via $\mathcal{S}^3 = f_{\mathcal{T}\rightarrow \mathcal{S}}(\mathcal{T}^3, \mathcal{I}^2[1])$; 
            \item Construct $\bar{\mathcal{J}}^3$ from $\mathcal{S}^3$ via $\bar{\mathcal{J}}^3 = f_{\mathcal{S}\rightarrow \bar{\mathcal{J}}}(\mathcal{S}^3)$; 
            \item  Construct $\mathcal{I}^3$ from $\bar{\mathcal{J}}^3$ and the object trajectory $\{ \mathbf{O}_k \}_{k=1}^K$. 
        \end{itemize}
    Since $\mathcal{T}^3 = f_{\mathcal{I}\rightarrow \mathcal{T}}(\mathcal{I}^3) \in \mathcal{R}_{\mathcal{T}}^c$ and $\mathcal{I}^3[1] = \mathcal{I}^2[1]$ is free of spatial noise, we have $\mathcal{I}^3\in \mathcal{M}_{\mathcal{T}}$. 
\end{itemize}

Therefore, the three denoising stages gradually map the input noisy interaction to a progressively smaller manifold contained in the previous large manifold. Formally we have
\begin{equation}
    \text{GeneOH Diffusion}(\cdot): \mathcal{M} \rightarrow \mathcal{M}_{\bar{\mathcal{J}}} \rightarrow \mathcal{M}_{\mathcal{S}} \rightarrow \mathcal{M}_{\mathcal{T}}. 
\end{equation}

\subsection{Fitting for an HOI Trajectory} \label{sec_method_details_fitting}
Once the interaction sequence has been denoised, we proceed to fit a sequence of hand MANO~\citep{romero2022embodied} parameters to obtain final hand meshes. The objective is optimizing a series of MANO parameters $\{ \mathbf{r}_k, \mathbf{t}_k, \mathbf{\beta}_k, \mathbf{\theta}_k \}_{k=1}^K$ so that they fit the denoised trajectory $\mathcal{J}$ well. 
Notice the hand trajectory $\mathcal{J}$ consists of a sequence of hand keypoints so we also need to derive keypoints from MANO parameters to allow the above optimization. Luckily this process is differentiable and we use $\mathcal{J}^{recon}(\{ \mathbf{r}_k, \mathbf{t}_k, \mathbf{\beta}_k, \mathbf{\theta}_k \}_{k=1}^K)$ to denote it. 
We can therefore optimize the following reconstruction loss for the MANO hands
\begin{equation}
    \mathcal{L}_{recon} = \Vert \mathcal{J} - \mathcal{J}^{recon}(\{ \mathbf{r}_k, \mathbf{t}_k, \mathbf{\beta}_k, \mathbf{\theta}_k \}_{k=1}^K) \Vert,
    \label{eq_recon_losses}
\end{equation}
where the distance function is a simple mean squared error (MSE) between the hand keypoints at each frame. To regularize the hand parameters $\{ \mathbf{\beta}_k, \mathbf{\theta}_k \}$ and enforce temporal smoothness, we introduce an additional regularization loss defined as
\begin{equation}
    \mathcal{L}_{reg} = \frac{1}{K} \sum_{k=1}^{K}( \Vert \mathbf{\beta}_k \Vert_2 + \Vert \mathbf{\theta}_k \Vert_2 ) + \frac{1}{K-1} \sum_{k=1}^{K-1}\Vert \mathbf{\theta}_{k+1} - \mathbf{\theta}_k \Vert_2. 
    \label{eq_reg_fitting}
\end{equation}
The overall optimization objective is formalized as
\begin{equation}
    \text{minimize}_{\{ \mathbf{r}_k, \mathbf{t}_k, \mathbf{\beta}_k, \mathbf{\theta}_k \}_{k=1}^K} (\mathcal{L}_{recon} + \mathcal{L}_{reg}), 
    \label{eq_optimization}
\end{equation}
and we employ an Adam optimizer to solve the fitting problem.


\begin{table*}[t]
    \centering
    \caption{ 
    \textbf{HOI4D Dataset.} Per-category statistics of the HOI4D dataset used in our experiments, including number of sequences and the index of the start frame. 
    } 
\begin{tabular}{@{\;}lccc|cccccc@{\;}}
        \toprule
        ~ & Laptop & Pliers & Scissors & Bottle & Bowl & Chair & Mug & ToyCar & Kettle  \\ 

        \midrule

        \#Seq. & 155 & 187 & 93 & 214 & 217 & 167 & 249 & 257 & 58 \\ 

        Starting Frame & 120 & 150 & 50 & 0 & 0 & 0 & 0 & 0 & 0 \\ 

        \bottomrule
        
    \end{tabular}
    \label{tb_number_of_instances}
\end{table*}

\section{Additional Experimental Results} \label{sec_appen_additional_res}

We present additional experimental results to further support the denoising effectiveness and the strong generalization ability of our method. 

\begin{table}[t]
    \centering
    \caption{ 
    \textbf{Quantitative evaluations and comparisons.} Performance comparisons of our method, baselines, and ablated versions on different test sets using the \textit{first set of evaluation metrics}. \bred{Bold red} numbers for best values and \iblue{italic blue} values for the second best-performed ones.  
    } 
    \begin{tabular}{@{\;}llccc@{\;}}
        \toprule
        Dataset & Method & \makecell[c]{MPJPE \\ ($mm$, $\downarrow$)} & \makecell[c]{MPVPE \\ ($mm$, $\downarrow$)}  &  \makecell[c]{C-IoU \\  (\%, $\uparrow$)}  \\
        \midrule


        \multirow{8}{*}{\makecell[c]{GRAB }} & Input & 23.16 & 22.78 & 1.01
        \\ 
        \cmidrule(l{1pt}r{1pt}){3-5}
        
        ~ & TOCH & 12.38 & 12.14  & 23.31
        \\ 

        ~ & TOCH (w/ MixStyle) & 13.36 & 13.03  & 23.70
        \\ 
        ~ & TOCH (w/ Aug.) & {12.23}  & {11.89} & 22.71 
        \\ 

        ~ & \makecell[l]{Ours (w/o SpatialDiff)} & \bred{7.83} & \bred{7.67} & 26.09
        \\ 

        ~ & \makecell[l]{Ours (w/o TemporalDiff)} & \iblue{8.27} & \iblue{8.13} & \bred{26.55}
        \\ 

        ~ & \makecell[l]{Ours (w/o Diffusion)} & 8.52 & 8.38 & \iblue{26.44}
        \\ 

        ~ & \makecell[l]{Ours (w/o Canon.)} & 10.15 & 10.07 & {24.92}
        \\ 
        
        ~ & \model & {9.28} & 9.22  & {25.27}
        \\ 
        \midrule

        \multirow{6}{*}{\makecell[c]{GRAB \\ (Beta)}} & Input & 17.65 & 17.40  & 13.21
        \\ 
        \cmidrule(l{1pt}r{1pt}){3-5}
        
        ~ & TOCH &  24.10 & 22.90 & 16.32
        \\ 

        ~ & \makecell[l]{TOCH (w/ MixStyle)} & 22.79  & 21.19 & 16.28
        \\ 

        ~ & TOCH (w/ Aug.) & {11.65}  & \iblue{10.47} & \iblue{24.81} 
        \\ 

        ~ & \makecell[l]{Ours (w/o Diffusion)} & 12.16 & 11.75  & {22.96}
        \\ 

        ~ & \makecell[l]{Ours (w/o Canon.)} & \iblue{10.89} & {10.61}  & {24.68}
        \\ 

        ~ & \model & \bred{9.09} & \bred{8.98}  & \bred{26.76}
        \\ 
        \midrule

        \multirow{3}{*}{\makecell[c]{ARCTIC}} & Input & 25.51 & 24.84  & 1.68
        \\ 
        \cmidrule(l{1pt}r{1pt}){3-5}
        
        ~ & TOCH &  14.34 &14.07 & 20.32
        \\ 

        ~ & \makecell[l]{TOCH (w/ MixStyle)} & \iblue{13.82} & \iblue{13.58} & \iblue{21.70}
        \\ 

        ~ & TOCH (w/ Aug.) &  14.18 & 13.90 & {20.10} 
        \\ 
        


        ~ & \model & \bred{11.57} & \bred{11.09}  & \bred{23.49}
        \\ 
        \bottomrule
 
    \end{tabular}
    \label{tb_exp_quant_cmp_ours_toch_toch_metric}
\end{table}

\begin{table*}[t]
    \centering
    \caption{ 
    \textbf{Quantitative evaluations and comparisons.} Performance comparisons of our method, baselines, and ablated versions on different test sets using the \textit{second set of evaluation metrics}. \bred{Bold red} numbers for best values and \iblue{italic blue} values for the second best-performed ones. ``GT'' stands for ``Ground-Truth''. 
    } 
    \resizebox{1.0\textwidth}{!}{%
\begin{tabular}{@{\;}llcccc@{\;}}
        \toprule
        Dataset & Method & \makecell[c]{IV \\ ($cm^3$, $\downarrow$)} & \makecell[c]{Penetration Depth \\ ($mm$, $\downarrow$)} & \makecell[c]{Proximity Error \\ ($mm$, $\downarrow$)}  &  \makecell[c]{HO Motion Consistency \\  ($mm^2$, $\downarrow$)}  
        \\
        \midrule

        \multirow{9}{*}{GRAB} & GT & 0.50 &  1.33 & 0 & 0.51 
        \\ 
        ~ & Input & 4.48 & 5.25 & 13.29 & 881.23 
        \\ 
        \cmidrule(l{15pt}r{17pt}){3-6}
        
        ~ & TOCH & 2.09 & 2.17 & 3.12 & 20.37  
        \\ 

        ~ & TOCH (w/ MixStyle) & 2.28 & 2.62 & 3.10 & 21.29
        \\ 

        ~ & TOCH (w/ Aug.) &  {1.94} &  {2.04} & 3.16 & 22.58
        \\

        ~ & \model (w/o SpatialDiff) & 2.15$\pm$0.02 & 2.29$\pm$0.03 & 6.71$\pm$1.09 & 12.16$\pm$0.67 
        \\

        ~ & \model (w/o TemporalDiff) & \bred{0.86}$\pm$0.02 & \bred{1.54}$\pm$0.02 & {3.93}$\pm$0.31 & {9.36}$\pm$0.68  
        \\ 

        ~ & \model (w/o Diffusion) & \iblue{1.07} & \iblue{1.70} & \iblue{2.63}  & 10.05 
        \\ 

        ~ & \model (w/o Canon.) & 1.57$\pm$0.02 & 1.83$\pm$0.03 & 2.91$\pm$0.28  & \iblue{1.30}$\pm$0.03 
        \\ 

        ~ & \model & 1.22$\pm$0.01 & 1.72$\pm$0.01 & \bred{2.44}$\pm$0.18  & \bred{0.41}$\pm$0.01 
        \\ 
        \midrule

        \multirow{7}{*}{\makecell[c]{GRAB \\ (Beta)}} & GT & 0.50 & 1.33 & 0 & 0.51 
        \\ 
        ~ & Input & 2.19 &  4.77 & 5.83 & 27.58 
        \\
        \cmidrule(l{15pt}r{17pt}){3-6}

        ~ & TOCH & 2.33 &  2.77 & 5.60 & 25.05 
        \\

        ~ & TOCH (w/ MixStyle) & 2.01 &  2.63 & 4.65  & 17.37 
        \\

        ~ & TOCH (w/ Aug.) & {1.52} &  {1.86} & {3.07}  & {13.09}
        \\

        ~ & \model (w/o Diffusion) & 1.98 & 2.06 & \iblue{3.00}  & 11.99 
        \\ 

        ~ & \model (w/o Canon.) & \iblue{1.79}$\pm$0.02 & \iblue{1.73}$\pm$0.03 & 3.19$\pm$0.15  & \iblue{1.28}$\pm$0.03 
        \\ 

        ~ & \model & \bred{1.18}$\pm$0.00 & \bred{1.69}$\pm$0.01 & \bred{2.78}$\pm$0.14  & \bred{0.54}$\pm$0.00 
        \\ 
        \midrule

        \multirow{7}{*}{HOI4D} & Input & 2.26 & 2.47 & -  & 46.45 
        \\
        \cmidrule(l{15pt}r{17pt}){3-6}
        
        ~ & TOCH & 4.09 & 4.46 & - & 35.93
        \\ 

        ~ & TOCH (w/ MixStyle) & 4.31 & 4.96 & - & 25.67 
        \\ 

        ~ & TOCH (w/ Aug.) & 4.20 & 4.51 & -  &  25.85 
        \\

        ~ & \model (w/o Diffusion) & 3.16 & 3.83 & -  & 18.65 
        \\ 

        ~ & \model (w/o Canon.) & \iblue{2.37}$\pm$0.02 & \iblue{3.57}$\pm$0.03 & -  & \iblue{12.80}$\pm$0.79 
        \\ 

        ~ & \model & \bred{1.99}$\pm$0.02 & \bred{2.14}$\pm$0.02 & -  & \bred{9.75}$\pm$0.88 
        \\ 
        \midrule

        \multirow{4}{*}{ARCTIC} & GT & 0.33 & 0.92 & 0 & 0.41
        \\
        ~ & Input & 2.28 &  4.89 & 15.21 & 931.69
        \\
        \cmidrule(l{15pt}r{17pt}){3-6}

        ~ & TOCH & 1.84 & 2.01 & 4.31 & 18.50
        \\ 
        
        ~ & TOCH (w/ MixStyle) & 1.92 & 2.13 & \iblue{4.25} & \iblue{18.02}
        \\ 

        ~ & TOCH (w/ Aug.) & \iblue{1.75} & \iblue{1.98} & 5.64 & 22.57
        \\

        ~ & \model & \bred{1.35} $\pm$ 0.01 & \bred{1.91} $\pm$ 0.02  & \bred{2.69} $\pm$ 0.11  & \bred{0.85} $\pm$ 0.00
        \\ 
        
        \bottomrule
 
    \end{tabular}
    }
    \label{tb_exp_quant_cmp_ours_toch}
\end{table*} 


\subsection{HOI Denoising}  \label{sec_appen_hoi_denoising_exp}

For the first set of evaluation metrics, Table~\ref{tb_exp_quant_cmp_ours_toch_toch_metric} presents more evaluations on our method, ablated versions, and the comparisons to the baseline models than the table in the main text. 
For the second set of evaluation metrics, Table~\ref{tb_exp_quant_cmp_ours_toch} summarizes more results and comparisons. For stochastic denoising methods, including ``Ours'', ``Ours w/o Canon.'',  ``Ours w/o SpatialDiff'', and ``Ours w/o TemporalDiff'', we report the mean and the standard deviation of results obtained from three independent runs with the random seed set to 11, 22, and 77 respectively. Such statistics offer a more comprehensive view of the average results quality produced by those methods. Please notice that the evaluation method is different from the one present in the main text, where the result closest to the input trajectory among 100 independent runs is chosen to report evaluation metrics.

\noindent\textbf{More results on the GRAB test set -- novel interactions with new objects.} 
Figure~\ref{fig_denoising_cmp_grab} shows qualitative evaluations on the GRAB test set to compare the generalization ability of different denoising models towards novel interactions with unseen objects. 

\begin{figure*}[h]
  \centering
  \includegraphics[width=\textwidth]{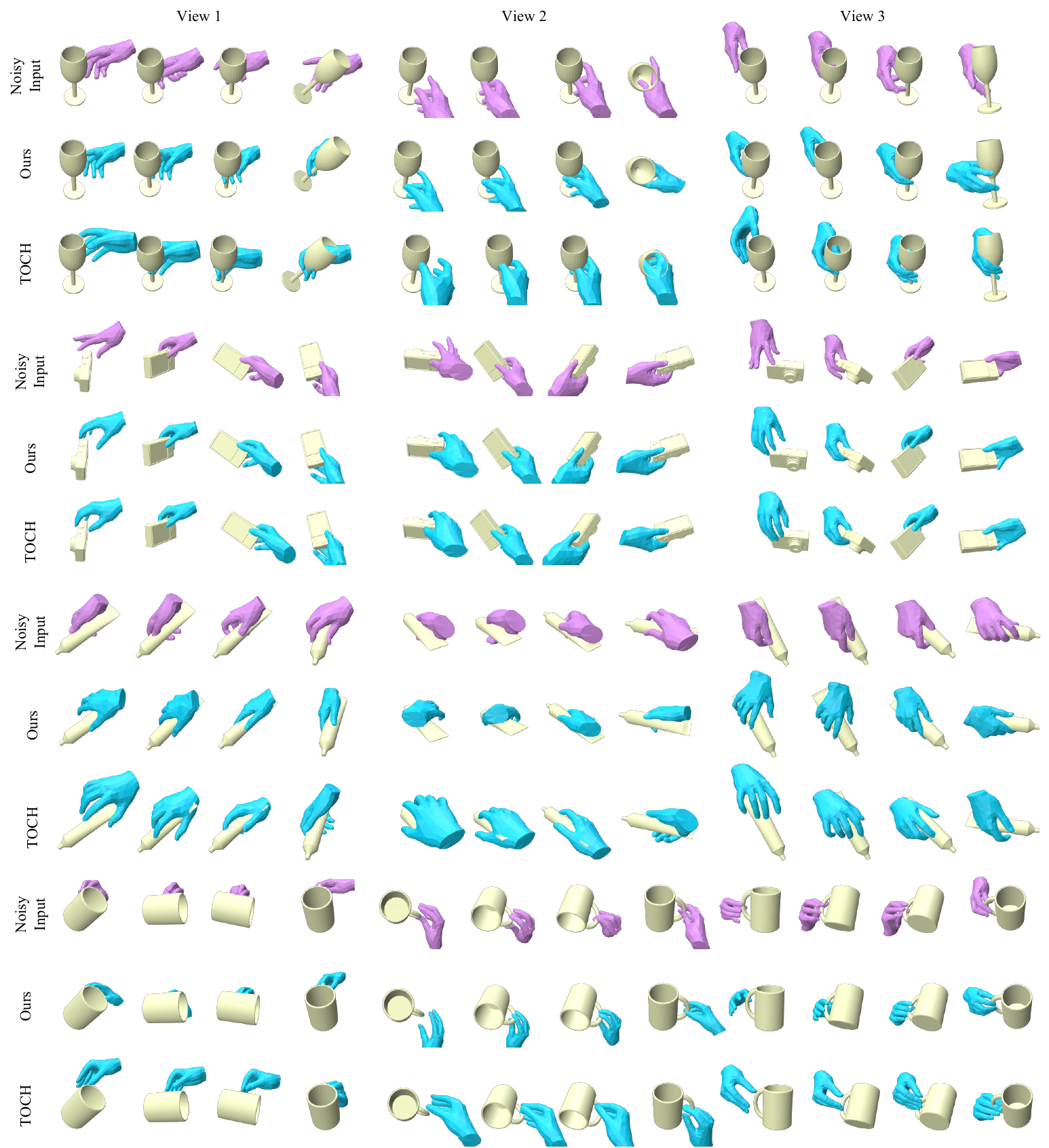}
  \caption{\textbf{Evaluation and comparisons on the GRAB test set.} 
  Input and denoised results are shown from three views via four keyframes in the time-increasing order.    Please refer to \textbf{our \href{https://meowuu7.github.io/GeneOH-Diffusion/}{website} and \href{https://youtu.be/ySwkFPJVhHY}{video}}  for animated results.
  }
  \label{fig_denoising_cmp_grab}
\end{figure*}

\noindent\textbf{More results on the GRAB (Beta) test set -- novel interactions with new objects and unseen synthetic noise patterns.} 
Figure~\ref{fig_denoising_cmp_grab_beta} shows qualitative evaluations on the GRAB (Beta) test set to compare the generalization ability of different denoising models towards unseen objects, unobserved interactions, and novel synthetic noise. 

\begin{figure*}[h]
  \centering
  \includegraphics[width=\textwidth]{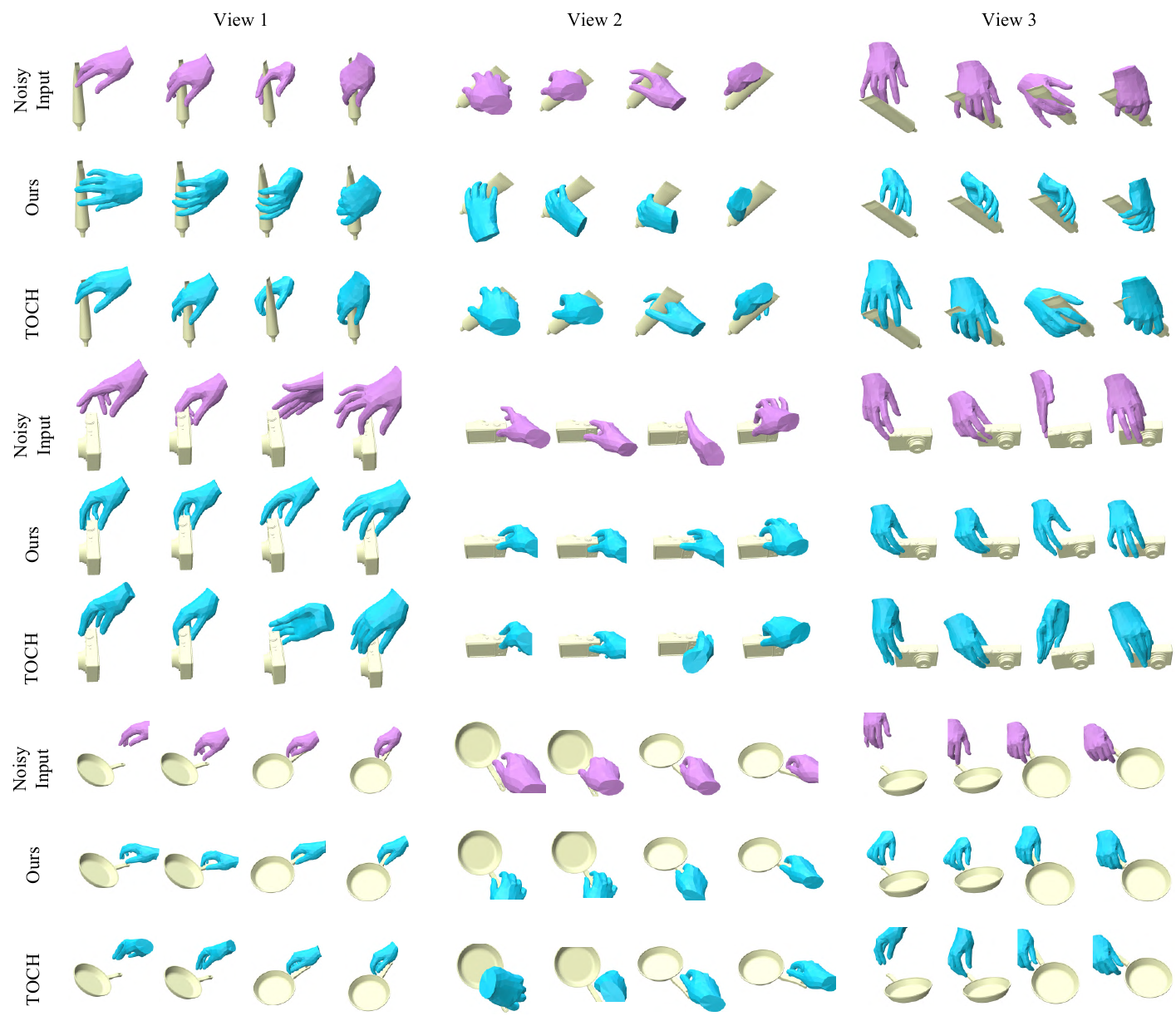}
  \caption{\textbf{Evaluation and comparisons on the GRAB (Beta) test set.} Please refer to  \textbf{our \href{https://meowuu7.github.io/GeneOH-Diffusion/}{website} and \href{https://youtu.be/ySwkFPJVhHY}{video}} for animated results.
  }
  \label{fig_denoising_cmp_grab_beta}
\end{figure*}

\noindent\textbf{More results on the HOI4D dataset -- novel interactions with new objects and unseen real noise patterns.} 
Figure~\ref{fig_hoi4d_res} shows qualitative evaluations on the HOI4D test set to compare the generalization ability of different denoising models towards unseen objects, unobserved interactions, and novel real noise. 

\begin{figure*}[htbp]
  \centering
  \includegraphics[width=\textwidth]{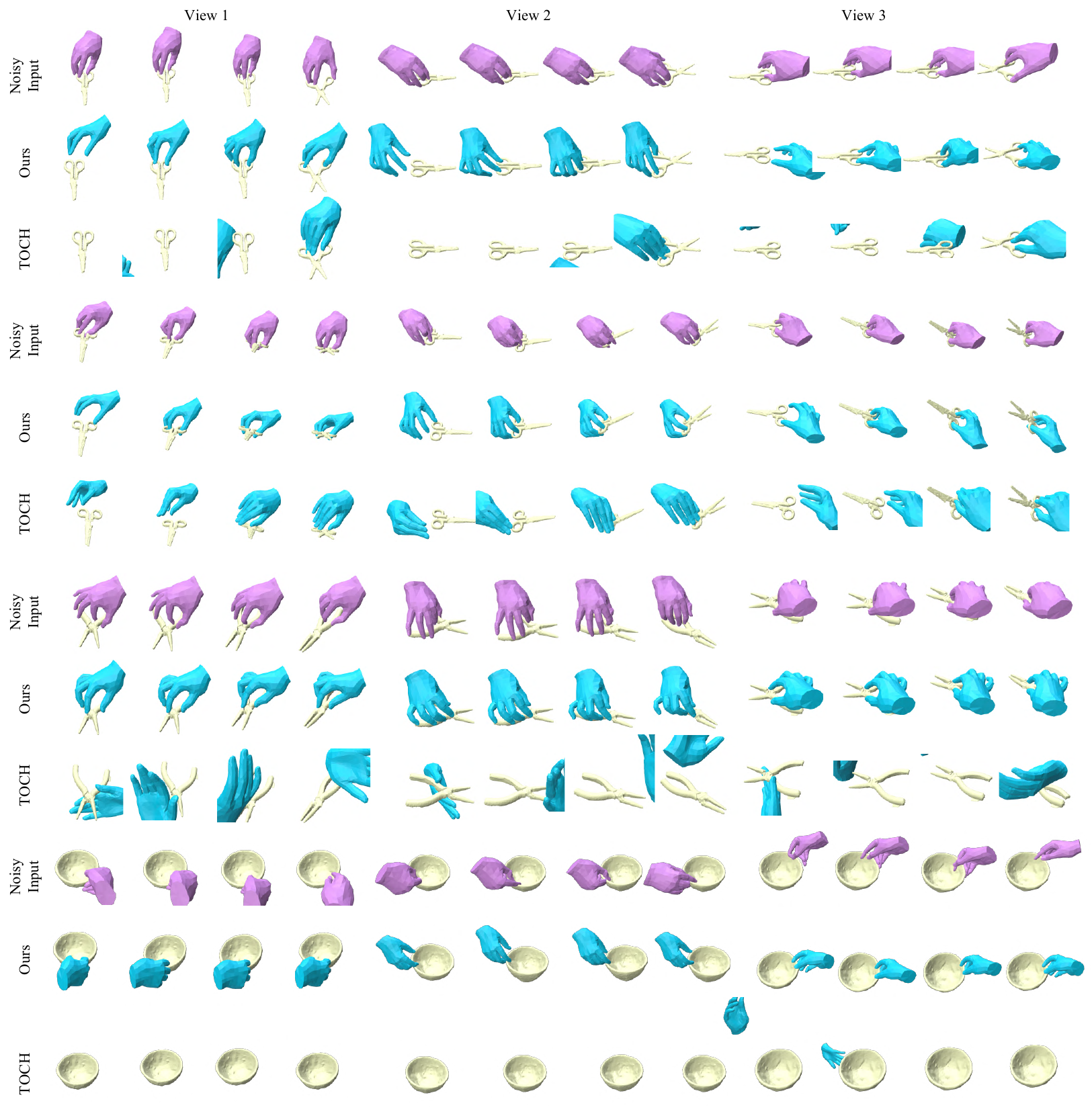}
  \caption{
  \textbf{Evaluation on the HOI4D dataset.} Please refer to \textbf{our \href{https://meowuu7.github.io/GeneOH-Diffusion/}{website} and \href{https://youtu.be/ySwkFPJVhHY}{video}}  for animated results.
  }
  \label{fig_hoi4d_res}
\end{figure*}

\begin{figure*}[h]
  \centering
  \includegraphics[width=\textwidth]{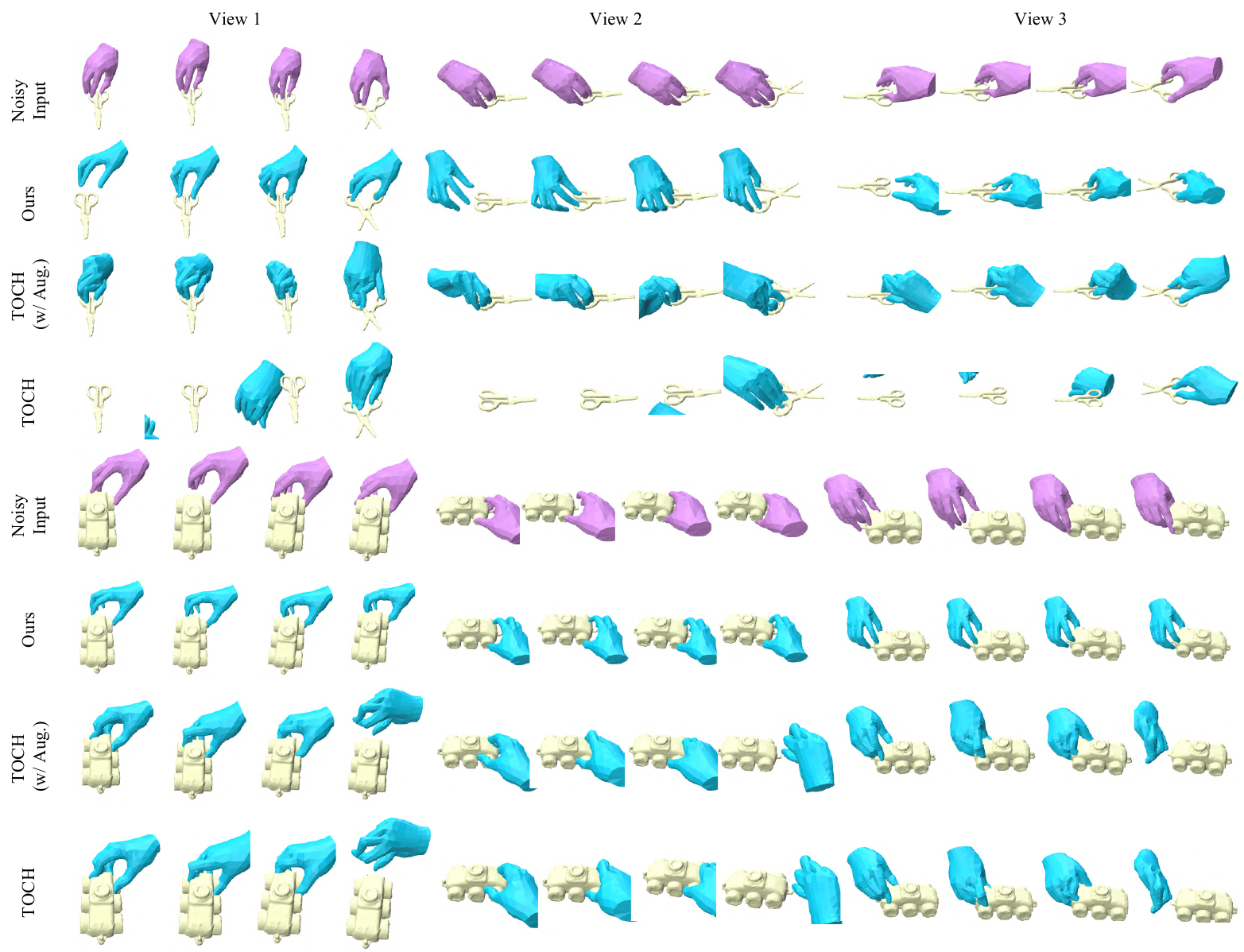}
  \caption{
  \textbf{Comparisons on the HOI4D dataset.}  We compare our method with the baseline TOCH and its improved version TOCH (w/ Aug.). 
  }
  \label{fig_hoi4d_cmp_ours_toch_tochaug}
\end{figure*}

\begin{figure}[h]
  \centering
  \includegraphics[width=0.60\textwidth]{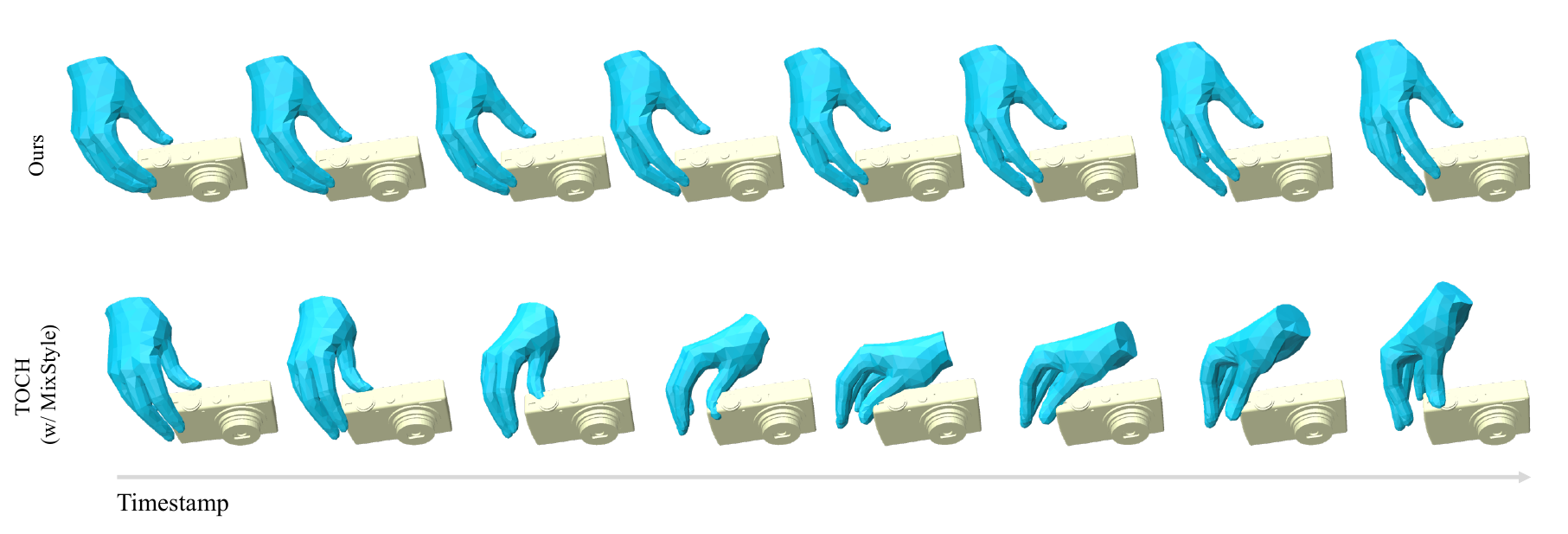}
  \caption{
  \textbf{Weird artifacts produced by TOCH (w/ MixStyle).} (\emph{First line}:) Ours result. (\emph{Second line}:) The result of  TOCH (w/ MixStyle). The noisy input is perturbed by noise sampled from a Beta distribution, different from that used in training. 
  }
  \label{fig_denoising_rolling_hands_toch_mixstyle}
\end{figure}

\noindent \textbf{Wired hand trajectory produced by TOCH (w/ MixStyle).} 
Through our experiments with TOCH on interaction sequences with new noise patterns unseen during training, we frequently observe strange hand trajectories from its results with stiff hand poses, unsmooth trajectories, and large penetrations as shown in Figure~\ref{fig_denoising_cmp_grab_beta} and~\ref{fig_denoising_cmp_grab_beta_hoi4d}. 
Such phenomena cannot be mitigated by augmenting it with general domain generalization techniques. 
Figure~\ref{fig_denoising_rolling_hands_toch_mixstyle} demonstrates that the improved version, TOCH with MixStyle, also yields similar unnatural results. This suggests that the novel noise distribution presents a challenging obstacle for the denoising model to generalize to new noisy interaction sequences. In contrast, our method does not have such difficulty in handling the shifted noise distribution.

\noindent \textbf{Results of TOCH and TOCH (w/ Aug.) on the HOI4D dataset.}  
Figure~\ref{fig_hoi4d_cmp_ours_toch_tochaug} compares the results of TOCH (w/ Aug.) with our method. In the example of opening a scissor, TOCH produces very strange ``flying hands'' trajectories, for which please refer to our video for an intuitive understanding. 
Though the results produced by the improved version do not exhibit the ``flying hands'' artifacts, it is still very strange, stiff, suffering from very unnatural hand poses, and cannot perform correct manipulations.  The results of TOCH and TOCH (w/ Aug.) on the ToyCar example are very similar since our experiments indeed get very similar results from such two models in this case. They both are troubled by strange hand shapes and very unnatural trajectories.

\noindent\textbf{More results on the ARCTIC dataset -- novel interactions with new objects involving dynamic object motions and changing contacts.} 
Figure~\ref{fig_arctic_res} shows qualitative evaluations on the ARCTIC test set to test the ability of our denoising model to clean noisy and dynamic interactions with changing contacts. 

Besides, we include samples of our results on longer sequences with bimanual manipulations in Figure~\ref{fig_arctic_long_res}. 

\begin{figure*}[htbp]
  \centering
  \includegraphics[width=\textwidth]{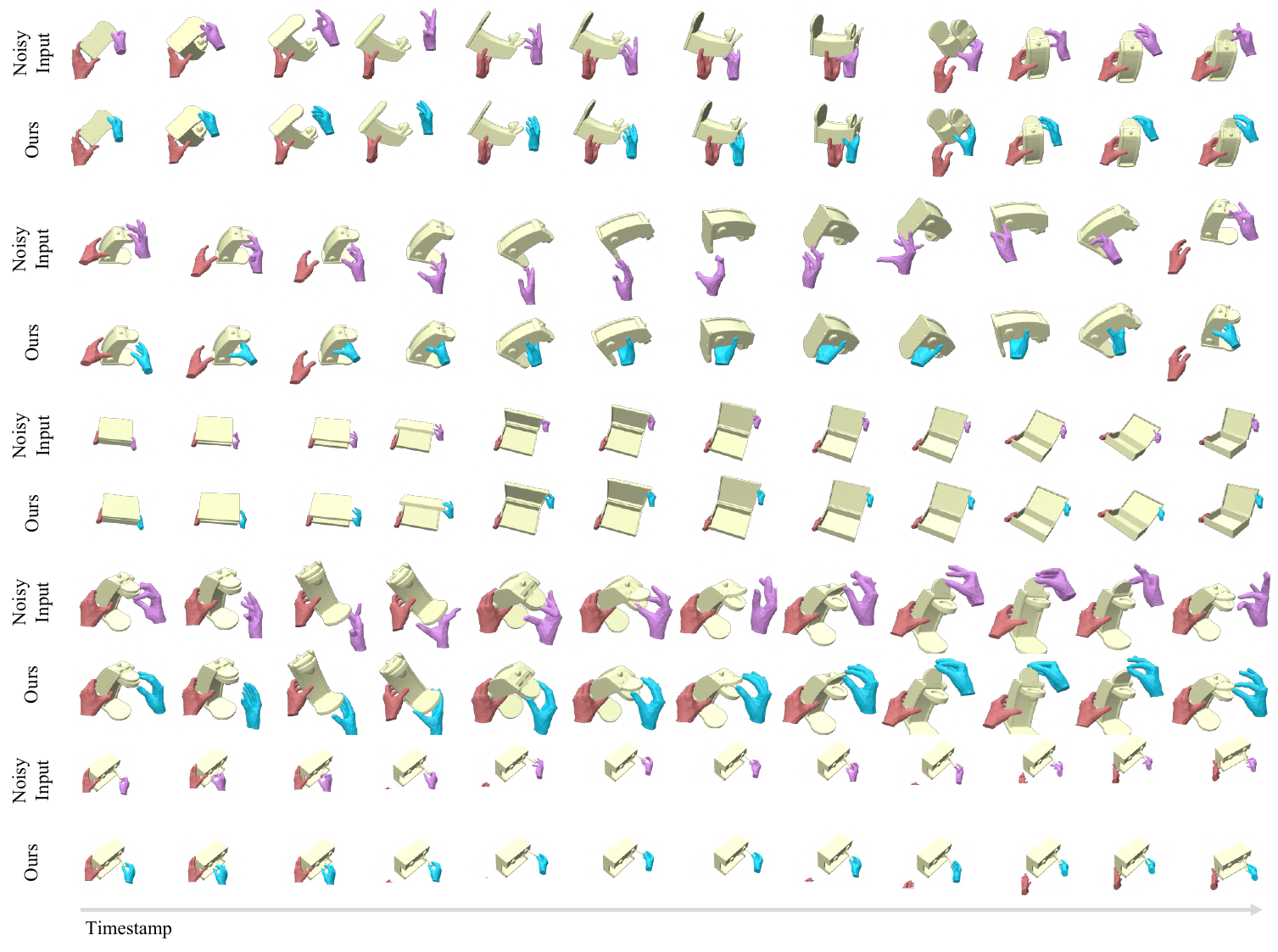}
  \caption{
  \textbf{Evaluation on the ARCTIC dataset.} 
  The model cleans noisy right hand trajectory here. 
  \textcolor{orange}{Left hands} shown in both the noisy input and the denoised trajectory are GT shapes. 
  Please refer to \textbf{our \href{https://meowuu7.github.io/GeneOH-Diffusion/}{website} and \href{https://youtu.be/ySwkFPJVhHY}{video}}  for animated results. 
  }
  \label{fig_arctic_res}
\end{figure*}

\begin{figure*}[htbp]
  \centering
  \includegraphics[width=\textwidth]{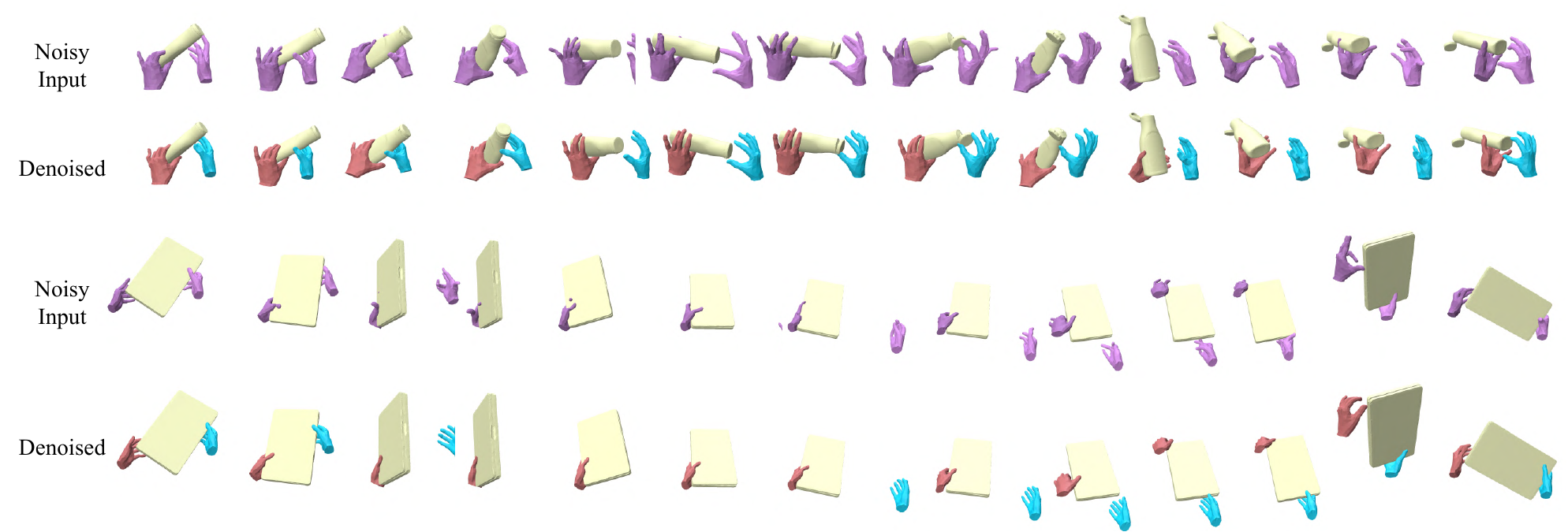}
  \caption{
  \textbf{Evaluation on long interaction sequences with bimanual manipulations.} 
  The model cleans both the noisy right hand trajectory and the noisy left hand trajectory here. 
  Please refer to \textbf{our \href{https://meowuu7.github.io/GeneOH-Diffusion/}{website} and \href{https://youtu.be/ySwkFPJVhHY}{video}}  for animated results.
  }
  \label{fig_arctic_long_res}
\end{figure*}

\noindent \textbf{Multi-state denoising v.s. one-stage denoising.} We leverage a multi-stage denoising strategy in this work to tackle the challenge posed by complex interaction noise. 
In Section~\ref{sec_genenoh_diffusion_details}, we demonstrate the stage-wise denoising strategy gradually projects the input trajectory from the manifold containing unnatural interactions, to the manifold of trajectories with natural hand motions, to the manifold with correct spatial relations, and to the manifold of trajectories with consistent temporal relations. One may question whether it is possible to use the last projection step only to project the input to the natural interaction manifold in a single step. Our experimental observations show the difficulty of removing such complex noise in one single stage. An effective mapping to clean such complex noise is very hard to learn for neural networks.

\begin{table*}[t]
    \centering
    \caption{ 
    \textbf{Quantitative evaluations of the model trained on the ARCTIC training set.} 
    \bred{Bold red} numbers for best values. 
    ``GT'' stands for ``Ground-Truth''. 
    } 
    \resizebox{1.0\textwidth}{!}{%
\begin{tabular}{@{\;}llccccccc@{\;}}
        \toprule
        Dataset & Method & \makecell[c]{MPJPE \\ ($mm$, $\downarrow$)} & \makecell[c]{MPVPE \\ ($mm$, $\downarrow$)}  &  \makecell[c]{C-IoU \\  (\%, $\uparrow$)}  &  \makecell[c]{IV \\ ($cm^3$, $\downarrow$)} & \makecell[c]{Penetration Depth \\ ($mm$, $\downarrow$)} & \makecell[c]{Proximity Error \\ ($mm$, $\downarrow$)}  &  \makecell[c]{HO Motion Consistency \\  ($mm^2$, $\downarrow$)}  
        \\
        \cmidrule(l{0pt}r{0pt}){1-2}
        \cmidrule(l{1pt}r{0pt}){3-5}
        \cmidrule(l{0pt}r{0pt}){6-9}

        \multirow{4}{*}{GRAB} & GT & - & - & - & 0.50 &  1.33 & - & 0.51 
        \\ 
        ~ & Input & 23.16 & 22.78 & 1.01 & 4.48 & 5.25 & 13.29 & 881.23 
        \\ 
        \cmidrule(l{15pt}r{17pt}){3-9}

        ~ & \model (GRAB) & \bred{9.28} & \bred{9.22}  & \bred{25.27} & \bred{1.23} & \bred{1.74} & \bred{2.53}  & {0.57}
        \\ 
        ~ & \model (ARCTIC) & {11.47} & {11.29}  & {24.79} & {1.48} & {1.80} & {2.60}  & \bred{0.55}
        \\ 
        \midrule

        \multirow{3}{*}{HOI4D} & Input & - & - & - & 2.26 & 2.47 & -  & 46.45 
        \\
        \cmidrule(l{15pt}r{17pt}){3-9}

        ~ & \model (GRAB) & - & - & - & {1.99} & {2.15} & -  & {9.81}
        \\ 

        ~ & \model (ARCTIC) & - & - & - & \bred{1.54} & \bred{1.96} & -  & \bred{9.33}
        \\ 
        \midrule

        \multirow{4}{*}{ARCTIC} & GT & - & - & - & 0.33 & 0.92 & 0 & 0.41
        \\
        ~ & Input & 25.51 & 24.84  & 1.68 & 2.28 &  4.89 & 15.21 & 931.69
        \\
        \cmidrule(l{15pt}r{17pt}){3-9}

        ~ & \model (GRAB) & {11.57 } & {11.09}  & {23.49} &  {1.35}  & {1.93}  & {2.71}  & \bred{0.92}
        \\ 

        ~ & \model (ARCTIC) & \bred{10.34 } & \bred{10.07}  & \bred{25.08} &  \bred{1.21}  & \bred{1.64}  & \bred{2.62}  & {1.10}
        \\ 
        
        \bottomrule
 
    \end{tabular}
    }
    \label{tb_exp_quant_cmp_from_arctic_to_others}
\end{table*}

\begin{figure}[h]
  \centering
  \includegraphics[width=\textwidth]{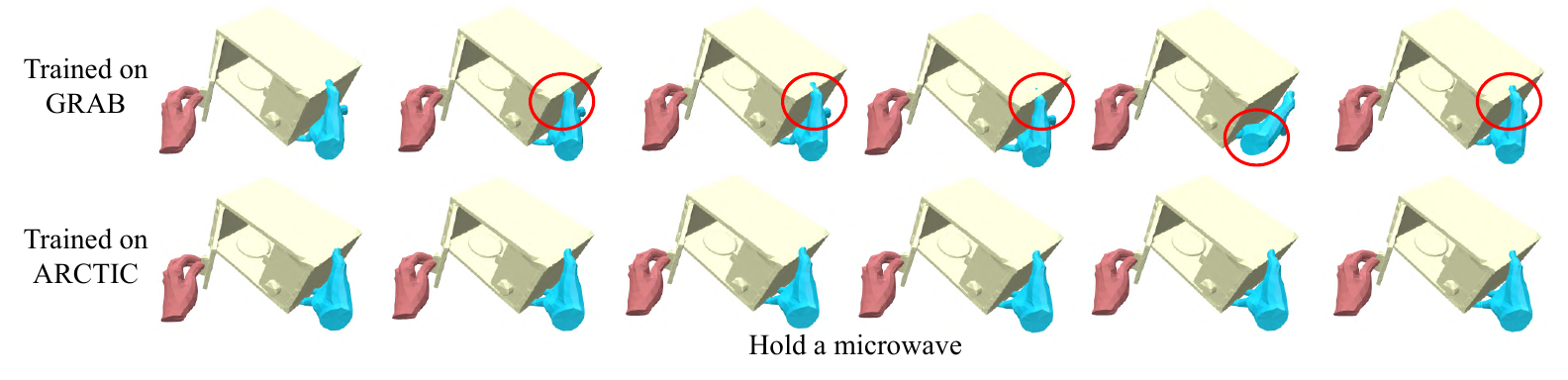}
  \caption{
  \textbf{Comparisons between the model trained on ARCTIC and the one trained on GRAB.} The model trained on ARCTIC training set can generalize to the corresponding test sequences more easily thanks to the reduced domain gap. 
  }
  \label{fig_arctic_to_arctic}
\end{figure}

\noindent \textbf{Generalize from ARCTIC to other datasets.} 
To further evaluate the generalization ability of our method, we conduct a new series of experiments where we train the model on the ARCTIC dataset  (see the Section~\ref{sec_exp_details} for data splitting and other settings)  and evaluate on GRAB, HOI4D, and ARCTIC (test split). Table~\ref{tb_exp_quant_cmp_from_arctic_to_others} contains its performance. We can observe that though our model trained on the GRAB can generalize to ARCTIC with good performance, the reduced domain gap when using ARCTIC as the training set can really improve the performance. For instance, Figure~\ref{fig_arctic_to_arctic} shows that the model trained on the ARCTIC training set can perform obviously better on examples where the model trained on GRAB would struggle (please see Section~\ref{sec_supp_faliure_cases} for the discussion on failure case). 
For the sequence where the hand needs to open wide to hold the microwave, the model trained on GRAB cannot clean the noisy very effectively, producing results with obvious penetrations and the unnatural hand trajectory with instantaneous shaking. However, the model trained on the ARCTIC dataset can eliminate such noise and produce a much natural trajectory. 
Besides, training on this dataset can benefit the model's performance on the HOI4D dataset with articulated objects and articulated motions.


\subsection{Ablation Studies} \label{seq_abalations_redudies}

This section includes more ablation study results to complement the selected results in the main text. Table~\ref{tb_exp_quant_cmp_ours_toch_toch_metric} and~\ref{tb_exp_quant_cmp_ours_toch} present a more comprehensive quantitative evaluation of ablated models and the comparisons to the full model. 

\noindent\textbf{Generalized contact-centric parameterizations.} 
Apart from the results present in Table~\ref{tb_exp_quant_cmp_ours_toch_toch_metric} and~\ref{tb_exp_quant_cmp_ours_toch}, Figure~\ref{fig_abl_canon_rep} gives and visual example where the ablated version without such contact-centric design cannot generalize well to the manipulation sequence with large object movements. We can still observe obvious penetrations from all three frames present here.

\begin{figure}[ht]
  \centering
  \includegraphics[width=0.60\textwidth]{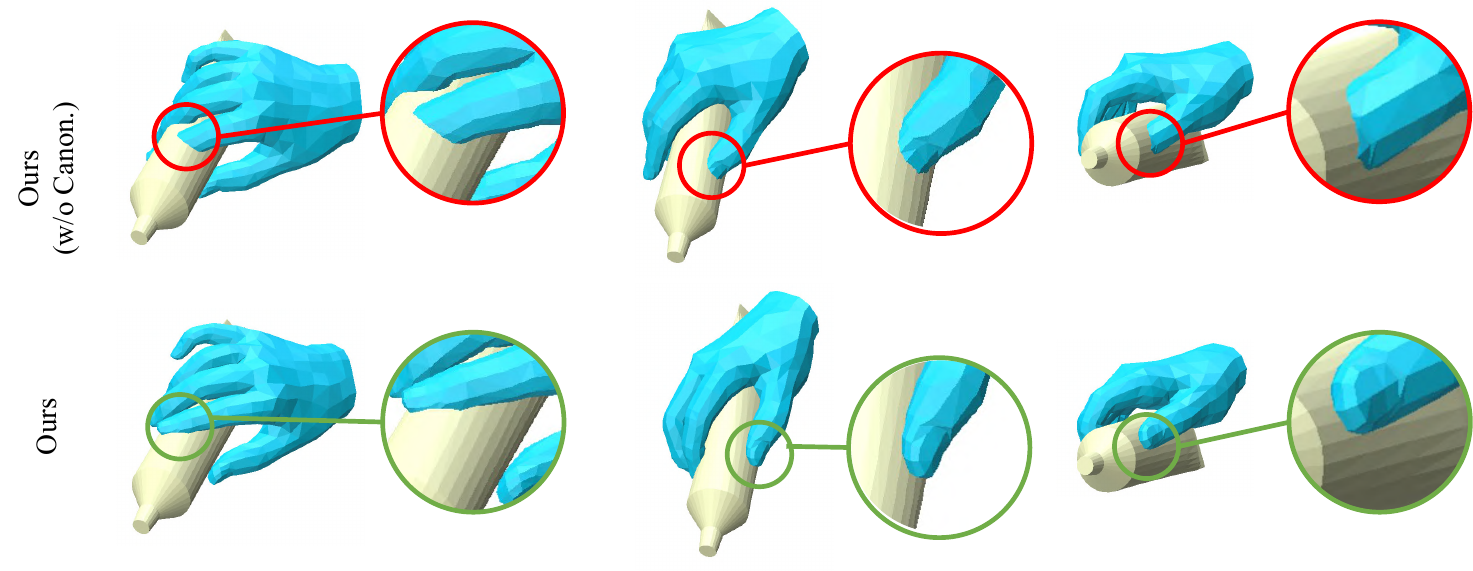}
  \caption{
  \textbf{Effectiveness of the generalized contact-centric parameterization.} 
  (\emph{First line}:) Results of Ours (w/o Canon.). (\emph{Second line}:) Results of our full model.  
  }
  \label{fig_abl_canon_rep}
\end{figure}

\begin{figure}[ht]
  \centering
  \includegraphics[width=0.60\textwidth]{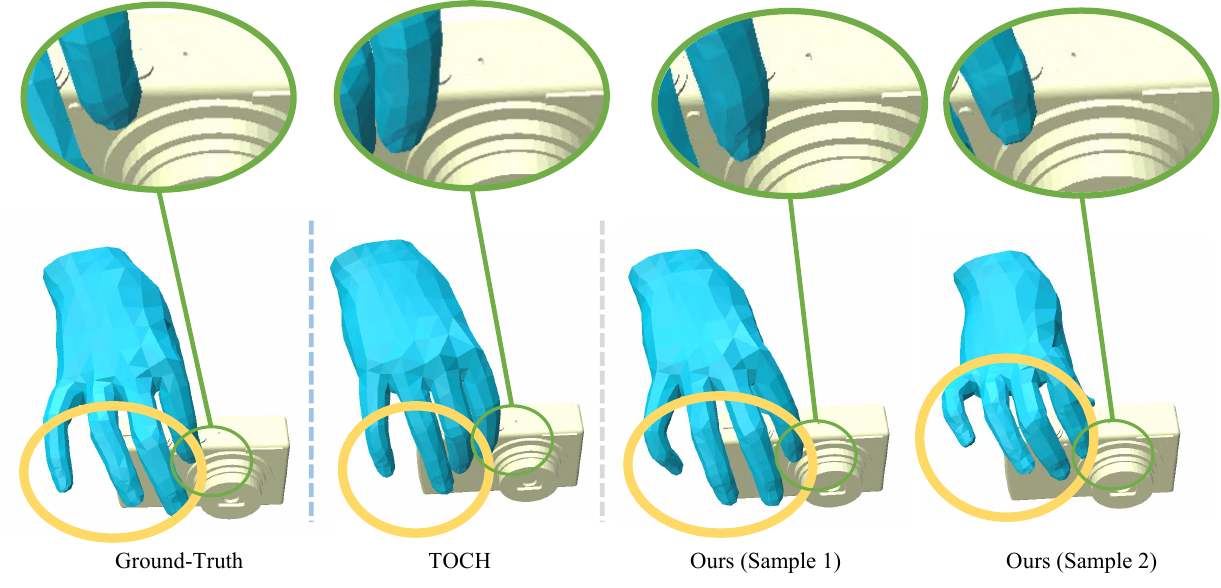}
  \caption{\textbf{Comparisons between our model and TOCH on the ability to recover ground-truth interactions.} We can explore a wide space that encompasses the sample with hand poses (highlighted in \textcolor{yellow}{yellow circles} ) and contacts (in \textcolor{green}{green circles}) close to the ground truth.  We can model high-frequency poses. However, TOCH's result contains plain poses and cannot recover bending fingers exhibited in the ground-truth shape. 
  }
  \label{fig_faithfulness_cmp_ours_toch}
\end{figure}

\noindent\textbf{Denoising capability on recovering ground-truth and modeling high-frequency pose details.}
Together with the ability to model various solutions, the stochastic denoising process also empowers the model to explore a broad space that is more likely to encompass samples close to the ground-truth sequences. 
Figure~\ref{fig_faithfulness_cmp_ours_toch} shows that ours is more faithful to the ground-truth sequence than the result of TOCH, regarding both recovered hand poses and the contact information. 

Besides, taking advantage of the power of our HOI representation \rep and the novel denoising scheme, we are able to model high-frequency shape details in the results. However, TOCH's results would exhibit flat hand poses frequently. This may result from its high-dimensional representations and the limited ability of the denoising strategy, which models the deterministic, one-step noise-to-data mapping relation.

\subsection{Applications} \label{sec_appen_applications}

This section presents more applications of the denoising model. 

\noindent \textbf{Refining noisy grasps produced by the generation network.} 
In addition to refining noisy interaction sequences, our method can serve as an effective post-processing tool to refine implausible static grasps produced by the generation network as shown in Figure~\ref{fig_all_applis}. Examples shown here are grasps taken from interaction results produced by~\citep{wu2022saga}.

Apart from the denoising ability, the denoised data with high-quality interaction sequences and static grasps can further aid a variety of downstream tasks. Here we take the grasp synthesis and the manipulation synthesis task as an example. 


\begin{figure}[h]
  \centering
  \includegraphics[width=0.60\textwidth]{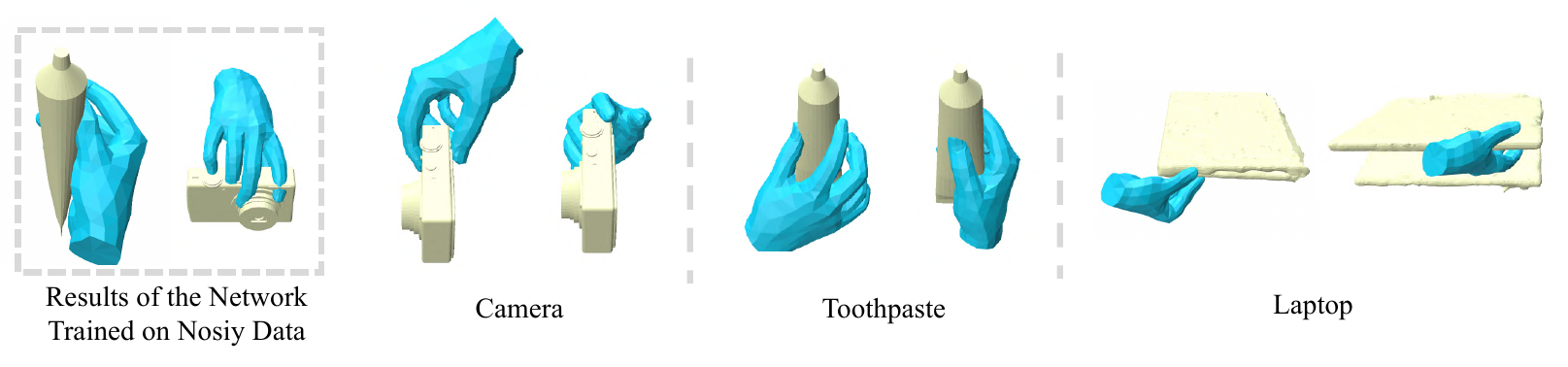}
  \caption{
  \textbf{Grasp synthesis.} Synthesized grasps for unseen objects. 
  }
  \label{fig_appli_grasp_synthesis}
\end{figure}

\begin{figure}[h]
  \centering
  \includegraphics[width=0.60\textwidth]{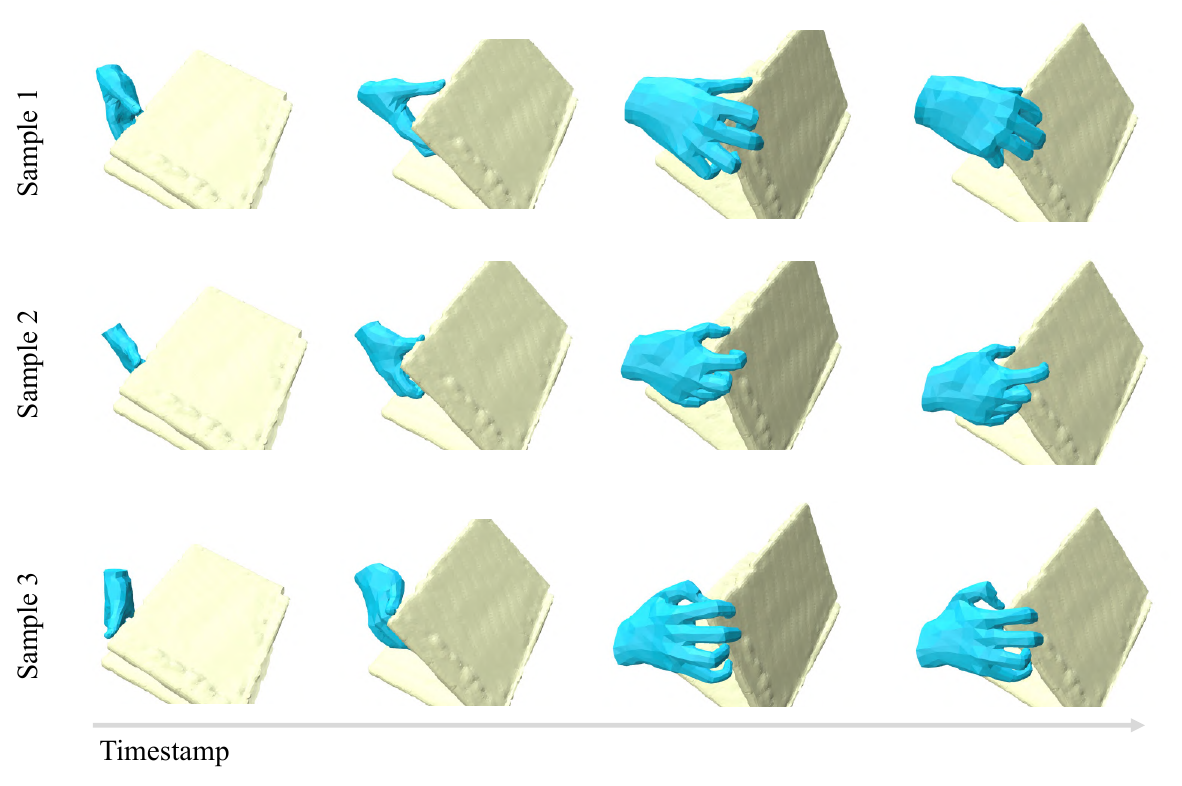}
  \caption{
  \textbf{Manipulation synthesis.} Synthesized manipulation sequences for the unseen laptop object. Frames shown here from left to right are in a time-increasing order. 
  }
  \label{fig_appli_manip_synthesis}
\end{figure}

\noindent \textbf{Grasp synthesis.}
We select four objects and their corresponding grasping poses from the GRAB test set to train the synthesis network. Then we use the network to generate grasps for unseen objects. The results shown in Figure~\ref{fig_appli_grasp_synthesis} are natural and contact-aware. In contrast, the generated grasps are not plausible as shown in the leftmost part of Figure~\ref{fig_appli_grasp_synthesis}. 

\noindent \textbf{Manipulation synthesis.}
We further examine the quality of the denoised interaction data via the manipulation synthesis task. Based on the representations and the network architecture proposed in a recent manipulation synthesis work\footnote{\href{https://github.com/cams-hoi/CAMS}{https://github.com/cams-hoi/CAMS}}, we train a manipulation synthesis network using our denoised data. The network then takes a new object sequence as input to generate the corresponding manipulation sequence. As shown in Figure~\ref{fig_appli_manip_synthesis}, the quality of our data is well suited for a learning-based synthesis model. It can generate diverse, high-quality manipulation sequences for an unseen object trajectory. 

The above two applications indicate the potential value of our denoising model in aiding high-quality interaction dataset creation.

\subsection{Failure Cases} \label{sec_supp_faliure_cases}
Figure~\ref{fig_failure_cases} summarizes the failure cases. 
Our method may sometimes be unable to perform very well in the following situations: 1) When the hand needs to open wide to hold the object, the canonicalized hand trajectories and the hand-object spatial relations canonicalized around the interaction region may be extremely novel to the denoising model. The model then cannot fully clean penetrations from the observations. 2) When the noisy input contains very strange hand motions such as the sudden detachment and grasping presented in Figure~\ref{fig_failure_cases}, the model can remove such artifact but still cannot clean the trajectory perfectly, leaving us remaining penetrations shown in the denoised result. 3) When the hand is opening an unseen object with extremely thin geometry, we may still observe subtle penetrations from the results. 

\begin{figure}[h]
  \centering
  \includegraphics[width=\textwidth]{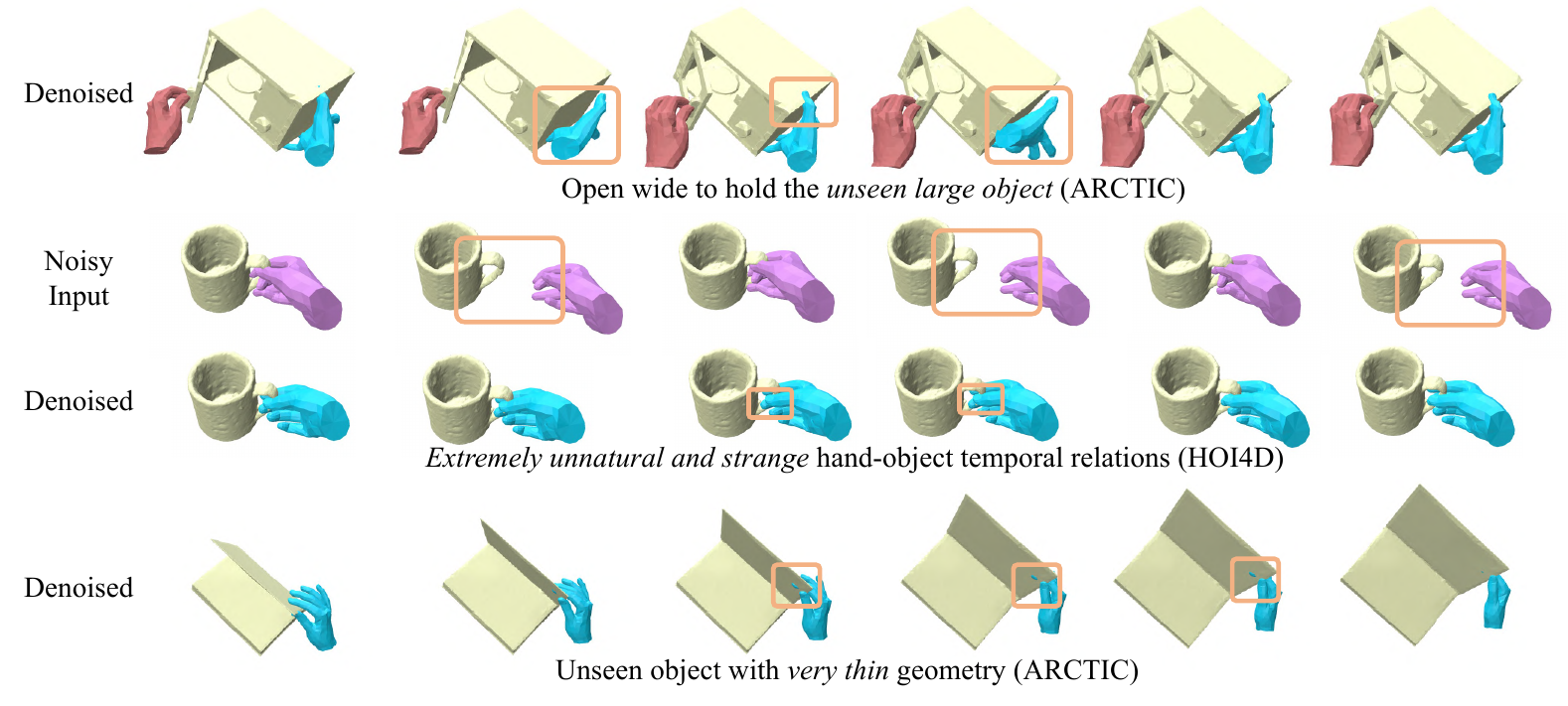}
  \caption{
  \textbf{Failure cases} caused by the \textit{unseen and large object}, \textit{very strange hand-object temporal relations}, and \textit{unseen object with extremely thin geometry}.  
  }
  \label{fig_failure_cases}
\end{figure}


\subsection{Analyzing the Distinction between Noise in Real Hand-Object Interaction Trajectories and Artificial Noise} \label{sec_appen_noise_real_arti}
From the visualization and animated results shown on the website and the video, the distinctions between the  noise exhibited in real noisy hand-object interaction trajectories (\emph{e.g.,} hand trajectories from the noisy HOI4D dataset and hand trajectories estimated from interaction videos) and the artificial noise could be summarized as follows:
\begin{itemize}
    \item The trajectories in HOI4D always present unnatural hand poses, jittering motions, missing contacts, and large penetrations;
    \item  Retargeed hand motions always suffer from large penetrations; 
    \item The hand trajectories estimated from HOI videos are usually with penetrations and missing contacts; 
    \item A common feature is that real noisy hand trajectories always present time-consistent artifacts. However, noisy trajectories with artificial noise added independently onto each frame usually present time-varying penetrations and unnatural poses. 
\end{itemize}
Besides, as summarized in Table~\ref{tb_exp_quant_cmp_ours_tot}, the differences between different kinds of noise patterns can be revealed by comparing various metrics calculated on their input noisy trajectories, including metrics to reveal penetrations (IV, penetration depth), hand-object proximity (C-IoU, Proximity Error), and motion consistency. 

\noindent\textbf{Further analysis.} 
We further visualize the difference ($\hat{\mathbf{\theta}} - \mathbf{\theta}^{gt}$) between noisy hand mano pose parameters ($\hat{\mathbf{\theta}}$) and the GT values (${\mathbf{\theta}}^{gt}$) obtained from the trajectories estimated from videos (the noisy input of the application on cleaning hand trajectory estimations, Sec.~\ref{sec:applications}), the difference between the mano pose parameters with artificial Gaussian noise ($\hat{\mathbf{\theta}}^{\mathbf{n}}$) and the GT values, and the differences between the parameters with artificial noise drawn from the Beta distribution ($\hat{\mathbf{\theta}}^{\mathbf{b}}$). By projecting them into the 2-dimensional plane using the PCA algorithm (implemented in the scikit-learn package), we visualize their positions from 256 examples in Figure~\ref{fig_pca_noise}. 
\begin{figure}[h]
  \centering
  \includegraphics[width=0.5\textwidth]{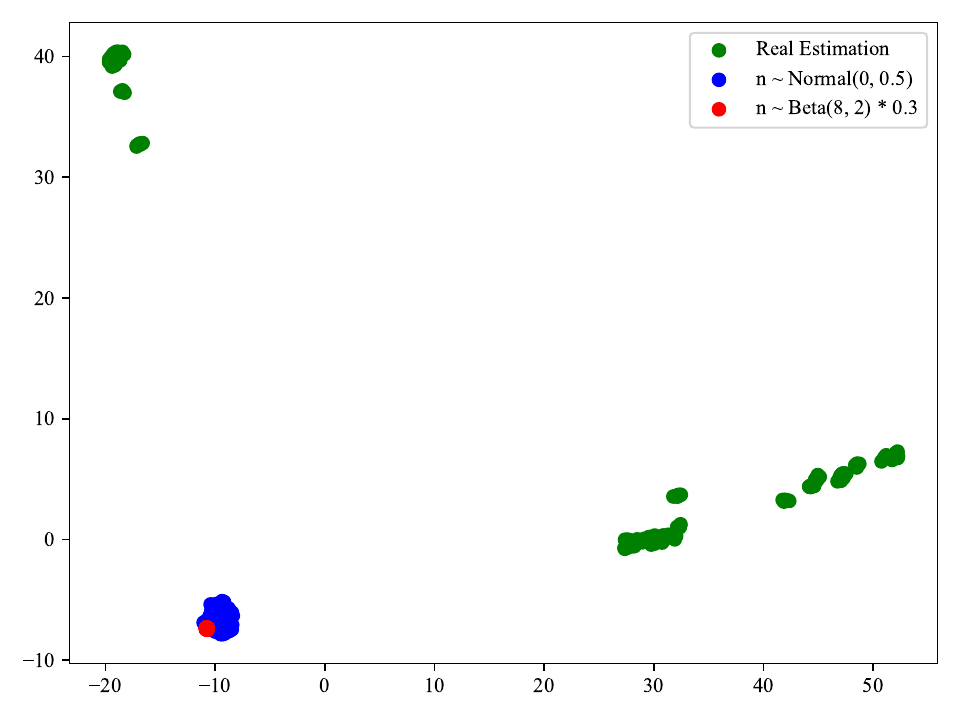}
  \caption{
  Visualization on the differences between the mano pose parameters of hand trajectories estimated from videos and the GT values, the difference between the mano pose parameters with artificial Gaussian noise ($\hat{\mathbf{\theta}}^{\mathbf{n}}$) and the GT values, and the differences between the parameters with artificial noise drawn from the Beta distribution ($\hat{\mathbf{\theta}}^{\mathbf{b}}$). The analysis is conducted on 256 trajectories. For each trajectory, the difference vectors across all frames are concatenated together and flattened to a single vector. 
  }
  \label{fig_pca_noise}
\end{figure}
As we can see, the real noise pattern is very different from artificial noise. In this case, the noise of the hand trajectory estimated from videos further exhibits instance-specific patterns. 

\subsection{User Study} \label{sec_appen_user_study}
To better access and compare the quality of our denoised results to those of the baseline model, we conducted a toy user study. 
We set up a website containing our denoised results and TOCH's results on 18 noisy trajectories in a randomly permutated order. 
Twenty people who are not familiar with the task or even have no background in CS are asked to rate each clip a score from 1 to 5, indicating their preferences. 
Specifically, ``1'' indicates a significant difference between the hand motion demonstrated in the video and the human behavior, with obvious physical unrealistic phenomena such as penetrations and motion inconsistency; ``3'' represents the demonstrated motion is plausible and similar to the human behavior, but still suffer from physical artifacts; ``5'' means a high-quality motion which is plausible with no flaws and is human-like. 
``2'' means the quality is better than ``1'' but worse than ``3''. 
Similarly, ``4'' means the result is better than ``3'' but worse than ``5''. 

For each clip, we calculate the average score achieved by our method and TOCH. The average and medium scores across all clips are summarized in Table~\ref{tb_exp_user_study}. Ours is much better than the baseline model. 
\begin{table*}[t]
    \centering
    \caption{ 
    \textbf{User study.} 
    } 
\begin{tabular}{@{\;}lcc@{\;}}
        \toprule
        ~ & GeneOH Diffusion & TOCH 
        \\
        \midrule
        \multirow{1}{*}{Average Score} & \textbf{3.96} & 1.98 
        \\ 
        \midrule
        \multirow{1}{*}{Medium Score} & \textbf{4.00} & 1.55
        \\ 
        \bottomrule
    \end{tabular}
    \label{tb_exp_user_study}
\end{table*} 

\section{Experimental Details} \label{sec_exp_details}

\subsection{Datasets}  \label{sec_appen_datasets}

\noindent\textbf{GRAB training set.} 
This is the training set used in all experiments presented in the \textbf{main text}. 
We follow the cross-object splitting strategy used in~\citep{zhou2022toch} to split the GRAB~\citep{taheri2020grab} dataset. The training split, containing 1308 manipulation sequences, is used to construct the training dataset. 
We also filter out frames where the hand wrist is more than 15 cm away from the object.
For each training sequence, we slice it into clips with 60 frames to construct the training set. Sequences with a length of less than 60 are not included for training or testing. 
For models where noisy sequences are required during training, we create the noisy sequence from the clean sequence by adding Gaussian noise to the MANO parameters. Specifically, the Gaussian noise is added to the hand MANO translation, rotation, and pose parameters, with standard deviations of 0.01, 0.1, and 0.5, respectively.

\noindent\textbf{ARCTIC training dataset.} 
It is the training set of all models in the experiments where we wish to generalize the model trained on ARCTIC to other datasets. 
Based on the publicly available sequences from the subject with the index ``s01'', ``s02'', ``s04'', ``s05'', ``s06'', ``s07'', ``s08'', ``s09'', ``s10'', we take the manipulation sequences from ``s01'' for evaluation and those of other subjects for training. 
For each sequence, we slice it into small clips with a window size equal to 60 and the step size set to 60. We filter out clips where the maximum distance from the wrist to the nearest object point is larger than 15cm. 
The number of all training clips is 2524. 
Only the right hand trajectory is used for training. 


The following text contains more details about the four distinct test sets for evaluation, 
namely GRAB, GRAB (Beta), HOI4D, and ARCTIC. 

\noindent \textbf{GRAB}~\citep{taheri2020grab}.
The test split of the GRAB dataset, containing 246 manipulation sequences, is used to construct the test set. For each test sequence, we slice it into clips with  60 frames using the step size 30. For each test sequence, the noisy sequence is also created by adding Gaussian noise to the MANO parameters. The Gaussian noise is added to the hand MANO translation, rotation, and pose parameters, with standard deviations of 0.01, 0.1, and 0.5, respectively.

\noindent \textbf{GRAB (Beta).}
The GRAB (Beta) test set is constructed from manipulation sequences from the GRAB test split. For each sequence, the noisy sequence is created by adding noise from the Beta distribution ($B(8,2)$) to the MANO parameters. Specifically, we randomly sample noise from the Beta distribution ($B(8,2)$). Then the sampled noise was by 0.01, 0.05, 0.3 to get noise vectors added to the translation, rotation, and hand pose parameters respectively. 

\noindent \textbf{HOI4D}~\citep{liu2022hoi4d}.
For the HOI4D dataset, we select interaction sequences with humans manipulating objects from \nnarticat articulated categories, including Laptop, Scissors, and Pliers, and \nnrigidcat rigid datasets, namely Chair, Bottle, Bowl, Kettle, Mug, and ToyCar, for test. The number of instances included in each category is detailed in Table~\ref{tb_number_of_instances}. For each sequence, we specify the starting frame and take the clip with the length of 60 frames starting from it as the test clip. The starting frame set for each category is listed in Table~\ref{tb_number_of_instances}. 

\noindent \textbf{ARCTIC}~\citep{fan2023arctic}.
The ARCTIC test set for evaluation takes the right hand only since we observe that dexterous manipulation such as articulated manipulations is always conducted by the right hand. For instance, as shown in the example in Figure~\ref{fig_denoising_cmp_grab_beta_hoi4d}, the left hand holds the capsule machine with no contact change during the manipulation while the right hand first touches the lid, then touches the base, and then opens and close the lid. 
\textit{However, we can refine the left hand motions as well}, as demonstrated in the second sequence of the ``refining estimation from video'' example shown in Figure~\ref{fig_intro_teaser}. 
The manipulation sequences from ``s01'', 34 in total, are taken for evaluation. 
For each test sequence, the quantitative results are evaluated from clips with the window size 60 sliced from each sequence using the step size 30. The filtering strategy similar to that used for constructing the training dataset is applied to the test clips as well. 
The default length of the clips used in the qualitative evaluation is 90, which is the composed result of two adjacent clips with a window size of 60. 
The noisy sequence is obtained by adding Gaussian noise to the right hand MANO parameters. 
Specifically, the Gaussian noise is added to the hand MANO translation, rotation, and pose parameters, with standard deviations of 0.01, 0.05, and 0.3, respectively.

Since the ARCTIC's object template meshes do not provide vertex normals, which are demanded both in our method and some baseline models, we use the ``compute\_vertex\_normals'' function implemented in Open3D~\citep{Zhou2018open3d} for computing the vertex normals.

\subsection{Metrics} \label{sec_exp_details_metrics}
We include two sets of evaluation metrics. 
The first set follows the evaluation protocol of previous works~\citep{zhou2022toch} and focuses on assessing the model's capability to recover the GT trajectories from noisy observations, as detailed in the following. 

\noindent\textbf{Mean Per-Joint Position Error (MPJPE).} 
It calculates the average Euclidean distance between the denoised 3D hand joints and the corresponding ground-truth joints.

\noindent\textbf{Mean Per-Vertex Position Error (MPVPE).} It measures the average Euclidean distance between the denoised 3D hand vertices and the corresponding ground-truth vertices. 


\noindent\textbf{Contact IoU (C-IoU).}  This metric assesses the similarity between the refined contact map and the ground-truth contact map.  The binary contact maps are obtained by thresholding the correspondence distance within $\pm 2$mm. For our method, which does not rely on correspondences introduced in~\citep{zhou2022toch}, we utilize the computing process provided by~\citep{zhou2022toch} to compute the correspondences.

To measure whether the denoised trajectory exhibits natural hand-object spatial relations and consistent hand-object motions, we introduce the second set of evaluation metrics. 

\noindent\textbf{Solid Intersection Volume (IV).}  We evaluate this metric following~\citep{zhou2022toch}.  It quantifies hand-object inter-penetrations. By voxelizing the hand mesh and the object mesh, we calculate the volume of their intersected region as the intersection volume. 

\noindent\textbf{Per-Vertex Maximum Penetration Depth (Penetration Depth).} For each frame, we calculate the maximum penetration depth of each hand vertex into the object. We then average these values across all frames to obtain the per-vertex maximum penetration depth.

\noindent\textbf{Proximity Error.} The metric is only evaluated on datasets with ground-truth references, including GRAB, GRAB (Beta), and ARCTIC. For each vertex of the denoised hand mesh, we compute the difference between its minimum distance to the object points and the corresponding ground-truth vertex's minimum distance to the object points. The proximity error is obtained by averaging these differences over all vertices. The overall metric is obtained by averaging the per-frame metric over all frames. 
Specifically, let $d_{k,min}^{\mathbf{h}}$ denote the minimum distance from the hand keypoint $\mathbf{h}_k$ at the frame $k$ to objects points. Formally, it is defined as  $d_{k, min}^{\mathbf{h}} = \min \{ d_k^{\mathbf{ho}} = \Vert \mathbf{h}_k - \mathbf{o}_k \Vert_2 \vert \mathbf{h}_k \in \mathbf{J}_k, \mathbf{o}_k \in \mathbf{P}_k \}$. Let ${d}_{k,min}^{\hat{\mathbf{h}}}$ represents the quantity of the keypoint $\mathbf{h}$ from the denoised trajectory, and $d_{k,min}^{\mathbf{h}}$ represents the quantity of the keypoint $\mathbf{h}$ from the ground-truth trajectory. Then the overall metric is calculated as $\text{Proximity error} = \text{mean}\{ \{ \Vert {d}_{k,min}^{\hat{\mathbf{h}}} - d_{k,min}^{\mathbf{h}} \Vert_2 \vert \mathbf{h}_k \in \mathbf{J}_k \} \vert 1\le k \le K \}$. 

\noindent\textbf{Hand-Object Motion Consistency (HO Motion Consistency)}. This metric assesses the consistency between the hand and object motions. For each frame where the object is not static, we identify the nearest hand-object point pair $(\textbf{h}_k, \textbf{o}_k) = \text{argmin} \{ d_k^{\mathbf{ho}} \vert (\mathbf{h}_k \in \mathbf{J}_k, \mathbf{o}_k \in \mathbf{P}_k) \} $. We use the expression $\Vert e^{-100\Vert \mathbf{h}_k - \mathbf{o}_k \Vert_2}\Delta \mathbf{h}_k - \Delta \mathbf{o}_k\Vert_2^2$ to quantify the level of inconsistency between the hand and object motions. Here, $\Delta \mathbf{h}_k$ and $\Delta \mathbf{o}_k$ represent the displacements of the hand point and the object point between adjacent frames, respectively. We obtain the overall metric by averaging the metric over all frames.

\subsection{Baselines}

We give a more detailed explanation of the compared baselines as follows to complement the brief introduction in the main text. 

\noindent\textbf{TOCH.} We compare our model with the prior art on the HOI denoising problem, TOCH~\citep{zhou2022toch}. TOCH utilizes an autoencoder structure and learns to map noisy trajectories to their corresponding clean trajectories. By projecting input noisy trajectories onto the clean data manifold, it can accomplish the denoising task. We utilize the official code provided by the authors for training and evaluation. 

\noindent\textbf{TOCH (w/ MixStyle).} Further, to improve TOCH's generalization ability towards new interactions, we augment it with a general domain generalization method  MixStyle~\citep{zhou2021mixstyle}, resulting in a variant named ``TOCH (w/ MixStyle)''. 

\noindent\textbf{TOCH (w/ Aug.).} Another variant,  ``TOCH (w/ Aug.)'', where TOCH is trained on the training sets of the GRAB and GRAB (Beta) datasets, is further introduced to enhance its robustness towards unseen noise patterns.

\subsection{Models} \label{sec_appen_model_details}


\noindent \textbf{Denoising models used in our method.}
We realize the denoising model’s function and the training as those of the score functions in diffusion-based generative models. 
We adapt the implementation of Human Motion Diffusion~\citep{tevet2022human} to implement our three denoising models for each part of the representation\footnote{\href{https://guytevet.github.io/mdm-page/}{https://guytevet.github.io/mdm-page}}. Instead of training the denoising function to predict the start data point $\mathbf{x}_0$ from the noisy data $\mathbf{x}_t$ as implemented in~\citep{tevet2022human}, we predict the noise ($\mathbf{x}_t - \mathbf{x}_0$). We also follow its default training protocol.

We mainly adopt MLPs and Transformers as the basic backbones of the denoising model. The detailed structure depends on the type of the corresponding statistics and the dimensions. \textbf{The code in the Supplementary Materials provides all those details. } So we spare the effort to list them in detail here. 

When leveraging the denoising model to clean the input via the ``denoising via diffusion'' strategy, the diffusion steps is set to 400 for MotionDiff, 200 for SpatialDiff, and 100 for TemporalDiff empirically.


\noindent \textbf{Ours (w/o Diffusion).} In this ablated version, we design a denoising autoencoder for cleaning the spatial and temporal representations. For each representation $\bar{\mathcal{J}}, \mathcal{S}, \mathcal{T}$, we leverage an autoencoder for denoising. After that, we get the final hand meshes by fitting the MANO parameters $\{ \mathbf{r}_k, \mathbf{t}_k,   \mathbf{\beta}_k, \mathbf{\theta}_k\}$ to reconstruct the denoised representations. Assuming the reconstructed hand trajectory as $\mathcal{J}^{recon}$, the reconstructed hand-object spatial relative positions as $\mathcal{S}^{recon}$, and the temporal representations as $\mathcal{T}^{recon}$, the reconstruction loss is formulated as follows: 
\begin{equation}
    \mathcal{L}_{recon}^{rep} = \lambda_1 \Vert \mathcal{J} - \mathcal{J}^{recon} \Vert_2 + \lambda_2 \Vert \mathcal{S} - \mathcal{S}^{recon} \Vert + \lambda_3 \Vert \mathcal{T} - \mathcal{T}^{recon} \Vert, 
\end{equation}
where $\lambda_1, \lambda_2, \lambda_3$ are coefficients for the reconstruction losses. We set $\lambda_1, \lambda_2, \lambda_3 = 1.$ in our experiments. 
The distance function between the spatial representations is calculated on the relative positions between each point pair. The distance between the temporal representations is calculated on hand-object distances, \emph{i.e.,} $\{ d_k^{\mathbf{ho}} \}$, and two relative velocity-related statistics ( $\{ e_{k,\parallel}^{\mathbf{ho}}, e_{k,\perp}^{\mathbf{ho}} \}$ ). Together with the regularization loss 
\begin{equation}
    \mathcal{L}_{reg} = \frac{1}{K} \sum_{k=1}^{K}( \Vert \mathbf{\beta}_k \Vert_2 + \Vert \mathbf{\theta}_k \Vert_2 ) + \frac{1}{K-1} \sum_{k=1}^{K-1}\Vert \mathbf{\theta}_{k+1} - \mathbf{\theta}_k \Vert_2,
    \label{eq_reg_fitting}
\end{equation}
the total optimization target is formulated as follows, 
\begin{equation}
    \text{minimize}_{\{ \mathbf{r}_k, \mathbf{t}_k, \mathbf{\beta}_k, \mathbf{\theta}_k \}_{k=1}^K} (\mathcal{L}_{recon}^{rep} + \mathcal{L}_{reg}),
\end{equation}
and we employ an Adam optimizer to solve the problem. 


\noindent \textbf{TOCH (w/ MixStyle).}
We use the official code provided to implement the MixStyle layer. We add a MixStyle layer between every two encoder layers of the TOCH model~\citep{zhou2022toch}. Configurations of MixStyle are kept the same as the default setting. 

\noindent \textbf{TOCH (w/ Aug.).}
The model is trained on paired noisy-clean data pair from the GRAB training set. We perturb each training sequence with two types of noise, that is the noise from a Gaussian distribution, and the noise from a Beta $B(8, 2)$ distribution. The noise scale for the Gaussian for the translation, rotation, and hand poses are 0.01, 0.1, and 0.5  respectively. The scale of the Beta noise added on the translation, rotation, and hand poses are 0.01, 0.05, and 0.3.

\noindent \textbf{Grasp synthesis network.} We adapt the WholeGrasp-VAE network proposed in~\citep{wu2022saga} to a HandGrasp-VAE network\footnote{\href{https://github.com/JiahaoPlus/SAGA}{https://github.com/JiahaoPlus/SAGA}}. Instead of using whole-body markers, we use hand anchor points~\citep{yang2021cpf}, composed of 32 points from the hand palm in total. To identify contact maps for both hand anchors and object points, we set a distance threshold, \emph{i.e.,} 2 mm, and mark the status of points with the minimum distance to the hand/object as contact. During training, we do not add the ground contact loss since the whole body is not considered in our hand-grasping setting. To train the network, we further split the GRAB test set into a subset containing binoculars, wineglass, fryingpan, and mug for training. Then we use the network to synthesize grasps for unseen objects. We select 100 grasps for each object to construct the training dataset. 

\noindent \textbf{Manipulation synthesis network.}  
We utilize the denoised manipulation trajectories for the Laptop category to train the manipulation synthesis network. The training data consists of 100 manipulation sequences.

\subsection{Training and Evaluation} \label{sec_appen_training_details}

\noindent \textbf{The denoising model for $\bar{\mathcal{J}}$.}
The denoising model for the canonicalized hand trajectory $\bar{\mathcal{J}}$ is trained on canonicalized hand trajectories $\{ \bar{\mathcal{J}} \}$ of all interaction sequences in the training set. 
We apply per-instance normalization operation to those points at each frame for centralization and scaling purposes. Specifically, we utilize the mean and the standard deviation statistics calculated for all points across all frames. In more detail, we first concatenate all keypoints over all frames to form the concatenated keypoints $\mathbf{J}^{concat}$: 
\begin{equation}
    \bar{\mathbf{J}}_\text{concat} = \text{Concat}\{ \bar{\mathbf{J}}_k, \text{dim}=0 \}_{k=1}^K, 
\end{equation}
where $\bar{\mathbf{J}}_k\in \mathbb{R}^{N_h\times 3}$ for each frame $k$. 
Then, the average and the standard deviation is calculated on $\bar{\mathbf{J}}^\text{concat}$ via
\begin{align}
    \mu^{\bar{\mathbf{J}}} &= \text{Average}(\bar{\mathbf{J}}_\text{concat}, \text{dim}=0) \\ 
    \sigma^{\bar{\mathbf{J}}} &= \text{Std}(\bar{\mathbf{J}}_\text{concat}, \text{dim}=0). 
\end{align}
The $( \mu^{\bar{\mathbf{J}}}, \sigma^{\bar{\mathbf{J}}})$ are utilized to normalize $\bar{\mathbf{J}}_k$ at each frame $k$, \emph{i.e.,}
\begin{equation}
    \bar{\mathbf{J}}_k \leftarrow \frac{\bar{\mathbf{J}}_k -  \mu^{\bar{\mathbf{J}}}}{\sigma^{\bar{\mathbf{J}}}}. 
\end{equation}

\noindent \textbf{The denoising model for $\mathcal{S}$.}
Similarly, the denoising model for hand-object spatial relations $\mathcal{S}$ is trained using representations $\{ \mathcal{S} \}$ from all interaction sequences in the training set. We apply per-instance normalization to the canonicalized hand-object relative positions $\{ (\mathbf{h}_k - \mathbf{o}_k) \mathbf{R}_k^T \}$. 
The normalization is conducted in a per-instance per-object point way. For each object point $\mathbf{o}$ in the generalized contact points $\mathbf{P}$, we calculate the average and standard deviation of  $\{ \{ \mathbf{h}_k - \mathbf{o}_k \} \}$ over all frames $k$. 
We first concatenate the relative positions over all frames and all hand keypoints for the concatenated spatial relations, denoted as 
\begin{equation}
    \mathbf{s}_\text{concat}^{\mathbf{o}} = \text{Concat}\{ \{ \mathbf{h}_k - \mathbf{o}_k  \}, \text{dim}=0 \}_{k=1}^K. 
\end{equation}
Then, the average and the standard deviation is calculated on $\mathbf{s}_\text{concat}^{\mathbf{o}}$ via
\begin{align}
    \mu^{{\mathbf{o}}} &= \text{Average}(\mathbf{s}_\text{concat}^{\mathbf{o}}, \text{dim}=0) \\ 
    \sigma^{{\mathbf{o}}} &= \text{Std}(\mathbf{s}_\text{concat}^{\mathbf{o}}, \text{dim}=0). 
\end{align}
Such statistics $( \mu^{{\mathbf{o}}}, \sigma^{{\mathbf{o}}} )$ are utilized to normalize the relative positions  $\{ \{ \mathbf{h}_k - \mathbf{o}_k \} \}$, \emph{i.e.,} 
\begin{equation}
    (\mathbf{h}_k - \mathbf{o}_k) \leftarrow \frac{(\mathbf{h}_k - \mathbf{o}_k) - \mu^{{\mathbf{o}}}}{\sigma^{{\mathbf{o}}}}. 
\end{equation}

\noindent \textbf{The denoising model for $\mathcal{T}$.}
When training the denoising model for the hand-object temporal relations $\mathcal{T}$, we first train an autoencoder, composed of an $\text{encode}(\cdot)$ function and a $\text{decode}(\cdot)$ function for $\mathcal{T}$. It takes the $\mathcal{T}$ as input and decode the hand-object distances $\{ d_{k}^{\mathbf{ho}} \}$  and the relative velocity-related quantities $\{ e_{k,\perp}^{\mathbf{ho}}, e_{k,\parallel}^{\mathbf{ho}} \}$. Then, the denoising model is trained on the encoded latent $\{ \text{encode}(\mathcal{T}) \}$. This approach avoids the need for designing normalization strategies for the temporal representations.
We adopt a PointNet structure block with a positional encoder followed by a transformer encoder module for encoding the temporal relation representations. Given the input temporal representation $\hat{T}\in \mathbb{R}^{K\times N_o \times 69}$, the PointNet encoder block passes it through four PointNet blocks each with three encoding layers with latent dimension $(32, 32, 32), (64, 64, 64), (128, 128, 128), (256, 256, 256)$ respectively. The transformer encoder module is with parameters ``num\_heads'' as 4, feedforward latent dimension as 1024, dropout rate 0, and the latent dimension 256. 
The decoder contains fully connected layers for decoding each kind of statistics $d_k^{\mathbf{ho}}, e_{\perp,k}^{\mathbf{ho}}, e_{\parallel, k}^{\mathbf{ho}}$ individually. 




\noindent\textbf{Train-time rotation augmentation.} For the canonicalized hand trajectory representation $\bar{\mathcal{J}}$, the train-time random rotation augmentation applies a single random rotation matrix to the whole canonicalized hand trajectories. The same random rotation matrix, denoted as $\mathbf{R}_{\text{rnd}}$, is added to the hand-object spatial representation $\mathcal{S}$ as well. It is used to transform the canonicalized object position, normal, and the hand-object offset vector: 
\begin{equation}
    \mathbf{s}_k^\mathbf{o} \mathbf{R}_{\text{rnd}} = ((\mathbf{o}_k - \mathbf{t}_k)\mathbf{R}_k^T \mathbf{R}_{\text{rnd}}, \mathbf{n}_k\mathbf{R}_k^T \mathbf{R}_{\text{rnd}}, \{ (\mathbf{h}_k - \mathbf{o}_k)\mathbf{R}_k^T \mathbf{R}_{\text{rnd}} | \mathbf{h}_k \in \mathbf{J}_k\} ). 
\end{equation}
Similarly, the same random rotation matrix $\mathbf{R}_{\text{rnd}}$ is used to transform the object velocity vector from $\mathbf{v}_k^{\mathbf{o}}$ in the temporal representation $\mathcal{T}$ to $\mathbf{v}_k^{\mathbf{o}} \mathbf{R}_{\text{rnd}}$. 

\subsection{Complexity and Running Time Discussion} \label{sec_appen_running_time_complexity}
Denote the number of hand keypoints as $\vert \mathcal{J} \vert$, the number of generalized contact points $\vert \mathcal{P}\vert $, the complexity, the average inference time, and the number of forward diffusion steps for each denoising stage during inference are summarized in Table~\ref{tb_exp_running_time_complexity}.
\begin{table*}[t]
    \centering
    \caption{ 
    \textbf{Complexity and running time during the inference time.} 
    } 
\begin{tabular}{@{\;}lccc@{\;}}
        \toprule
        ~ & MotionDiff & SpatialDiff & TemporalDiff 
        \\
        \midrule
        \multirow{1}{*}{Average inference time (s)} & 0.52 & 16.61  & 7.04
        \\ 
        \midrule
        \multirow{1}{*}{Complexity} & $\mathcal{O}(\vert \mathcal{J} \vert)$ & $\mathcal{O}(\vert \mathcal{P} \vert \vert \mathcal{J} \vert)$ & $\mathcal{O}(\vert \mathcal{P} \vert \vert \mathcal{J} \vert)$
        \\
        \midrule
        \#Forword diffusion steps & 400 & 200 & 100
        \\ 
        \bottomrule
    \end{tabular}
    \label{tb_exp_running_time_complexity}
\end{table*}

\end{document}